\def\BibTeX{{\rm B\kern-.05em{\sc i\kern-.025em b}\kern-.08em
    T\kern-.1667em\lower.7ex\hbox{E}\kern-.125emX}}
\pgfplotsset{compat=1.18}
\newtheorem{theorem}{Theorem}[section]
\newtheorem{lemma}[theorem]{Lemma}
\newtheorem{defn}[theorem]{Definition}
\newtheorem{prop}[theorem]{Proposition}
\newenvironment{proof}{\noindent {\bf Proof.} }{\endprf\par}
\def \endprf{\hfill {\vrule height6pt width6pt depth0pt}\medskip}
   \let\increment=\pgfmathresult%
   \def\x{0}%
     \pgfmathadd{\x}{\increment}%
     \let\x=\pgfmathresult%
   \let\increment=\pgfmathresult%
   \def\x{0}%
     \pgfmathadd{\x}{\increment}%
     \let\x=\pgfmathresult%
\newcommand{\mb}{\boldsymbol}
\newcommand{\bb}{\mathbb}
\newcommand{\ktrue}{k^{*}}
\newcommand{\bbeta}{\mb\beta}
\newcommand{\bbetastar}{\mb\beta^\star}
\newcommand{\deltamin}{\delta_{\min}}
\newcommand{\Deltamax}{\Delta_{\max}}
\newcommand{\Deltamin}{\Delta_{\min}}
\newcommand{\muu}{\mb u}
\newcommand{\mv}{\mb v}
\newcommand{\mx} {\mb x}
\newcommand{\E}{\mathbb{E}}%
\newcommand{\real}{\mathbb{R}}%
\newcommand{\kmeans}{\text{$k$-means}\xspace}
\newcommand{\xmeans}{\text{$X$-means}\xspace}
\newcommand{\kmeanspp}{\text{$k$-means$++$}\xspace}
\newcommand{\kmeansmp}{\text{I-$k$-means$-+$}\xspace}
\newcommand{\kmeansmpx}{\text{I-$k$-means$-+^\star$}\xspace}
\newcommand{\ofm}{one-fit-many\xspace}
\newcommand{\mfo}{many-fit-one\xspace}
\newcommand{\Ofm}{One-fit-many\xspace}
\newcommand{\Mfo}{Many-fit-one\xspace}
\newcommand{\Cgap}{C_{\textup{gap}}}
\newcommand{\vor}{\mathcal{V}}
\newcommand{\supp}{\textup{supp}}
\newcommand{\ddup}{\textup{d}}
\newcommand{\mean}{\textup{mean}}
\newcommand{\ball}{\mathbb{B}}
\newcommand{\vol}{\textup{Vol}}%
\renewcommand{\paragraph}[1]{\textbf{#1.}}
\begin{document}

\title{A Geometric Approach to $k$-means}

\author[$\Diamond$]{Jiazhen Hong}
\author[$\sharp$]{Wei Qian}
\author[$\star$]{Yudong Chen}
\author[$\Diamond$]{Yuqian Zhang}

\affil[$\Diamond$]{Department of Electrical \& Computer Engineering, Rutgers University}
\affil[$\sharp$]{School of Operations Research and Information Engineering, Cornell University}
\affil[$\star$]{Department of Computer Sciences, University of Wisconsin-Madison}
\date{}

\maketitle

\begin{abstract}
\kmeans clustering is a fundamental problem in many scientific and engineering domains. The optimization problem associated with \kmeans clustering is nonconvex, for which standard algorithms are only guaranteed to find a local optimum. Leveraging the hidden structure of local solutions, we propose a general algorithmic framework for escaping undesirable local solutions and recovering the global solution or the ground truth clustering. This framework consists of iteratively alternating between two steps: (i) detect mis-specified clusters in a local solution, and (ii) improve the local solution by non-local operations. We discuss specific implementation of these steps, and elucidate how the proposed framework unifies many existing variants of \kmeans algorithms through a geometric perspective. We also present two natural variants of the proposed framework, where the initial number of clusters may be over- or under-specified. We provide theoretical justifications and extensive experiments to demonstrate the  efficacy of the proposed approach. 
\end{abstract}

\keywords{\kmeans clustering, nonconvex optimization, local optimum, Fission and Fusion \kmeans}

\vspace{-.06in}
\section{Introduction}
\label{sec:intro}
\vspace{-.05in}

Clustering is a fundamental problem across machine learning, computer vision, statistics and beyond. The general goal of clustering is to group a large number of (potentially high dimensional) data points into a few clusters, each containing similar data points. Many clustering criteria have been proposed. One of the most widely used criteria is the $k$-means formulation, where one aims to find $k$ cluster centers such that the sum of squared distances between each data point and its nearest cluster center is minimized. The most popular algorithm for \kmeans is Lloyd’s algorithm \cite{lloyd1982least}, which is often referred to as the \kmeans algorithm. This algorithm iteratively updates the location of cluster centers and the cluster assignment for each data point. Minimizing the \kmeans criterion is a nonconvex optimization problem. Consequently, Lloyd's and other local search algorithms are sensitive to choice of the initial clustering and in general only guaranteed to find a local solution. 

With decades of extensive research and application, various improved algorithms have been proposed for \kmeans to address the sub-optimality of local solutions. One line of algorithms are based on careful initialization of the clusters. For example, the celebrated $k$-means++ initialization~\cite{arthur2007k} employs a probabilistic initialization scheme such that the initial cluster centers are spread out. See~\cite{celebi2013comparative} for a comprehensive review of different initialization methods. 
Another line of work focuses on fine-tuning a local solution to produce a better solution, using various heuristics based on empirical observations of the properties of local solutions \cite{pelleg2000x,muhr2009automatic,morii2006clustering,lei2016robust,franti2000randomised,franti2006iterative,ismkhan2018ik}. However, in the absence of a precise characterization of these properties, little can be guaranteed for the performance of these heuristics.

On the theory side, recent years have witnessed exciting progress on demystifying the structure of local solutions in certain nonconvex problems \cite{sun2016complete,sun2018geometric,ge2016matrix,zhang2018structured,qu2019analysis,ge2020optimization,zhang2020symmetry,jin2021unique}, including \kmeans and related clustering problems. 
It is known that when the data are sampled from two identical spherical Gaussians, the Expectation-Maximization (EM) algorithm with random initialization recovers the ground truth solution~\cite{xu2016global, daskalakis2017ten,qian2019global}. Similar results hold for Lloyd's algorithm when the two Gaussians  satisfy certain separation conditions~\cite{chaudhuri2009learning}. However, as soon as the number of Gaussian components exceeds two, additional local solutions emerge, whose quality can be arbitrarily worse than the global optimum~\cite{jin2016local}. Recent work has established an interesting positive result: under some separation conditions, all local solutions share the same geometric structure that provides partial information for the ground-truth, under both the \kmeans formulation~\cite{qian2021structures} and the maximum likelihood formulation~\cite{chen2020likelihood}.


In this paper, we exploit the algorithmic implications of the above structural results on the geometry of local \kmeans solutions. We propose a general algorithmic framework for recovering the global minimizer (or ground truth clusters) from a local minimizer. Our framework consists of iterating two steps: 
(i) detect mis-specified clusters in a local solution obtained by Lloyd's algorithm, and (ii) improve this local solution by non-local operations. This geometry-inspired framework is non-probabilistic and does not rely on a good initialization. Under certain mixture models with $k$ clusters, we prove that this method recovers the ground truth in $O(k)$ iterations, whereas standard Lloyd's algorithm would require $ e^{\Omega(k)} $ random initializations to achieve the same. Our framework is flexible and provides justifications for many existing heuristics. It can be naturally extended to settings where the initial number of clusters is mis-specified. Extensive experiments demonstrate that our approaches perform robustly on challenging benchmark datasets.

\vspace{-.06in}
\section{Structure of Local Solutions}
\label{sec:geo}
\vspace{-.05in}

We consider the \kmeans problem under a mixture model with $\ktrue$ components: each data point $\mx$ is sampled \textit{i.i.d.}\ from a true density $f^*:=\frac{1}{\ktrue} \sum_{s=1}^{\ktrue} f^*_s$, where  $f^*_s$ is the density of the $s$-th component with mean $\bbeta_s^* \in \real^d$. Under this generative model, the population \kmeans objective function is 
\begin{align}
G(\bbeta):=\mathbb{E}_{\mx \sim f^*} \min_{j\in [k]} \|\mx - \bbeta_j\|^2,  \label{eq:kmeans}
\end{align}
where $ \bbeta = (\bbeta_1, \ldots, \bbeta_k) $ denotes $ k $ fitted cluster centers, with $ k $ potentially different from $ \ktrue $, and $[k]:=\{1,2,\ldots, k\}$.
The objective function $G$ is non-convex, and standard algorithms like Lloyd's only guarantee finding a local minimizer. 

Despite non-convexity, a recent work~\cite{qian2021structures} shows that all local minima have the same geometric structure. In particular, under some separation condition, for every local minimizer $\bbeta$, there exists an association map $\mathcal{A}$ between a partition of the fitted centers $\{\bbeta_s\}_{s\in [k]}$ and a partition of the true centers $\{\bbetastar_s\}_{s\in [\ktrue]}$, such that each center must participate in exactly one of three types of association:
\begin{enumerate} 
\item \textbf{\em{\Ofm} association}: A fitted center $\bbeta_i$ is close to the average of several true cluster centers $ \left\{\bbetastar_j\right\}_{j\in S}$ for some $S\subseteq [\ktrue]$. That is, $\mathcal{A}(\left\{\bbeta_i\right\}) = \{\bbetastar_j \}_{j\in S}$. 
\item \textbf{\em{\Mfo} association}: Several fitted centers $\{ \bbeta_i\}_{i\in T}$ are simultaneously close to a true center $\bbetastar_j$ and thus split the corresponding true cluster, for some $T\subseteq [k]$ and $j\in [\ktrue]$. That is, $\mathcal{A}(\{ \bbeta_i\}_{i\in T}) = \{\bbetastar_j\}$. 
\item \textbf{\textit{Almost empty} association}: A fitted center $\bbeta_i$ is not associated with any true cluster, and the corresponding fitted cluster has almost no data points. That is, $\mathcal{A}(\{ \bbeta_i\}) = \emptyset$.
\end{enumerate}

\begin{figure} 
    \centering\includegraphics[width=0.6\textwidth]{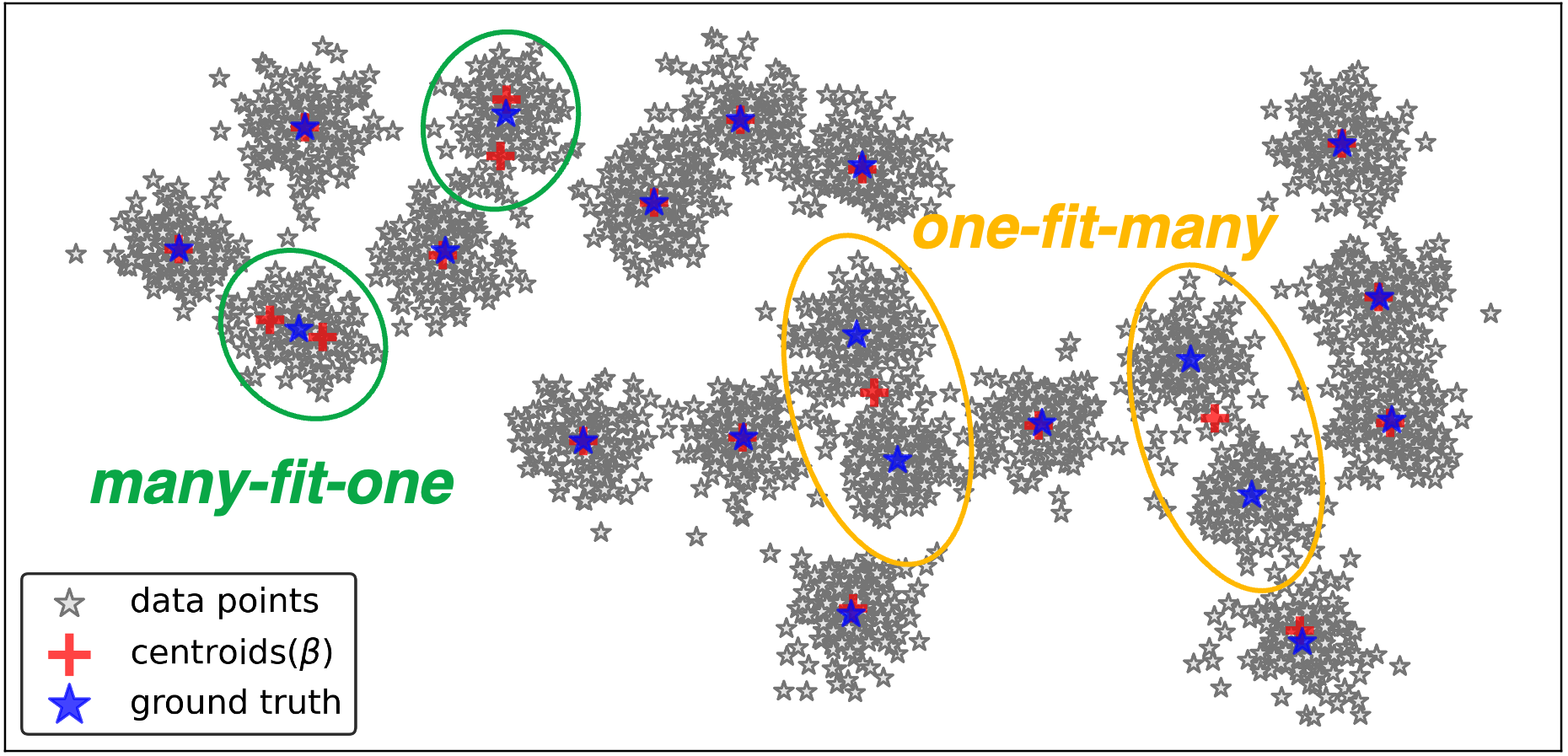} 
    \caption{The \ofm and \mfo association relationships in a local minimizer of the \kmeans problem.} \label{fig:1}
    \end{figure}

Figure~\ref{fig:1} illustrates these associations between the fitted centers in a local minimizer and the ground truth clusters. 

With the above characterization, we can deduce some geometric properties for each type of association within a local minimizer, particularly when the true clusters are separated and have identical shapes. For simple exposition, we start with the \emph{Stochastic Ball Model} (see Section 2.1 of~\cite{qian2021structures}), in which the mixture component $f_s^*$ satisfies
\begin{align}
f_{s}^*(\mx)=\frac{1}{\vol(\ball_{s}(r))} 1_{\ball_{s}(r)}(\mx),\quad s\in [\ktrue] \label{eq: SBM}, 
\end{align}
where $\ball_{s}$ denotes a ball with radius $r$ centered at $\bbeta_s$. In this case, we make the following observations.

\paragraph{Properties of \ofm association}
A fitted center with a \ofm association is approximately at the average center of multiple balls, thus the mean in-cluster $\ell_2$ distance to this fitted center is lower bounded by the minimum separation of the balls. On the other hand, for a fitted center with a \mfo association, the associated fitted cluster is contained in a ball, thus the mean in-cluster $\ell_2$ distance to that fitted center is upper bounded by the radius of the ball. When the balls are well-separated from each other, we infer that a fitted cluster with \ofm association has higher mean in-cluster $\ell_2$ distance.

\paragraph{Properties of \mfo association}
Since a fitted center with a \mfo association is contained in a ball, the pairwise distance between two such fitted centers that are associated with the same ball, is lower bounded by the radius of the ball. On the other hand, the distance between these fitted centers and any other fitted center not associated with the same ball, is lower bounded by the separation of the balls. We infer that the fitted centers associated to the same ball is characterized by a  small pairwise distance. 

\paragraph{Properties of \textit{almost empty} association} 
A fitted cluster with an \textit{almost empty} association has a negligible measure by Theorems 1 and 2 in~\cite{qian2021structures}. This means this cluster usually contains very few data points. For example, in an extreme case, some $\bbeta_j$ can be far away from all the data points and has an empty association with the data. We usually consider a \emph{non-degenerate} local minimum solution, in which almost empty associations do not occur.  

The above properties of the fitted clusters with \ofm and \mfo associations are derived under the ball models. In general, they may depend on the structure of the underlying data. As the properties for \ofm and \mfo associations are distinct, they can be leveraged to identify the exact type of association. Consequently, various methods can be designed to eliminate these associations and refine the fitted clusters. Since these associations are the only hurdles to recovering a global solution, eliminating them  helps escaping a local minimum solution. We pursue this idea in the next section.

\vspace{-.06in}
\section{From Structure to Algorithms}
\label{sec:alg}
\vspace{-.05in}

Motivated by the above geometric structure\footnote{While the geometric structure is established for the population \kmeans formulation in \cite{qian2021structures}, it can be shown that they are also present in the finite sample case.}---namely, the presence of \ofm and \mfo associations---in the local minimum solutions of \kmeans, we propose a general algorithmic framework that aims to escape local minimum solutions by detecting and correcting these undesirable associations.\footnote{For simplicity, we assume the local minimum is non-degenerate. In practice, degenerate local minima can usually be eliminated easily by examining the number of data points contained in a fitted cluster. } 

The proposed framework is based on (a) detecting \ofm and \mfo associations in the current solution, and (b) splitting a cluster with an \ofm association while merging clusters with a \mfo association. We call this general framework \emph{Fission-Fusion \kmeans}. After describing the framework (Section~\ref{sec:framework}), we discuss several concrete methods for detecting \ofm and \mfo associations (Section~\ref{sec:al.proposed}). Viewing \ofm and \mfo association as local model mis-specification, we further consider natural extensions of the framework, which allows one to start with any number $k$ of fitted clusters with $k\neq \ktrue$ (Section~\ref{sec:misspecification}). In addition, we discuss other related algorithmic approaches in literature and connect them to our framework (Section \ref{sec:al.discussion}).

\vspace{-.03in}
\subsection{Fission-Fusion \kmeans}\label{sec:framework}
\vspace{-.03in}

The proposed framework, Fission-Fusion \kmeans (FFkm), is presented in Algorithm~\ref{alg:1}. FFkm aims to iteratively improve the \kmeans solution. Each iteration of FFkm consists of four operations:
\begin{description}
    \item [Step 1] Detects a fitted cluster of \ofm association.
    \item [Step 2a] Replaces the fitted center with two centers from the 2-means solution (the Fission step);
    \item [Step 2b] Detects a pair of fitted clusters with a \mfo association and then merges these two fitted centers into one center (the Fusion step);
    \item [Step 3] A Lloyd's \kmeans step is used to update the modified solution.
\end{description}
Figure~\ref{fig:2} illustrates the above procedure. This procedure is iterated until the \kmeans objective no longer decreases. 
A visualization of each step of Algorithm~\ref{alg:1} (FFkm) is provided in Appendix~\ref{app:FFkm_detail}.

\begin{figure}
    \centering\includegraphics[width=0.3\textwidth]{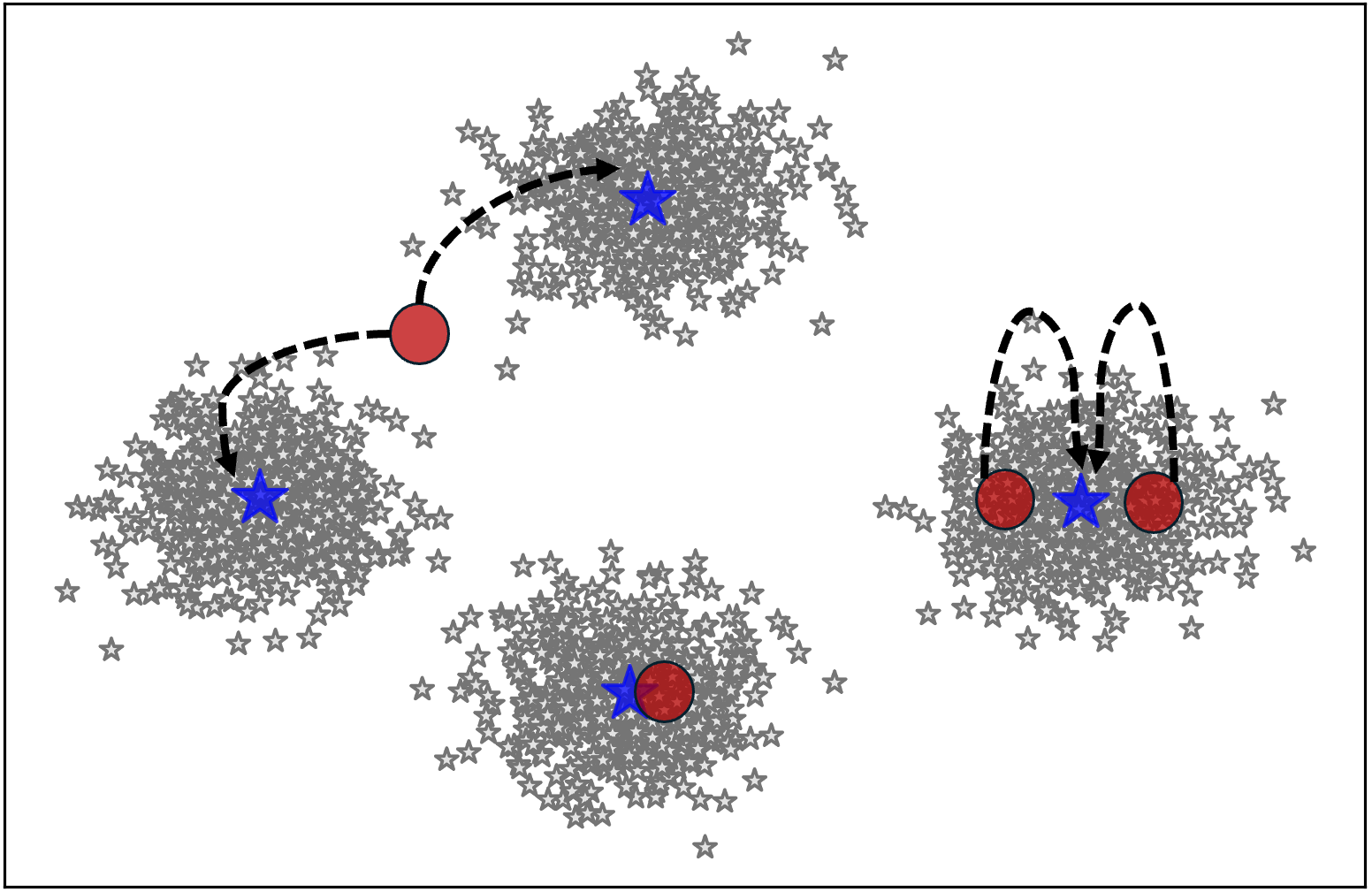}
    \caption{Illustration of the Fission-Fusion \kmeans algorithm.}\label{fig:2}
\end{figure}

\begin{algorithm}
    \caption{Fission-Fusion \kmeans (FFkm)} \label{alg:1}
    \hspace*{\algorithmicindent} \textbf{Input:} 
    data $\mathcal{D}$, number of fitted clusters $k$, initial solution $\mb\beta^{(\frac{1}{2})}\in{\real^{d\times k}}$, maximum number of iterations $L$. \\
    \hspace*{\algorithmicindent} \textbf{Output:} 
    $\mb\beta^{(L)}$
    \begin{algorithmic}[1]
        \State 
        Using $\mb\beta^{(\frac{1}{2})}$ as an initial solution, run Lloyd's algorithm to obtain a local minimum $\mb\beta^{(1)}$ with \kmeans objective value $G^{(1)}$. Set $G^{(0)}=\infty$ and $\ell=1$.
        \While{$\ell \le L$} 
            \State \textbf{Step 1:} Detect a cluster with tentative \ofm association, whose center is $\beta_{(1)}^{(\ell)}$.
            \State \textbf{Step 2:} Compute $\mb\beta^{(\ell + \frac{1}{2})}$ from $\mb\beta^{(\ell)}$ using the following procedure:
            \State
             - \textbf{Step 2a:} Split the center $\beta_{(1)}^{(\ell)}$ into two centers;
            \State
             - \textbf{Step 2b:} Detect two clusters with tentative \mfo association with the same true cluster, whose centers are $\beta_{(2)}^{(\ell)}$ and  $\beta_{(3)}^{(\ell)}$. Merge $\beta_{(2)}^{(\ell)}$ and  $\beta_{(3)}^{(\ell)}$ into one center.
            \State
            \textbf{Step 3:} Using $\mb\beta^{(\ell + \frac{1}{2})}$ as an initial solution, run Lloyd's algorithm to obtain a local minimum $\mb\beta^{(\ell+1)}$ with \kmeans objective value $G^{(\ell+1)}$.
            \State If $G^{(\ell+1)} \ge G^{(\ell)}$, set $\mb\beta^{(L)}:=\mb\beta^{(\ell)}$, terminate.
            \State $\ell \leftarrow \ell+1$
        \EndWhile
    \end{algorithmic}
\end{algorithm}

Each iteration of FFkm maintains an invariance of the total number of fitted number of clusters: in Step 2a, the total number of fitted clusters is increased to $k+1$; in Step 2b, the total number of fitted clusters is decreased to $k$. Moreover, Step 3 guarantees that the output solution has a \kmeans objective value no worse than the input solution. 
FFKm is a general framework and works as long as the \ofm association and \mfo association can be correctly identified. One has the flexibility to adopt various methods for detecting \ofm association in Step 1 and \mfo association in Step 2b, and the best choices of these methods may be dependent on the data. In Section \ref{sec:al.proposed} we discuss several such methods, which harness the geometric properties of a local solution. 

\vspace{-.03in}
\subsubsection{Theoretical Guarantees}\label{sec:theory}
\vspace{-.03in}

We provide theoretical analysis for the proposed framework under the stochastic ball model~\eqref{eq: SBM}. These results illustrate the working mechanism of Fission Fusion \kmeans. 

For any current local minimum solution $\mb\beta^{(\ell)}$, there are two possibilities: either $\mb\beta^{(\ell)}$ is already a global optimal solution, or it is a local minimum with suboptimal objective value.  
In the first case, the algorithm simply returns a global optimal solution. In the second case, the current local solution $\mb\beta^{(\ell)}$ must contain at least one \ofm association, as shown in Theorem 1 of \cite{qian2021structures}. The Fission step (Step 2a) ensures that in the new solution $\mb\beta^{(\ell+1)}$, two (split) centers fit multiple (at least two) true clusters, which are contained in the cluster with \ofm association detected in Step 1. In particular, restricting to these true clusters, the \kmeans objective value at $\mb\beta^{(\ell+1)}$ strictly decreases. On the other hand, the Fusion step (Step 2b) reduces the number of centers to fit that single true cluster with which at least two fitted clusters are associated in $\mb\beta^{(\ell)}$. Restricting to this true cluster, the \kmeans objective value at $\mb\beta^{(\ell+1)}$ may increase compared with that evaluated at $\mb\beta^{(\ell)}$. One crucial observation here is that the decrement of the \kmeans objective value from the Fission step must exceed the increase of that from the Fusion step, by at least a constant. Therefore, Fission Fusion \kmeans must terminate at global optimal solution in a finite number of steps. 

The above argument is made precise in Theorem \ref{thm:main}.
\begin{restatable}[Main Theorem]{thm}{main}
\label{thm:main}
Let $\left\{\mb\beta^\star_{i}\right\}_{i\in[\ktrue]}$ be $\ktrue$ unknown
centers in $\real^{d}$, with maximum and minimum separations 
\begin{align*}
&\Delta_{\max} :=\max_{i,j\in[\ktrue]}\left\Vert \mb\beta^\star_{i}-\mb\beta^\star_{j}\right\Vert ,\\
&\Delta_{\min} :=\min_{i\neq j\in[\ktrue]}\left\Vert \mb\beta^\star_{i}-\mb\beta^\star_{j}\right\Vert .
\end{align*}
Suppose the data $\mx_{1},\ldots,\mx_{n}\in\real^{d}$ is generated
independently from the stochastic ball model~\eqref{eq: SBM}. Assume that $\frac{\Deltamin}{r}\ge 30$. With probability at least $1-2\ktrue\exp\left(-\frac{n}{2k^{*2}}\right)$, Algorithm \ref{alg:1} with $k=\ktrue$ terminates in $O\left(\ktrue \cdot\frac{\Deltamax^2}{\Deltamin^2}\right)$ iterations and outputs the global minimizer $\mb\beta^\star$.
\end{restatable}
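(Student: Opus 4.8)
\medskip
\noindent\textbf{Proof proposal.}
The plan is to use the \kmeans objective itself as a potential function and to show that it drops by at least a fixed positive amount in every iteration until the global optimum is reached. I would first fix the ``good event'' $\mathcal{E}$ on which every true ball $\ball_s(r)$ contains between $\frac{n}{2\ktrue}$ and $\frac{3n}{2\ktrue}$ sample points: applying Hoeffding's inequality to the indicators $\indic\{\mx_i\in\ball_s(r)\}$ with deviation $\frac{1}{2\ktrue}$ and taking a union bound over $s\in[\ktrue]$ gives $\Pr[\mathcal{E}]\ge 1-2\ktrue\exp(-\frac{n}{2\ktrue^2})$, exactly the stated probability. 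On $\mathcal{E}$ the empirical mean of each ball lies within $r$ of its center, and since $\Deltamin\ge 30r$ each ball sits entirely inside the Voronoi cell of whichever fitted center(s) ``serve'' it; one then checks that the finite-sample clustering still obeys the \ofm / \mfo / almost-empty trichotomy of Section~\ref{sec:geo} (a finite-sample counterpart of Theorem~1 of~\cite{qian2021structures} specialized to the ball model).

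\medskip
\noindent\textbf{Dichotomy and detection.}
Next I would record the combinatorial dichotomy: because $k=\ktrue$ and the solution is non-degenerate, a Lloyd local minimum $\mb\beta^{(\ell)}$ is either (i) the global minimizer, with each fitted center at the centroid of a distinct ball, or (ii) strictly suboptimal, in which case counting the association partition --- $\ktrue$ fitted centers, $\ktrue$ true centers, every \ofm group consuming $\ge 2$ true centers and every \mfo group consuming $\ge 2$ fitted centers --- forces the coexistence of \emph{both} a \ofm and a \mfo association. Using the geometric properties of Section~\ref{sec:geo}: on $\mathcal{E}$ with $\Deltamin\ge 30r$, a \ofm cluster has mean in-cluster distance $\ge \frac13\Deltamin$ whereas a \mfo or one-to-one cluster has mean in-cluster distance $\le 2r<\frac{1}{15}\Deltamin$, and two fitted centers serving the same ball are $\le 2r$ apart while any other pair of fitted centers is $\ge \Deltamin-2r$ apart. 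These gaps are clean, so in case (ii) Step~1 and Step~2b of Algorithm~\ref{alg:ffkm} genuinely return a \ofm cluster and a \mfo pair.

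\medskip
\noindent\textbf{Per-iteration decrease.}
The heart of the argument is the bound $G^{(\ell+1)}\le G^{(\ell)}-\frac{c\,n\Deltamin^2}{\ktrue}$ (absolute $c>0$) whenever $\mb\beta^{(\ell)}$ is suboptimal. For Fission, the detected \ofm cluster $X$ is (on $\mathcal{E}$) the union of the samples of a set $S$ of balls with $|S|\ge 2$; since those balls are pairwise $\Deltamin$-separated there is $j_0\in S$ with $\|\bbetastar_{j_0}-\mu_X\|\ge\frac12\Deltamin$, so splitting $X$ into $A=$(points of ball $j_0$) and $B=X\setminus A$ and using the exact identity $\mathrm{cost}_1(X)=\mathrm{cost}_1(A)+\mathrm{cost}_1(B)+\frac{|A||B|}{|X|}\|\mu_A-\mu_B\|^2$ yields $\mathrm{cost}_1(X)-\mathrm{cost}_2(X)\ge\frac{|A||B|}{|X|}\|\mu_A-\mu_B\|^2=\frac{|A||X|}{|B|}\|\mu_A-\mu_X\|^2\ge |A|\,\|\mu_A-\mu_X\|^2\ge\frac{n}{2\ktrue}\cdot\frac{\Deltamin^2}{9}$, where $\|\mu_A-\mu_X\|\ge\frac12\Deltamin-r\ge\frac13\Deltamin$ and $|A|\ge\frac{n}{2\ktrue}$ (here I take the 2-means subroutine to return cost at most that of this particular split, which holds for exact 2-means, or for \kmeanspp\ initialization with a probability that folds into $\mathcal{E}$). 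For Fusion, the two merged centers serve a single ball, so their combined cluster $Y$ lies in a ball of radius $r$, whence $\mathrm{cost}_1(Y)-\mathrm{cost}_2(Y)\le\mathrm{cost}_1(Y)\le 4|Y|r^2\le\frac{6nr^2}{\ktrue}$. The Lloyd pass in Step~3 only decreases the objective, so combining and using $\Deltamin^2\ge 900 r^2$ gives $G^{(\ell+1)}\le G^{(\ell)}-\frac{n}{\ktrue}\big(\frac{\Deltamin^2}{18}-6r^2\big)\le G^{(\ell)}-\frac{n\Deltamin^2}{21\ktrue}$.

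\medskip
\noindent\textbf{Termination and main obstacle.}
Any output of a Lloyd run has objective at most $n$ times the squared data diameter, so $G^{(1)}\le n(\Deltamax+2r)^2\le 4n\Deltamax^2$, while $\min G\ge 0$; the per-iteration drop then forces a solution admitting no further decrease after at most $\frac{4n\Deltamax^2}{n\Deltamin^2/(21\ktrue)}=O(\ktrue\Deltamax^2/\Deltamin^2)$ iterations, and by the dichotomy that solution must be the global minimizer $\bbetastar$, which is also precisely the termination condition $G^{(\ell+1)}\ge G^{(\ell)}$ of the loop (so we also need the cap $L$ to exceed this bound). I expect the genuinely delicate steps to be those of the second paragraph: transporting the population \ofm/\mfo structure of~\cite{qian2021structures} to the finite-sample ball model, and verifying that the Section~\ref{sec:geo} thresholds cleanly separate the association types so that Steps~1 and~2b never mis-detect --- this is where the constant $30$ in $\Deltamin/r\ge 30$ is spent --- together with guaranteeing the 2-means sub-call in Fission actually realizes the claimed gain; the potential-decrease accounting of the third paragraph is then essentially bookkeeping.
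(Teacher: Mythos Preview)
Your proposal is correct and follows essentially the same potential-function skeleton as the paper's proof: fix the Hoeffding event for balanced ball counts, argue via the structural dichotomy that any suboptimal non-degenerate local minimum must contain both a \ofm and a \mfo association, show the Fission gain dominates the Fusion loss by a fixed $\Theta(\Deltamin^2/\ktrue)$ margin, and divide into the $O(\Deltamax^2)$ upper bound on the objective to get the iteration count.

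The one technical difference worth flagging concerns how the Fission gain is bounded. You invoke the variance-decomposition identity $\mathrm{cost}_1(X)=\mathrm{cost}_1(A)+\mathrm{cost}_1(B)+\tfrac{|A||B|}{|X|}\|\mu_A-\mu_B\|^2$ with the specific partition $A=$ (samples from one ball), $B=X\setminus A$, which requires the 2-means sub-call to do at least as well as this split --- hence your caveat about exact 2-means. The paper instead commits to a concrete deterministic split: keep $\bbeta_<=\bbeta_1$ and place $\bbeta_>$ at the farthest data point $\mx_0=\arg\max_{\mx\in\vor(\bbeta_1)}\|\mx-\bbeta_1\|$, then bounds the gain directly via the triangle inequality $\|\mx_0-\bbeta_1\|\ge\tfrac{\Deltamin}{2}-r\ge\tfrac{\Deltamin}{3}$. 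This sidesteps any assumption on the 2-means subroutine, at the cost of a somewhat cruder constant ($\tfrac{1}{72}$ versus your $\tfrac{1}{21}$). Your variance-identity route is algebraically cleaner; the paper's farthest-point split is more self-contained as an algorithmic specification. On the other hand, the paper's proof simply \emph{assumes} Steps~1 and~2b detect correctly (Lemma~A.2 opens with ``suppose $\bbeta_1$ fits multiple true centers\ldots''), whereas you actually sketch why the SD/PD thresholds separate the association types under $\Deltamin/r\ge 30$ --- so on that point your plan is, if anything, more complete than the paper's own argument.
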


Under the above setting, Algorithm \ref{alg:1} recovers the ground truth clusters with a linear (in $ \ktrue $) number of  executions of the Lloyd's algorithm.\footnote{Lloyd's algorithm itself takes polynomially many steps to terminate at a local solution under data generative models \cite{arthur2006slow}.}  
In sharp contrast, executing the Lloyd's algorithm alone from random initialization converges to the ground truth $\bbetastar$ with an exponentially small probability, hence it requires an exponential number of executions to find $\bbetastar$. This is shown in Theorem \ref{thm:converse} below.

\begin{restatable}[Lloyd's Converges to Bad Locals]{thm}{converse}
\label{thm:converse}
Consider the stochastic ball model setting. Let $\bbeta^{(t)}$ be the $t$-the iterate of the Lloyd's algorithm starting from $k$ random initial centers uniformly sampled from the data.  There exists a universal
constant $c$, for any $k\ge3$ and any constant $\Cgap>0$, such
that there is a well-separated stochastic ball model with $k$ true
centers satisfying
\begin{align*}
\bb P\left[\forall t\ge0:\frac{G(\bbeta^{(t)})-G(\bbetastar)}{G(\bbetastar)}\ge\Cgap\right]\ge1-e^{-ck},
\end{align*}
where $G$ is the \kmeans objective defined in Eq.\eqref{eq:kmeans}.
\end{restatable}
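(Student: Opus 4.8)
The plan is to exhibit a concrete well-separated ball model on which Lloyd's almost surely converges to a \ofm configuration of the kind described in Section~\ref{sec:geo}, and to quantify both the failure probability and the resulting suboptimality. Two reductions come first. Because one Lloyd step (re-assign points, then re-center) never increases the \kmeans objective, $t\mapsto G(\bbeta^{(t)})$ is non-increasing, hence so is $t\mapsto\big(G(\bbeta^{(t)})-G(\bbetastar)\big)/G(\bbetastar)$; it therefore suffices to lower bound the relative excess of the limit $\bbeta^{(\infty)}$, which exists since Lloyd's on finite data terminates in finitely many steps. So the theorem follows once we produce a model for which, with probability at least $1-e^{-ck}$, the Lloyd limit contains a single fitted center that serves two far-apart true balls.

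For the construction, group the $\ktrue=k$ balls into $\lfloor k/2\rfloor$ pairs (plus one leftover ball if $k$ is odd); the two ball centers within a pair sit at distance $R:=\Lambda r$, and the pairs are placed as a scaled regular simplex in $\real^{k}$ so that any two balls from distinct pairs are at distance at least $\Lambda^{2}r$, where $\Lambda:=\max\big(30,\,\lceil c_{0}\sqrt{(1+\Cgap)k}\,\rceil\big)$ for a suitable absolute constant $c_{0}$. Then $\Deltamin=\Lambda r\ge 30 r$, so the model is well-separated. Work on the high-probability event, as in Theorem~\ref{thm:main}, that every ball contains between $\frac{n}{2k}$ and $\frac{2n}{k}$ data points, so the $k$ initial centers are, up to constant factors, a uniform sample over the $k$ balls. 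Call a pair $P=\{B_a,B_b\}$ \emph{bad} if at initialization one of its balls, say $B_a$, receives no center while the other, $B_b$, receives at most one. A single bad pair forces a \ofm association in $\bbeta^{(\infty)}$: on the first Lloyd step all data of $B_a$ is assigned to the lone center $c$ near $B_b$ (every other center lies at distance $\gg R$ from $B_a$, being in a different pair), and re-centering moves $c$ to a point $\Theta(R)$ from both $B_a$ and $B_b$, inside the gap of $P$; thereafter the configuration is frozen around $P$. The persistence is the crux: one argues by induction on $t$ that some center stays within $O(r)$ of that point with Voronoi cell exactly $B_a\cup B_b$, while every other center stays at distance $\ge 10R$ from it --- a center could reach the neighborhood of $P$ only by capturing some data of $P$, which is impossible because the pinned center beats every other center on $B_a\cup B_b$ by a wide margin, regardless of how Lloyd's behaves on the other pairs.

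Granting this, for every $t$ we have $G(\bbeta^{(t)})\gtrsim \tfrac{n}{k}R^{2}$ (from the points of $B_a$), while $G(\bbetastar)\le n r^{2}$ since every point lies within $r$ of its true center; hence $\big(G(\bbeta^{(t)})-G(\bbetastar)\big)/G(\bbetastar)\gtrsim \Lambda^{2}/k\ge\Cgap$ by the choice of $\Lambda$. It remains to show that some pair is bad with probability at least $1-e^{-ck}$. Let $Z$ count the bad pairs. A short occupancy estimate gives $\bb{E}[Z]\ge\eta k$ for a universal constant $\eta>0$ --- for $k\ge 8$, $\bb{P}[B_a\text{ empty and }B_b\text{ has at most one center}]\gtrsim e^{-2}$, and the finitely many smaller $k$ are checked directly --- and moving one initial center changes $Z$ by at most $2$, so McDiarmid's inequality gives $\bb{P}[Z=0]\le e^{-\Omega(\bb{E}[Z]^{2}/k)}=e^{-\Omega(k)}$; combining with the $e^{-\Omega(n/k)}$-type probability of the data-concentration event yields the claim with a universal $c>0$.

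The main obstacle is the induction in the second paragraph: showing that the bad \ofm cluster on $P$ stays genuinely sealed off at every iteration, even though the other pairs may themselves begin center-deficient and set off long chains of center migrations across the instance. Everything else --- the occupancy and concentration estimates, the simplex geometry, and the calibration of $\Lambda$ --- is routine; the point worth isolating as a lemma is a Lloyd-invariant stating that a ball-pair served by a single center lying well inside an inter-pair gap retains exactly that servicing forever.
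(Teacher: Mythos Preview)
Your construction and the paper's are genuinely different. The paper works entirely in one dimension: it proves a ``trap'' lemma (their Lemma~\ref{lem:trap}) showing that in a $(c,\delta)$-diffuse ball model the number of Lloyd centers inside each of two close intervals and inside the far region is invariant over iterations, and then plugs this lemma into the recursive construction of Jin et al.~(2016), replacing the analogous lemma for EM there. The one-dimensional setting is what makes the trap lemma clean: Voronoi cells are intervals, so a center can only pick up mass from an interval bounded by midpoints with its two neighbours, and the interval arithmetic immediately rules out migration. Your simplex-of-pairs construction in $\real^{k}$ and the direct McDiarmid count of bad pairs are more elementary and self-contained, and would be attractive if the invariant you isolate could be established.

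The gap is exactly where you flag it, but it is worse than a ``main obstacle'': the Lloyd-invariant you propose is false in your geometry. In the regular simplex all pair-vertices are equidistant, and with $k$ centers thrown into $k/2$ pairs a $\Theta(1)$ fraction of the pairs are empty at initialisation with probability $1-e^{-\Omega(k)}$. Take any empty pair $Q$. Every initial center sits within $R/2+r$ of some simplex vertex, hence at distance $\Lambda^{2}r\pm O(R)$ from $Q$'s data; in particular your pinned center $c$ (at the midpoint $m_P$ after one step) is at distance $\Lambda^{2}r\pm O(R)$ from $Q$ as well, and nothing in the construction prevents $c$ from being the nearest center to $Q$'s data. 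When that happens, $c$'s Voronoi cell at the next step is $B_a\cup B_b\cup Q$, the re-centering drags $c$ roughly halfway from $m_P$ to $m_Q$, and the claimed invariant ``$\vor(c)=B_a\cup B_b$ and every other center stays $\ge 10R$ away'' already fails at $t=2$. The equidistance of the simplex is precisely what makes the tie-breaking uncontrollable; you would need a hierarchy of scales among the inter-pair distances to force each empty pair to be absorbed by a designated neighbour rather than by $c$, and once you introduce such a hierarchy you are essentially back to the Jin et al.\ construction the paper borrows. Two minor points: your definition of a bad pair should require $B_b$ to receive \emph{exactly} one center (your argument uses ``the lone center $c$''), and the bounds $G(\bbeta^{(t)})\gtrsim\frac{n}{k}R^{2}$ and $G(\bbetastar)\le nr^{2}$ carry spurious factors of $n$ since $G$ is the population objective.
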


We defer the proofs of above theorems to the Appendix. 

\vspace{-.03in}
\subsection{Detection Subroutines}\label{sec:al.proposed}
\vspace{-.03in}

We propose several subroutines to detect \ofm association and \mfo association utilizing the geometric properties of the local solutions described in Section~\ref{sec:geo}.

\vspace{-.03in}
\subsubsection{Detect \ofm: Standard Deviation (SD)}\label{sec:al.proposed-sd}
\vspace{-.03in}

For each $i$-th fitted cluster with center $\bbeta_i$, we compute the mean squared $\ell_2$ distance to its center:
\begin{align}
\label{eq:al.std}
&\sigma_i^2:= \frac{1}{|C_i|}\sum_{j: \mx_j \in C_i} \|\mx_j-\bbeta_i\|^2, \quad \text{where}\\
&C_i = \big\{\mx_j\in \mathcal{D}: \|\mx_j - \bbeta_i\|\le \|\mx_j - \bbeta_{i'}\| \; \forall \; i'\neq i\big\}.\nonumber
\end{align}
The subroutine outputs $i^*$-th cluster that attains the maximal mean squared distance $i^*: = \text{argmax}_{i\in[k]} \sigma_i^2$. 

As discussed in Section~\ref{sec:geo}, when the true clusters are identical in size, a fitted cluster with a \ofm association contains multiple true clusters, thus having a larger mean squared distance. When the true clusters have varying sizes, we can adapt the above process accordingly. For example, before computing the mean squared distance for each cluster, we can normalize each cluster such that the radius (the maximal distance between a data in the cluster to the cluster center) of each fitted cluster is the same.  For a fitted cluster with a \ofm association, the mass of the data points will concentrate near the boundary after normalization, and will have a larger mean squared distance. 

\vspace{-.03in}
\subsubsection{Detect \ofm: $\epsilon$-Radius (RD)}\label{sec:al.proposed-radius}
\vspace{-.03in}

Fix $\epsilon > 0$. For each fitted cluster $i$, we compute the percentage of points contained in $\mathbb{B}_{\epsilon}(\bbeta_i)$, which denotes the ball centered at $\bbeta_i$ with radius $\epsilon$, among all the data contained in the fitted cluster $i$:
\begin{align}
\label{eq:al.radius}
p_i:= \frac{|B_i|}{|C_i|},\quad B_i = \left\{\mx_j: \|\mx_j - \bbeta_i\|\leq \epsilon, \; \mx_j \in C_i \right\}.
\end{align}
The subroutine outputs the $i^*$-th cluster that attains the smallest $B_i$ such that $i^*: = \text{argmin}_{i\in [k]} B_i$. 

For a fitted cluster with \ofm association, its center $\bbeta_i$ is in the middle of several true clusters. There are two possibilities, either there is no true cluster near the fitted center, or the fitted center coincides with a true cluster center. In the previous case, the set $B_i$ is almost empty as $\bbeta_i$ is not close to any true cluster when there are sufficient separation among the true clusters. In the latter case, the set $B_i$ has a small cardinality. However, $|C_i|$ is big as it contains multiple true clusters. In both cases, the ratio will be smaller for a cluster with a \ofm association (compared with a cluster with a \mfo association).  

\vspace{-.03in}
\subsubsection{Detect \ofm: Total Deviation (TD)}\label{sec:al.proposed-td}
\vspace{-.03in}

For each $i$-th fitted cluster with center $\bbeta_i$, we compute the summation of $\ell_2$ distance to its center:
\begin{align}
\label{eq:al.std2}
&v_i^2:= \sum_{j: \mx_j \in C_i} \|\mx_j-\bbeta_i\|^2, \quad \text{where}\\
&C_i = \big\{\mx_j\in \mathcal{D}: \|\mx_j - \bbeta_i\|\le \|\mx_j - \bbeta_{i'}\| \; \forall \; i'\neq i\big\}.\nonumber
\end{align}
The subroutine outputs $i^*$-th cluster that attains the maximal mean squared distance $i^*: = \text{argmax}_{i\in[k]} v_i^2$.

Compared with the standard deviation detection method, the total deviation is an unnormalized version of standard deviation. Indeed, the total deviation approximates the improvement in the $\kmeans$ objective value when a single fitted cluster is fitted with two centers; see section 3.1 of \cite{ismkhan2018ik}. This coincides with the observation that the \kmeans objective function decreases more when a fitted component with \ofm association is split into two centers in the stochastic ball model. 

\vspace{-.03in}
\subsubsection{Detect \mfo: Pairwise Distance (PD)}\label{sec:al.proposed-pairwise}
\vspace{-.03in}

For each pair of fitted cluster $(i,j)$, $i \neq j$, we compute the pairwise $\ell_2$ distance between fitted cluster center $\bbeta_i$ and $\bbeta_j$: $d_{i,j}:=\|\bbeta_i - \bbeta_j\|.$
The subroutine outputs $i_*$-th and $j_*$-th clusters whose pairwise distance attains the minimal:
\begin{align}
(i_*,j_*): = \text{argmin}_{(i,j), i\neq j}  d_{i,j}.
\end{align}
The method is also based on the inferred geometric properties in Section~\ref{sec:geo}: when  true clusters have similar shape or size, the pairwise distance between the fitted clusters with \mfo association is smaller.  

\vspace{-.03in}
\subsubsection{Detect \mfo: Objective Increment (OI)}\label{sec:al.proposed-obj-change}
\vspace{-.03in}

For each $i$-th fitted center, let us consider a modified $\kmeans$ clustering solution $\widehat{\bbeta}^{(i)}=(\bbeta_1,\ldots, \widehat{\bbeta_i},\ldots, \bbeta_k)$ by removing the $i$-th center. Denote the corresponding $k$-means objective function as $G_i$, in which we fit $k-1$ centers to the data compared with the original clustering solution. Let $(i^*,j^*)$ be such that
\begin{align*}
i^* =   \text{argmin}_{i}  G_{i},\quad
j^* =   \text{argmin}_{j, j\neq i^*}  \|\bbeta_j - \bbeta_i^*\|.
\end{align*} 
This method coincides with the observation that the \kmeans objective function increases the least when two fitted centers that have \mfo association with the same true center are merged in the stochastic ball model.

\vspace{-.03in}
\subsubsection{Other Detection Procedures}\label{subsec:related-algorithm}
\vspace{-.03in}

The idea of using split and merge type operations in clustering problems can be traced back to as early as the 1960s \cite{ball1967promenade}. This idea has been used to determine the correct number of fitted clusters when $k$ is unknown \cite{pelleg2000x, muhr2009automatic, lei2016robust}, or to escape local solutions when $k$ is known \cite{ueda2000smem, morii2006clustering}. Several criteria for split and merge steps have been proposed in the literature; see Table~\ref{tab:1} for a summary and Appendix \ref{app:related} for more details. 

\begin{table}
\centering\caption{Related Split and Merge criteria (details in Appendix \ref{app:related})} \label{tab:1}
\scalebox{1.00}{
\begin{tabular}{l l l}
\hline \textbf{Algorithm} & \textbf{Split Criteria} & \textbf{Merge Criteria} \\ \hline
\cite{pelleg2000x,muhr2009automatic} &  Reduction in BIC score & BIC score
\\
\cite{lei2016robust}& Max \& Min in-cluster distance & Pairwise distance  
\\
 \cite{morii2006clustering} &  Ratio of objective value with $k$ & Pairwise distance \\ \hline
\end{tabular}}
\end{table}

These existing criteria can be adapted and incorporated into our proposed framework, as we describe below.

The work \cite{pelleg2000x, muhr2009automatic} studies the \xmeans algorithm, which uses the Bayesian Information Criterion (BIC) score with respect to the current solution. A fitted cluster is to be split into two clusters, and a pair of clusters are to be merged, if doing so decreases the BIC score. To adapt the split criterion for detecting \ofm association in our framework, we can output the cluster that attains the maximal reduction in BIC score if it is split into two clusters. To adapt the merge criterion for detecting \mfo association, we can output the pair of clusters that attain the maximal reduction in BIC if they are to be merged.

The algorithm in \cite{lei2016robust} evaluates the intra-cluster and inter-cluster dissimilarity. A fitted cluster is to be split if the intra-cluster dissimilarity exceeds some threshold; a pair of clusters are to be merged if the inter-cluster dissimilarity falls below some threshold. The dissimilarities are measured in Euclidean distance. In particular, the intra-cluster dissimilarity for a fitted cluster is defined as the sum of maximal and minimal distance to that cluster center; the inter-cluster dissimilarity is the pairwise cluster center distance. Note that the merge criterion coincides with the pairwise distance described in Section~\ref{sec:al.proposed-pairwise}.
To adapt the split criterion for detecting \ofm association, we output the cluster with maximal intra-cluster dissimilarity; to detecting \mfo association, we output the pair of clusters with minimal inter-cluster dissimilarity.

The algorithm in \cite{morii2006clustering} aims to split a cluster into $2,\ldots,M$ clusters and compute the ratio of successive \kmeans objectives. The cluster will be split if the minimum of these ratios is smaller than a threshold. In the merge step, it retains the split cluster that is furthest from the neighboring regions and then merges the rest of the split clusters to the neighboring Voronoi regions. We can also adapt the split criterion for detecting \ofm association here --- we can split a cluster into $2$ clusters and compute the ratio between the local \kmeans objective with 2 clusters and the local \kmeans objective with only 1 cluster. Afterwards, we output the cluster that attains the smallest ratio.

\vspace{-.03in}
\subsection{Mis-specification of Initial Number of Clusters and Ablation Study}\label{sec:misspecification}
\vspace{-.03in}

We consider two variants of the proposed FFkm algorithm, where only the fission step or the fusion step is used. Recall the fission/fusion step only increases/decreases the number of clusters. To ensure our algorithm outputs $\ktrue$ clusters at the end, we under-specify the initial number of clusters  ($k<\ktrue$) for Fission-only \kmeans or over-specify ($k>\ktrue$) for Fusion-only \kmeans.
Considering these two variants also serve as an ablation study on the roles of the fission and fusion steps in the proposed algorithm.
 
Note that the structural result in Section~\ref{sec:geo} holds even when  $k\neq\ktrue$, i.e., the numbers of fitted and true clusters are not equal~\cite{qian2021structures}. An interpretation of \ofm association is that an insufficient number of parameters (in this case only one parameter, corresponding to one fitted cluster center) are used to fit multiple true components, resulting in local underfitting. On the other hand, \mfo association happens when too many parameters are used to fit a single component, resulting in local overfitting. When the fitted parameter $k$ is much smaller than the ground truth $\ktrue$, the local solutions are more likely to contain \ofm association. When the fitted parameter $k$ is larger than the ground truth $\ktrue$, the local solutions are more likely to contain \mfo association. 

\paragraph{Fission-only \kmeans in Under-specified Setting}
For Fission-only \kmeans, we initially fit less clusters than the true number of clusters, i.e., $k < \ktrue$ and iteratively apply a \ofm detection subroutine and split the corresponding cluster. See Algorithm~\ref{alg:2}. 
\begin{algorithm}
    \caption{Fission-only \kmeans}  \label{alg:2}
    \hspace*{\algorithmicindent} \textbf{Input:} 
    data points $\mx_1, ... , \mx_n \in \real^{d}$, number of fitted clusters $k$, number of true clusters $\ktrue$\\
    \hspace*{\algorithmicindent} \textbf{Output:} 
    $\mb\beta$
    \begin{algorithmic}[1]
        \State 
        Run Lloyd's algorithm initialized from $k$ randomly selected cluster centers.
        \While{$k> \ktrue$} 
            \State \textbf{Step 1:} Detect a cluster with \ofm association, whose center is $\bbeta_{(1)}$.
            \State \textbf{Step 2:} Split $\bbeta_{(1)}$ into two centers $\bbeta_{(1)}$ and $\bbeta_{(1)'}$ , $k\gets k+1$
            \State \textbf{Step 3:} Run Lloyd's algorithm on $k$ cluster centers initialized at the updated solution.
        \EndWhile
    \end{algorithmic}
\end{algorithm}

\paragraph{Fusion-only \kmeans in Over-specified Setting}
\label{sec:al.overparameterization}
For Fusion-only \kmeans, we initially fit more clusters than the true number of clusters, i.e., $k > \ktrue$ and only apply the \mfo detection subroutine to merge close clusters. See Algorithm~\ref{alg:3}. We defer the experiment results on these two algorithms to Section \ref{sec:expt}.
\begin{algorithm}
    \caption{Fusion-only \kmeans}  \label{alg:3}
    \hspace*{\algorithmicindent} \textbf{Input:} 
    data $\mathcal{D}$, number of fitted clusters $k$, the number of true clusters $\ktrue$\\
    \hspace*{\algorithmicindent} \textbf{Output:} 
    $\mb\beta$
    \begin{algorithmic}[1]
        \State 
        Run Lloyd's algorithm initialized from $k$ randomly selected cluster centers.
        \While{$k> \ktrue$} 
            \State \textbf{Step 1:} Detect two clusters with \mfo association, whose centers are $\bbeta_{(1)}$ and $\bbeta_{(2)}$.
            \State \textbf{Step 2:} Merge $\bbeta_{(1)}$ and $\bbeta_{(2)}$ into one center $\bbeta_{(1,2)}$ by averaging, $k\gets k-1$.
            \State \textbf{Step 3:} Run Lloyd's algorithm on $k$ cluster centers initialized at the updated solution.
        \EndWhile
    \end{algorithmic}
\end{algorithm}

\vspace{-.06in}
\section{Related Work and Connection}\label{sec:al.discussion}
\vspace{-.05in}

Fission Fusion \kmeans (FFkm) is a general framework which iteratively eliminates \ofm and \mfo associations and decreases the \kmeans objective value. This framework allows us to unify many existing algorithmic designs for \kmeans, from the perspective of the structural properties of local solutions. Below, we discuss other variants of \kmeans algorithms in literature; we elucidate their connection to our framework and to the structures of local solutions, and highlight the differences.

\vspace{-.03in}
\subsection{Swap Operation}
\vspace{-.03in}

One variant of our framework is to use a Swap operation, which moves the center of one cluster in \mfo association to the neighborhood of the center with \ofm association; see Figure~\ref{fig:3} for an illustration, which can be compared with Figure~\ref{fig:2}. The Swap operation can also be viewed as performing the Fusion step before the Fission step in the FFkm framework. Using Swap, a cluster with \mfo association and a cluster with \ofm association need to be identified simultaneously. One such randomized procedure is considered in \cite{franti2000randomised}, in which a random center and a random cluster are swapped. Other deterministic procedures have been proposed \cite{franti2006iterative, ismkhan2018ik,kaukoranta1998iterative, frigui1997clustering,franti2009efficiency}. To select a center to be swapped, an objective value based criterion is considered in \cite{franti2006iterative, ismkhan2018ik}; a merge based criterion is used in \cite{kaukoranta1998iterative, frigui1997clustering}. To select a cluster to which a center is moved, an objective value based criterion is considered in \cite{ismkhan2018ik}; other heuristic criteria are proposed, e.g., selecting a cluster with the largest variance \cite{fritzke1997lbg,franti2009efficiency}.  

\begin{figure}
    \centering
    \includegraphics[width=0.3\textwidth]{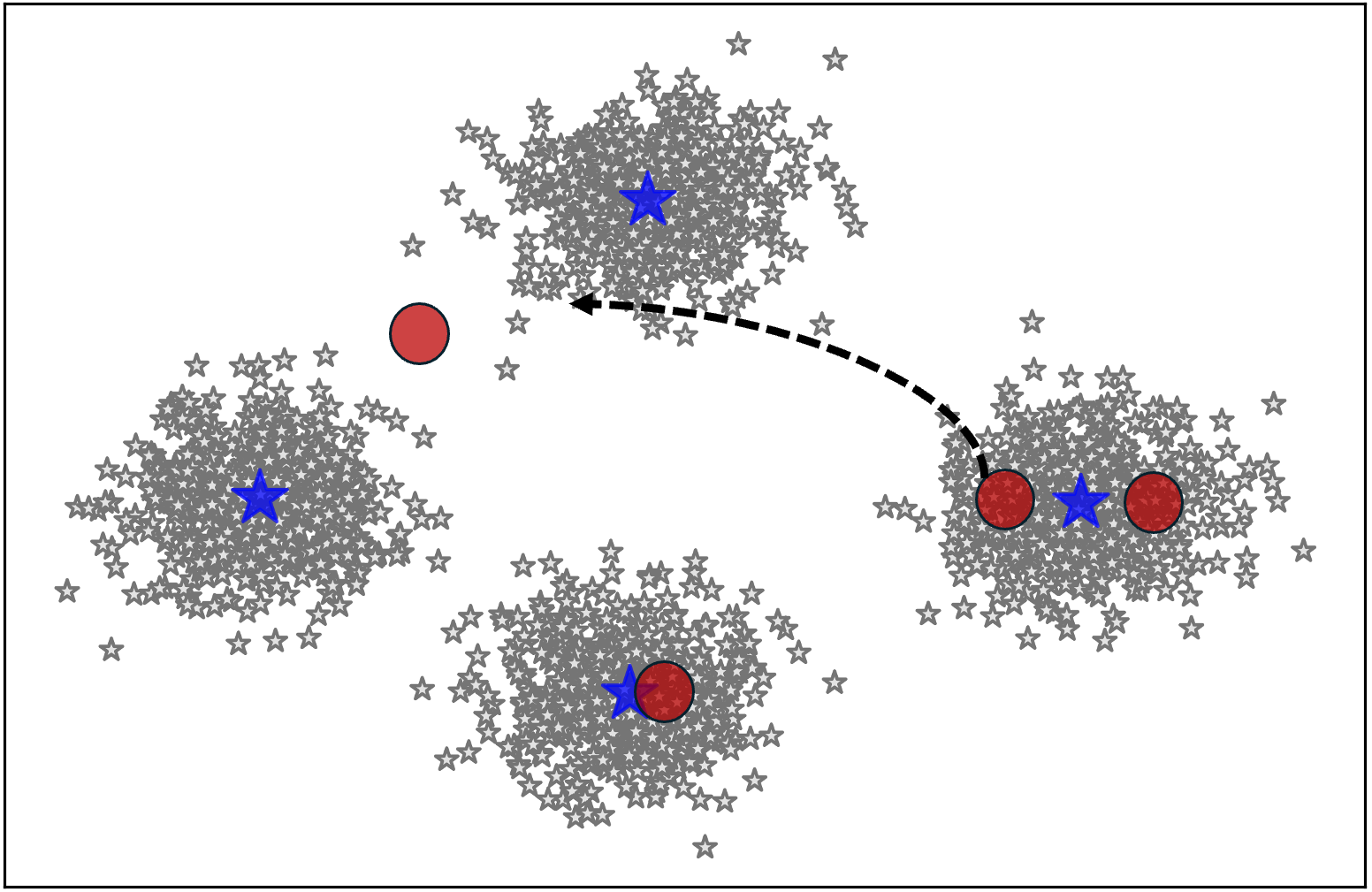} 
    \centering{\caption{Illustration of the Swap operation.} }\label{fig:3}
\end{figure}

\vspace{-.03in}
\subsubsection{Geometry-based versus Objective-based Algorithms}\label{sec:kmsmp}
\vspace{-.03in}

The proposed FFkm approach is \emph{geometry-based}, which escapes local minima by harnessing their geometric properties. In particular, this is the case when FFkm employs the Standard Deviation (SD) and $\epsilon$-Radius (RD) subroutines to detect \ofm, and the Pairwise Distance (PD)  subroutine to detect \mfo.
In contrast, \emph{objective-based algorithms} focus solely on the \kmeans objective value when trying to improve the clustering solution \cite{franti2006iterative, ismkhan2018ik, zhu2021improved}. 
FFkm with the Total Deviation (TD) subroutine for \ofm detection and the Objective Increment (OI) subroutine for \mfo detection, can be classified into this category. 

A representative objective-based algorithm in the literature is \emph{\kmeansmp} \cite{ismkhan2018ik}, which identifies a cluster to be removed (minus) and a cluster to be divided (plus) with the goal of improving the \kmeans objective value. In particular, \kmeansmp finds the ``min-cost'' cluster whose total objective value minus the cluster's partial objective value is minimal, as well as the ``max-gain'' cluster whose new partial objective value after adding one center is maximum. These criteria are similar to those described in Sections \ref{sec:al.proposed-td} and \ref{sec:al.proposed-obj-change}.
One can also view \kmeansmp as a variant of FFkm using the Swap operation discussed above.

In general, one can expect that objective-based algorithms like \kmeansmp perform well for datasets that are balanced, where different clusters have similar numbers of data points. However, real-world datasets often have highly unbalanced clusters. In this case, even when the clusters have well-defined boundaries, objective-based algorithms often overly focus on large clusters (those with many data points) while ignore small clusters. In particular, these algorithms may incorrectly split a large cluster as doing so leads to a local improvement of the objective value, resulting in a local minimum. We corroborate these observations with experiment results on unbalanced datasets in Section \ref{sec:synthetic}, where we find that geometry-based FFkm outperforms objective-based methods like \kmeansmp.

\vspace{-.03in}
\subsection{Additional Related Work}
\vspace{-.03in}

A different direction for improving the quality of the \kmeans solution is to design better initialization schemes. The work by Celebi et al~\cite{celebi2013comparative} provides a comprehensive review of initialization methods. Many of these methods coincide with the intuition of reducing the \ofm association and \mfo association. We discuss a few illustrating examples below; an exhaustive comparison is beyond the scope of the current work. One approach is to sequentially choose the initial centers so that they are spread out, which avoids the \mfo association. To this end, \kmeanspp~\cite{arthur2007k} uses a probabilistic procedure, and maxmin method~\cite{katsavounidis1994new} and Hartigan method~\cite{hartigan1979algorithm} use a deterministic procedure. Astrahan's method~\cite{astrahan1970speech,cao2009initialization} selects centers such that the data near each center has a relative high density and successive centers are far apart from each other. 

The proposed FFkm framework can be viewed as going beyond the initialization step to further improve the clustering solution. In particular, the above existing initialization schemes aim to reduce the \ofm and \mfo associations at the start of the algorithm; our framework reduces them continuously throughout the iterations. Importantly, our framework can be applied on top of any existing initialization schemes.

\vspace{-.06in}
\section{Experiments}
\label{sec:expt}
\vspace{-.05in}

We implement Algorithm~\ref{alg:1}, Fission-Fusion \kmeans, which incorporates the \ofm and \mfo association detection methods described in Section~\ref{sec:al.proposed}. For \ofm detection, we consider the standard deviation (SD), total deviation (TD), and $\epsilon$-radius (RD) methods. For \mfo detection, we include the pairwise distance (PD) method and the objective increment (OI) method. There are six combinations of these subroutines. The resulting FFkm implementations are called FFkm (SD+PD), FFkm (SD+OI), FFkm (TD+PD), FFkm (TD+OI), FFkm (RD+PD), and FFkm (RD+OI), respectively. Our experiments employ the benchmark datasets used in~\cite{franti2018k}. In Section~\ref{sec:real-world}, we consider additional real-world datasets.

For the $\epsilon$-radius (RD) method, the radius of the ball is determined adaptively as follows. We first compute the minimum median $\ell_2$ distance to the cluster centers among all fitted clusters. This distance serves as the base radius $r$. Subsequently, we set the radius to $\delta \cdot r$, where $\delta$ is chosen from $\{0.01, 0.1, 1, 5\}$, with  $\delta = 0.1$ as the default value. 
\vspace{-.03in}
\subsection{Benchmark Datasets}\label{exp:data}
\vspace{-.03in}

\begin{table}
    \centering
    \caption{Characteristics of the Benchmark Datasets.}\label{tab:2}
    \begin{tabular}{l l l l r}
    \hline 
    \textbf{Dataset} & \textbf{Varying} & \textbf{Size} & \textbf{Clusters} & \textbf{Per cluster} \\ 
    \hline
    A-sets & \#Clusters & $3000$--$7500$ & $20$--$50$ & 150 \\
    S-sets & Overlap & 5000 & 15 & 333 \\
    Dim032 & Dimensions & $1024\times 100$ & 16 & 64 \\
    Birch1 & Structure & 100,000 & 100 & 1000 \\
    Unbalance & Balance & 6500 & 8 & 100, 2000 \\ \hline
    \end{tabular}
\end{table}

We use the synthetic benchmark datasets from~\cite{franti2018k}, which are  widely employed for assessing clustering algorithms. These datasets have several categories with varying cluster numbers (A-sets), degrees of separation (S-sets), dimensionalities (DIM032), and levels of unbalance (Unbalance). 
For an overview of these datasets' properties, see Table~\ref{tab:2}. For a visual representation, see Appendix~\ref{app:benchmark}.

Below we offer a brief description of these datasets.
\begin{enumerate}
    \item\textbf{A-sets} consist of three sets, $A_1$, $A_2$, and $A_3$ ($A_1 \subset A_2 \subset A_3$), corresponding to $20, 35$ and $50$ spherical clusters in $\real^2$ respectively, all with $20\%$ overlap. 
    \item\textbf{S-sets} contain four sets,  $S_1$, $S_2$, $S_3$ and $S_4$, which correspond to $15$ Gaussian clusters in $\real^2$ with varying overlap percentages of $9\%$, $22\%$, $41\%$ and $44\%$. While most clusters are spherical, a few have been truncated and become non-spherical.
    \item\textbf{Unbalance} includes a single set with eight clusters in $\real^2$, divided into two well-separated groups (left and right). The left group consists of three dense clusters with 2000 vectors each, while the right group comprises five sparse clusters with 100 vectors each.
    \item\textbf{DIM032} features a single set with 16 well-separated Gaussian clusters in $\real^{32}$. \footnote{To prevent artifacts (e.g., a center fitting a single data point) due to small sample sizes, we increased the number of data points from $1024$ to $102400$. Specifically, random sampling was performed from Gaussian distributions with means at the ground truth centers and uniform standard deviations.}
    \item\textbf{Birch1} includes a single set with 100 Gaussian clusters in $\real^2$, with centers arranged in a regular $10\times 10$ grid.
\end{enumerate}

\vspace{-.03in}
\subsection{Evaluation Metrics} \label{sec:evalaution}
\vspace{-.03in}

Three metrics are used for evaluating the clustering quality. 

The first two metrics are based on a modified version of the \textit{centroid index} (CI)~\cite{franti2014centroid}. CI allows one to compare two clustering solutions with different numbers of clusters, as some algorithms like \cite{pelleg2000x, muhr2009automatic} do not necessarily return a solution with $\ktrue$ clusters.
To compute the CI, we first identify the index of the closest ground truth center to each fitted cluster center. Then, we count the total number of ground truth centers whose indices are not mapped to any fitted cluster center in the first step. This count yields the CI, which approximately measures the total number of true centers contained in \ofm associations. It does not penalize \mfo associations since the true center associated with that \mfo association has been identified. A zero CI indicates successful clustering in the sense that all ground truth centers have been identified. 

Based on CI, we consider two more fine-grained metrics.
\begin{enumerate}
\item \textbf{Success rate (SR)}: defined as the percentage of trials in which an algorithm succeeds in returning a zero-CI solution~\cite{franti2018k}. Different trials differ by random initialization and other internal randomness of the algorithm.
\item \textbf{Average missing rate (AMR)}: defined as the mean CI (normalized by the number of true clusters) over multiple trials of an algorithm. Compared to SR, AMR accounts for the quality of the solution when the success rate is not 100\%. A higher AMR indicates a lower solution quality.
\end{enumerate}
When an algorithm assumes knowledge of the number of true clusters $\ktrue$, we further use the relative \kmeans objective value, described below, as a third evaluation metric:

\begin{enumerate}[resume]
\item $\boldsymbol{\rho}$\textbf{-ratio}: defined as the ratio between the objective value of the solution returned by an algorithm and the optimal \kmeans objective value. 
\end{enumerate}

\vspace{-.03in}
\subsection{Results for Benchmark Datasets}\label{exp:result}
\vspace{-.03in}
In Section~\ref{sec:synthetic}, we investigate the differences between geometry-based algorithms and objective-based algorithms (cf.\ Section~\ref{sec:kmsmp}). In Section~\ref{sec:param}, we conduct an ablation study and examine the performance of Fission-only \kmeans and Fusion-only \kmeans (cf. Section~\ref{sec:misspecification}). In Section~\ref{exp:com}, we compare FFkm against other algorithms, including Lloyd's algorithm using both random and $k$-means$++$ initializations, as well as more recent algorithms from~\cite{pelleg2000x, muhr2009automatic, morii2006clustering, lei2016robust, ismkhan2018ik}. 

In Section~\ref{sec:real-world} to follow, we validate the effectiveness of FFkm on real-world datasets.

\vspace{-.03in}
\subsubsection{A Challenging Unbalanced Dataset} \label{sec:synthetic}
\vspace{-.03in}

We use a challenging synthetic dataset to demonstrate the difference between geometry-based algorithms (including variants of FFkm) and objective-based algorithms (including \kmeansmp). The dataset is visualized in Figure \ref{fig:4}. The data points, shown as gray stars, are generated from Gaussian distributions centered at the true centers from the benchmark dataset \textbf{Unbalance}. The three smaller clusters on the left each contain $200$ data points with a standard deviation of $3$; the five larger clusters on the right each contain $2000$ data points with a standard deviation of~$7$.

The following algorithms are considered: the standard Lloyd's \kmeans algorithm, the objective-based algorithm \kmeansmp \cite{ismkhan2018ik}, and FFkm with the aforementioned six combinations of subroutines. 
The original paper \cite{ismkhan2018ik} discusses six versions of \kmeansmp with different initialization schemes and different values of a hyperparameter $\alpha$. For a fair and consistent comparison, we use a re-implemented version 5 of \kmeansmp with $\alpha = 3/4$ and random initialization, which aligns with how we initialize FFkm; we refer to this implementation as \kmeansmpx. 
Among the six variants of FFkm, we consider FFkm (SD+PD) and FFkm (RD+PD) as geometry-based, FFkm (TD+OI) as objective-based, and FFkm (TD+PD), FFkm (SD+OI), and FFkm (RD+OI) as hybrid combining the geometry- and objective-based approaches. 

\begin{table}
  \centering
  \caption{Experiment results on the challenging synthetic dataset}\label{tab:3}
  \scalebox{0.8}{
  \begin{tabular}{|c | c |c | c | c | c | c | c |}
  \hline 
  \textbf{Algorithms} & \textbf{Strategy} & \textbf{SR (\%)} & \textbf{AMR} & \textbf{$\rho$-ratio (\%)} & \textbf{Average SSE} & \textbf{Ground Truth SSE} &  \textbf{Time (s)} \\ 
  \hline
  Lloyd \kmeans & objective & 0          & 0.26         & $2.3\pm2.84$  & $2284151.16$ & $991139.25$ & $0.0425$\\
  \hline
  FFkm (SD+PD)   & geometry  & $\bm{100}$ & $\bm{0.00}$  & $\bm{1.00\pm0.00}$ & $\bm{991139.25}$  & $991139.25$ & $\bm{0.0686}$\\
  FFkm (RD+PD)   & geometry  & $\bm{100}$ & $\bm{0.00}$  & $\bm{1.00\pm0.00}$ & $\bm{991139.25}$  & $991139.25$ & $0.0852$\\
  FFkm (TD+PD)   & hybrid  & 2          & 0.12         & $1.11\pm0.02$ & $1099421.01$ & $991139.25$ & $0.1780$\\
  FFkm (SD+OI)   & hybrid  & $\bm{100}$ & $\bm{0.00}$  & $\bm{1.00\pm0.00}$ & $\bm{991139.25}$  & $991139.25$ & $0.0948$\\
  FFkm (RD+OI)   & hybrid  & $\bm{100}$ & $\bm{0.00}$  & $\bm{1.00\pm0.00}$ & $\bm{991139.25}$  & $991139.25$ & $0.1152$\\
  FFkm (TD+OI)   & objective & 2          & 0.12         & $1.12\pm0.02$ & $1106059.38$ & $991139.25$ & $0.0769$\\
  \kmeansmpx     & objective & 0          & 0.13         & $1.17\pm0.05$ & $1155022.97$ & $991139.25$ & $0.1872$\\  
  \hline
  \end{tabular}}
\end{table}

For each algorithm, we conducted $100$ independent trials. The results are summarized in Table~\ref{tab:3}, which present the performance metrics as well as the sum of squared errors (SSE) averaged across trials, the SSE of the ground truth clustering, and the execution time averaged across trials.\footnote{The execution time was recorded on the same machine.} The best results in each column (excluding Lloyd's algorithm) are marked in bold. As observed, the geometry-based algorithms, FFkm (SD+PD) and FFkm (RD+PD), recover the ground truth clustering and achieve a $100\%$ success rate, with FFkm (SD+PD) using fewer iterations and hence the fastest execution time.  Two of the FFkm variants with combined strategies and the geometry-based subroutines SD and RD, also achieve a 100\% success rate. 
In comparison, the objective-based algorithms, FFkm (TD+OI) and \kmeansmpx, failed to recover the ground truth, with success rates of only $2\%$ and $0\%$, respectively. 

\begin{figure*}
  \centering
  \subfloat[Initial Solution by Lloyd \kmeans]{  
    \includegraphics[width=0.328\linewidth, height=0.17\textheight]{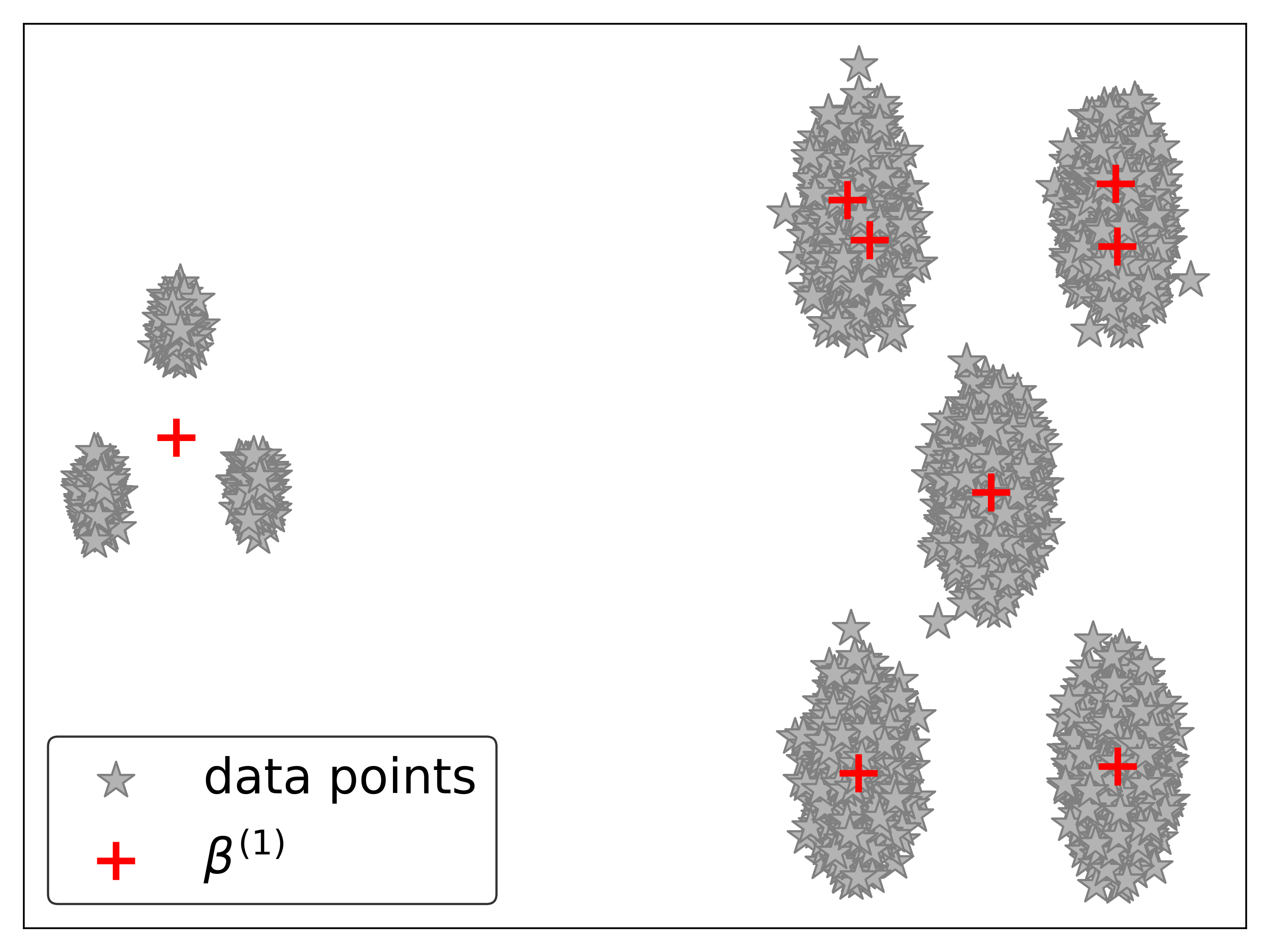}
    \label{fig:4a}
  }
  \subfloat[Detecting suitable paris ($\ell=1$)]{  
    \includegraphics[width=0.328\linewidth, height=0.17\textheight]{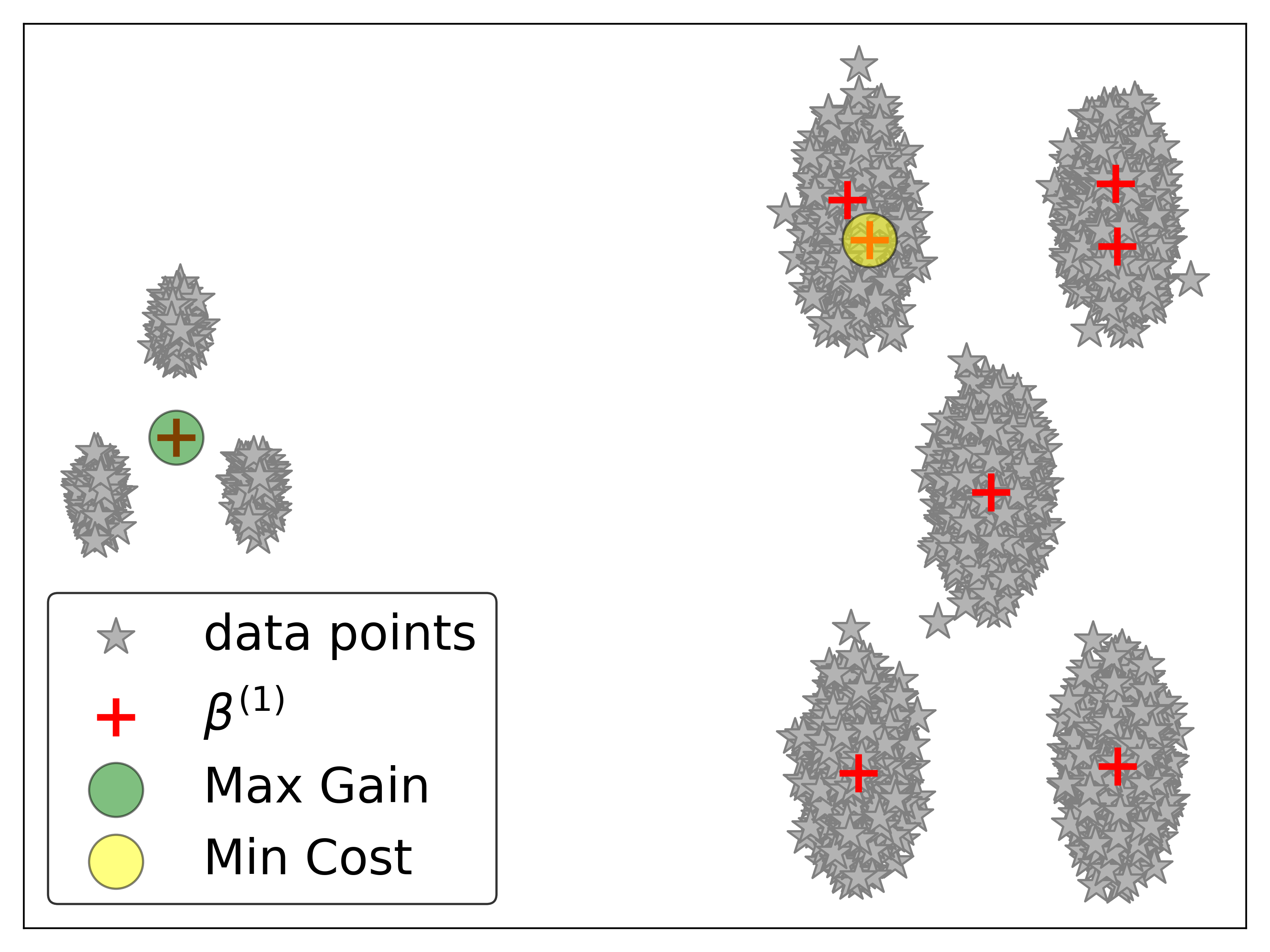}
    \label{fig:4b}
  }
  \subfloat[Solution by \kmeansmpx ($\ell=1$)]{  
    \includegraphics[width=0.328\linewidth, height=0.17\textheight]{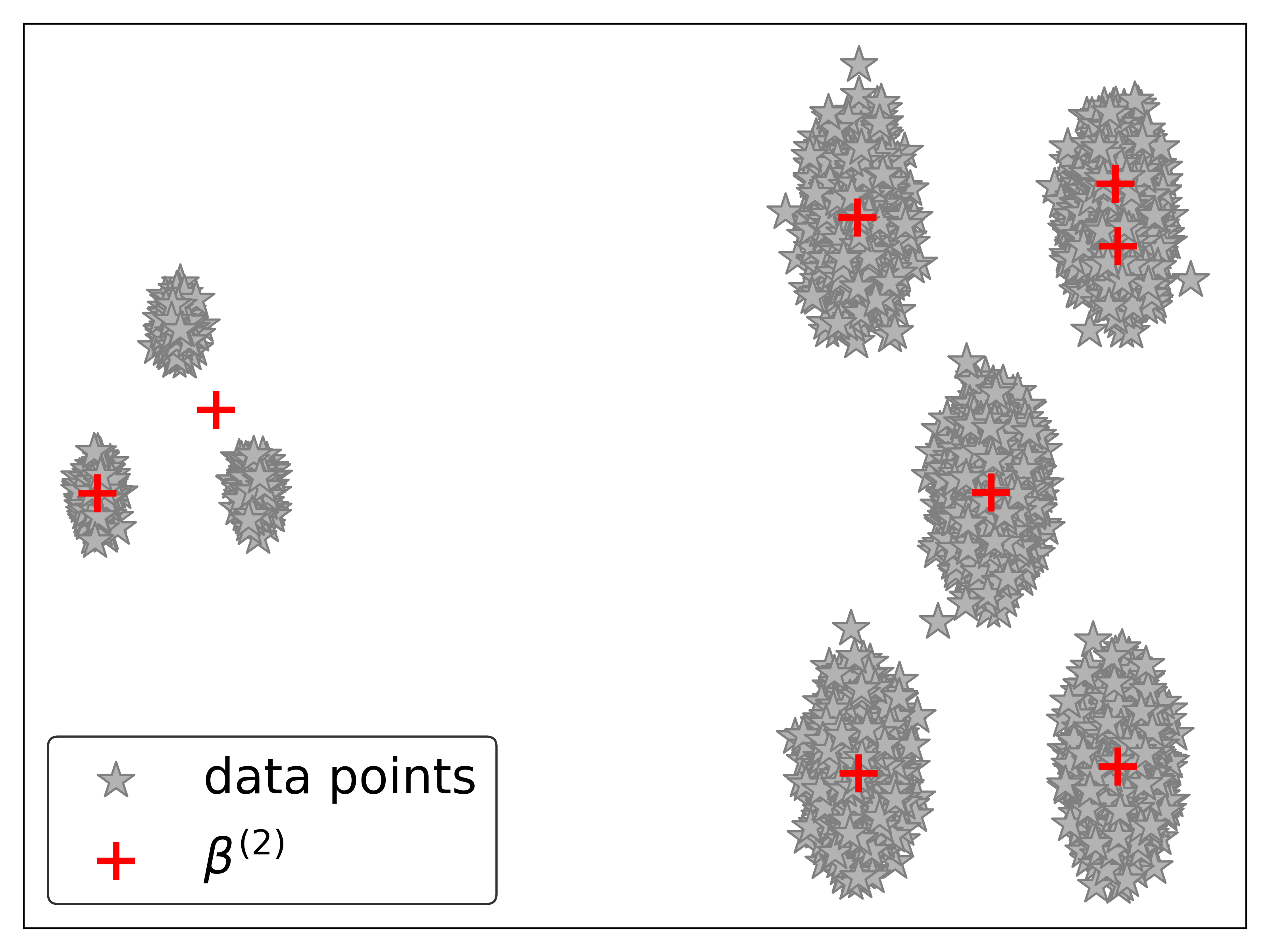}
    \label{fig:4c}
  }

  \subfloat[Detecting suitable paris ($\ell=2$)]{  
    \includegraphics[width=0.328\linewidth, height=0.17\textheight]{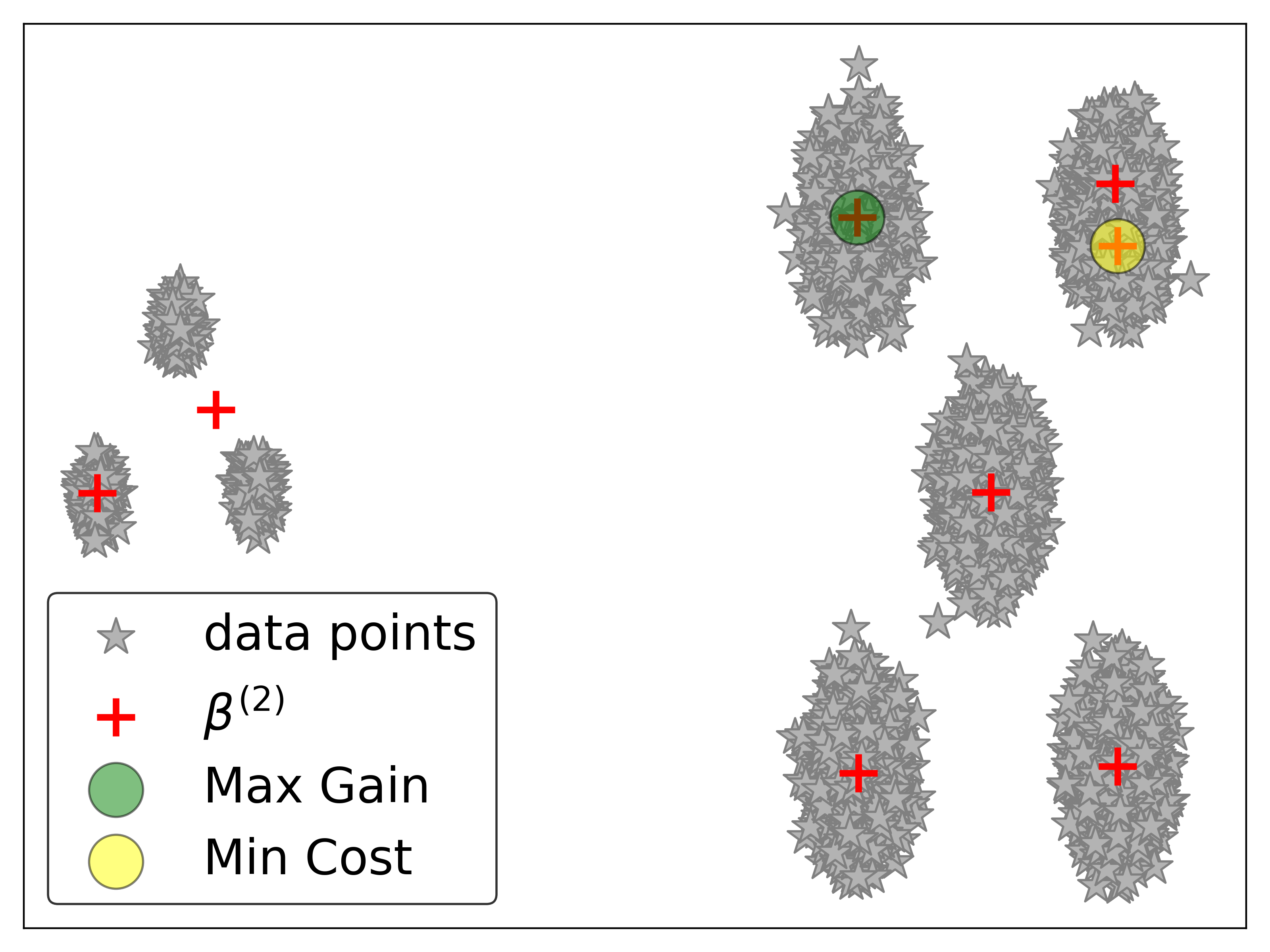}
    \label{fig:4d}
  }
  \subfloat[Solution by \kmeansmpx ($\ell=2$)]{  
    \includegraphics[width=0.328\linewidth, height=0.17\textheight]{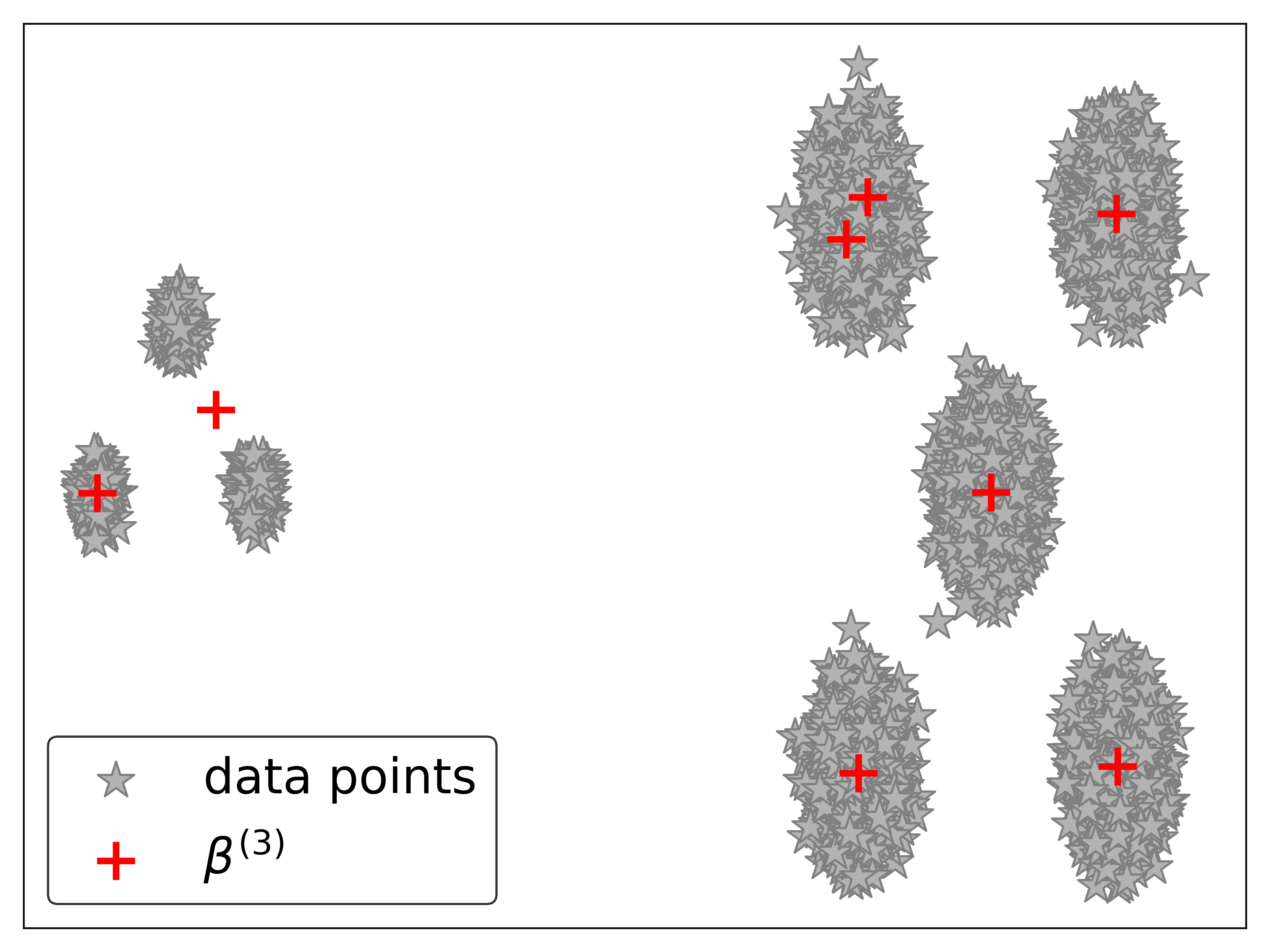}
    \label{fig:4e}
  }
  \subfloat[Detecting OFM and MFO ($\ell=1$)]{  
    \includegraphics[width=0.328\linewidth, height=0.17\textheight]{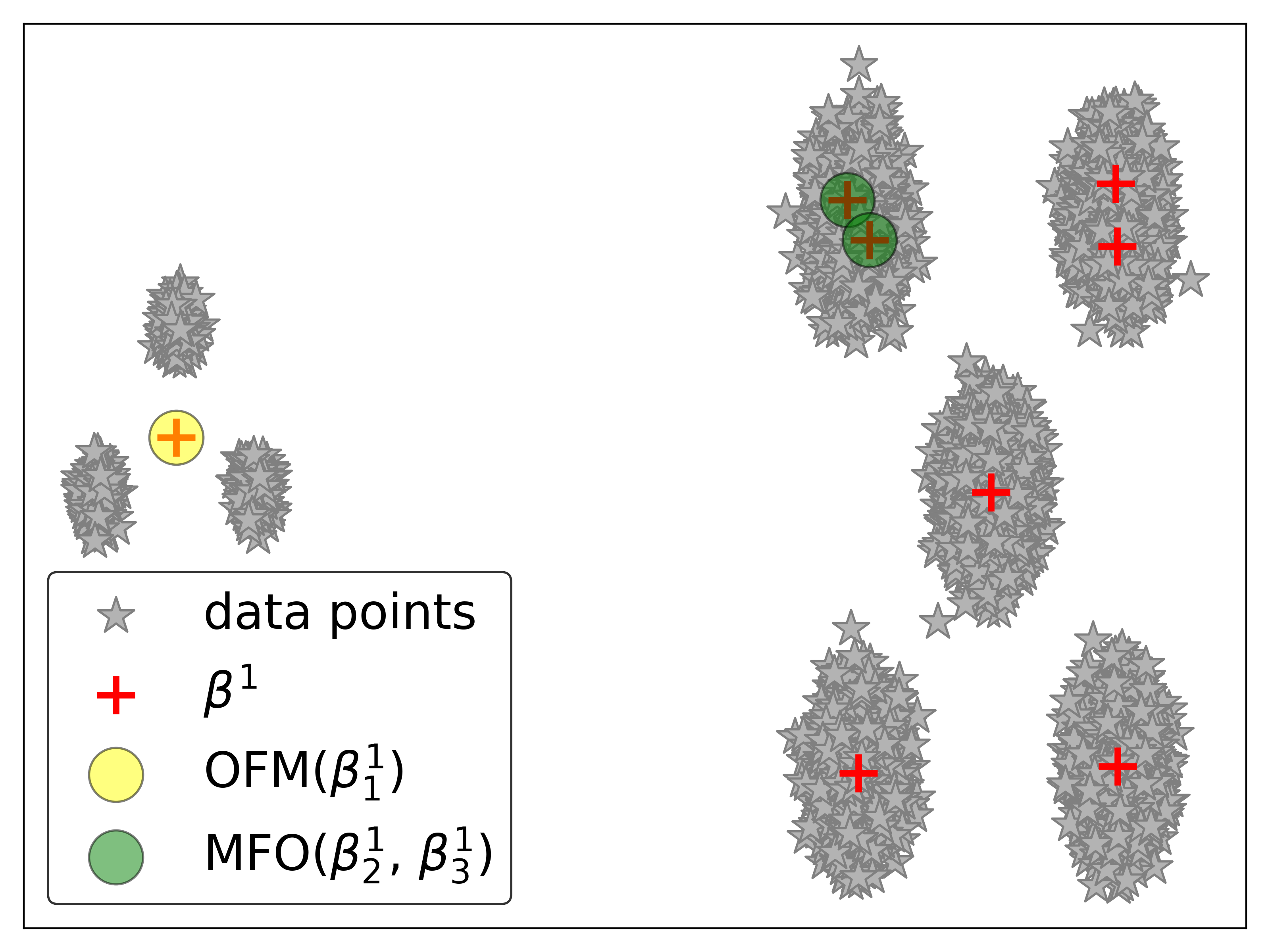}
    \label{fig:4f}
  }

  \subfloat[Solution by FFkm ($\ell=1$)]{  
    \includegraphics[width=0.328\linewidth, height=0.17\textheight]{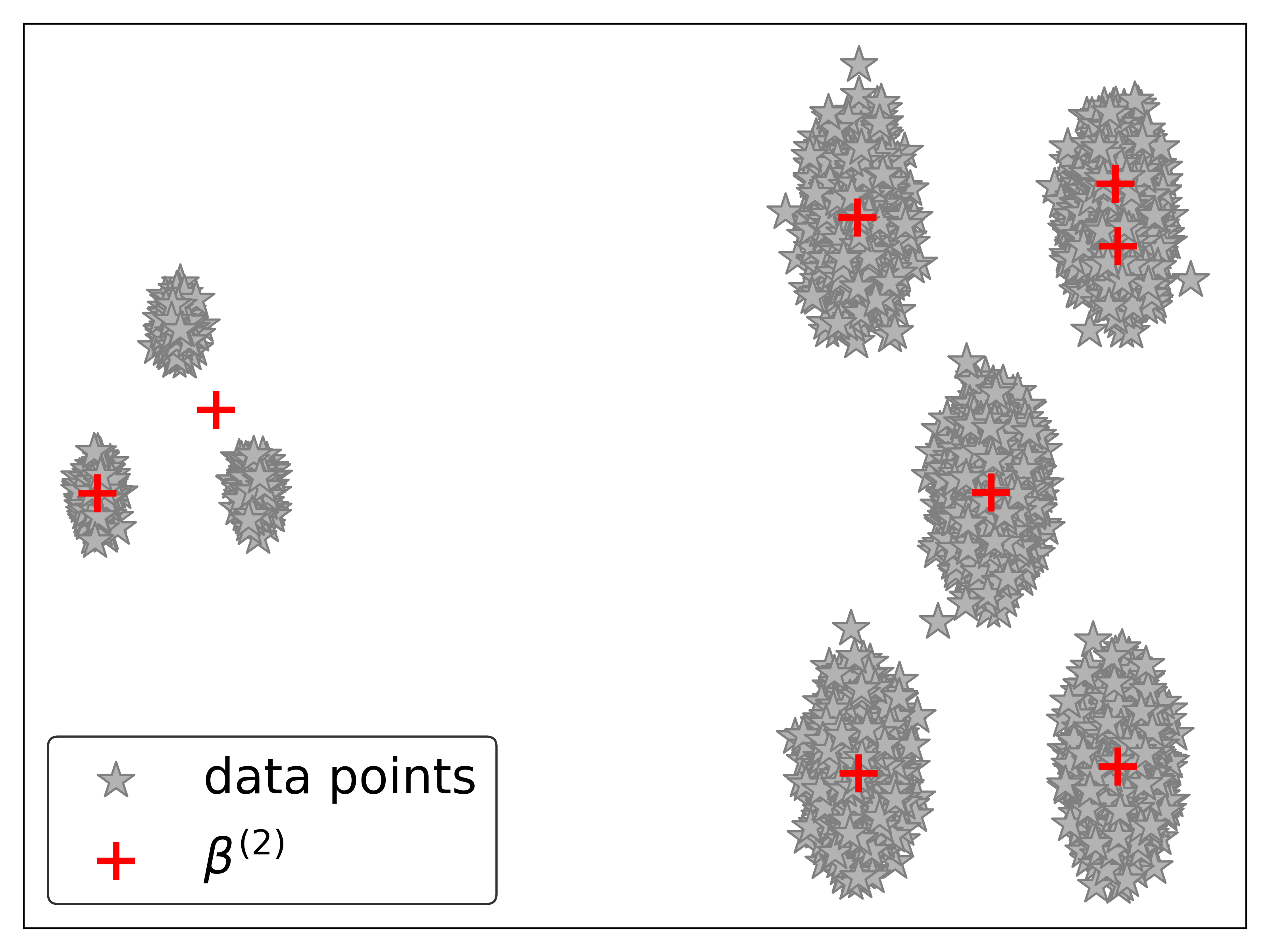}
    \label{fig:4g}
  }
  \subfloat[Detecting OFM and MFO ($\ell=2$)]{  
    \includegraphics[width=0.328\linewidth, height=0.17\textheight]{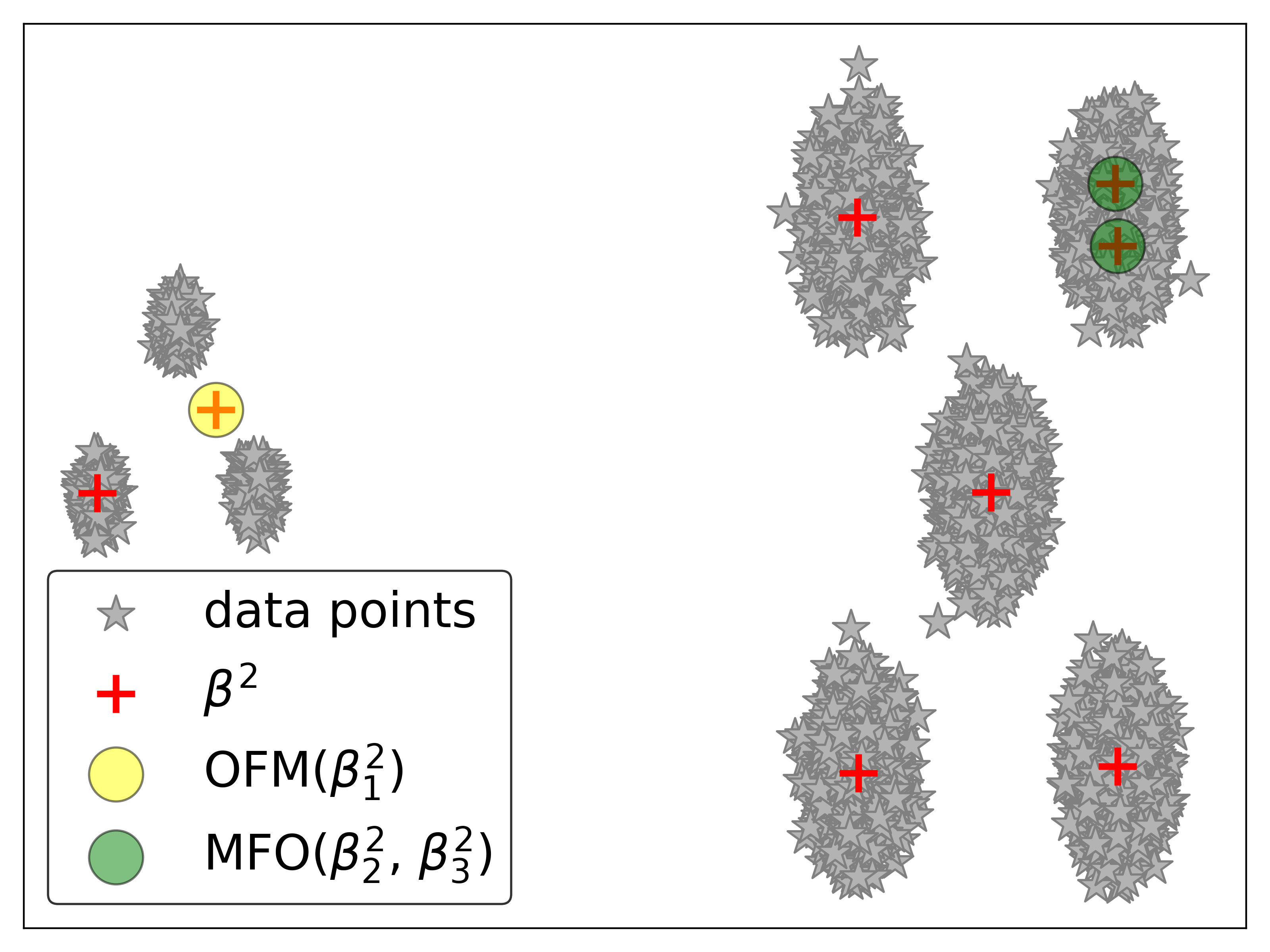}
    \label{fig:4h}
  }
  \subfloat[Solution by FFkm ($\ell=2$)]{  
    \includegraphics[width=0.328\linewidth, height=0.17\textheight]{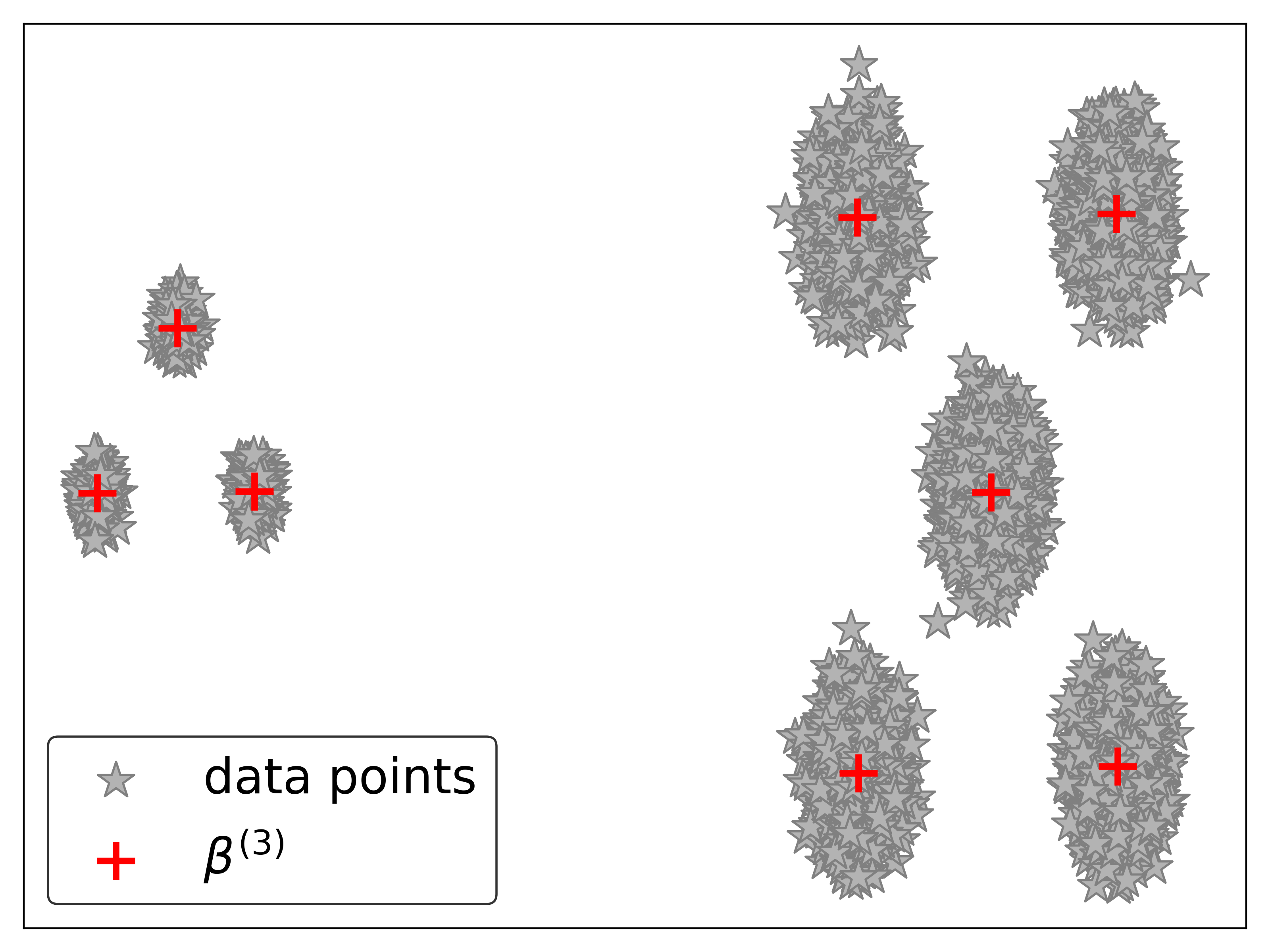}
    \label{fig:4i}
  }
  \caption{Comparison of the objective-based algorithm \kmeansmpx and the geometry-based algorithm FFkm (SD+PD). Here $\ell$ is the iteration number, and $\beta^{(\ell)}$ is the cluster centers in iteration $\ell$. (a) shows the solution by Lloyd \kmeans, which also serves as the initial solution for both \kmeansmpx and FFkm (SD+PD). (b--e) show each iteration of \kmeansmpx, while (f--i) show each iteration of FFkm (SD+PD).} \label{fig:4}
\end{figure*}

Figure~\ref{fig:4} shows a single trial of FFkm (SD+PD) and \kmeansmpx, both of which terminate after 2 iterations. Both algorithms use the same initial solution produced by Lloyd's \kmeans; see Figure~\ref{fig:4a}. The detection and solution steps of \kmeansmpx are shown in Figures~\ref{fig:4b} to~\ref{fig:4d}, and those for FFkm (SD+PD) in Figures~\ref{fig:4f} to~\ref{fig:4i}. As discussed in Section~\ref{sec:kmsmp}, each iteration of \kmeansmpx finds a suitable pair of Min-Cost cluster and Max-Gain cluster, followed by a minus-and-plus (or swap) step. These terminologies are used in Figures~\ref{fig:4b} and~\ref{fig:4d}.

From Figure~\ref{fig:4}, we see that \kmeansmpx converges to a sub-optimal local minimizer after two iterations ($\ell=2$). As shown in Figures~\ref{fig:4b} and \ref{fig:4d}, the reason is that the objective-based strategy of \kmeansmp focuses on finding the Min-Cost cluster, which is from the clusters on the right with many data points and large SSEs. As a result, the small clusters on the left are under-represented by the centers produced by \kmeansmp.   In contrast, FFkm (SD+PD) benefits from its geometry-based SD and PD detection subroutines, correctly clustering the data points on the left (see Figures~\ref{fig:4f} and \ref{fig:4h}). These results demonstrate the superior performance of FFkm (SD+PD) in clustering complex, unbalanced datasets and avoiding bad local minima that arise due to the heterogeneity of the data.

\vspace{-.03in}
\subsubsection{Ablation Study and Model Mis-specification}\label{sec:param}
\vspace{-.03in}
  
  We evaluate two variants of our framework: Fission-only \kmeans (Algorithm~\ref{alg:2}) with an under-specified initial number of clusters,  and Fusion-only \kmeans (Algorithm~\ref{alg:3}) with over-specification. This experiment serves as an ablation study on the roles of the fusion operation and the fission operation.
  We execute these two algorithms for 100 trials on each benchmark dataset discussed in Section \ref{exp:data}. For the under-parameterized Fission-only \kmeans, we consider $2$, $\lceil\frac{\ktrue}{4}\rceil$, and $\lceil\frac{\ktrue} {2}\rceil$ as the initial value of $k$. The standard deviation (SD) method is used to detect \ofm associations. For the over-parameterized Fusion-only \kmeans, the initial $k$ is $2\ktrue$, $3\ktrue$, and $4\ktrue$. The pairwise distance (PD) is used to detect \mfo associations. Both algorithms terminate with $\ktrue$ fitted clusters, and we use $\rho$-ratio as the performance metric. The experiment results are summarized in Table~\ref{tab:4} and Table~\ref{tab:5}.
  
  \begin{table}
      \centering
      \caption{Fission-only \kmeans (Algorithm~\ref{alg:2}) with Under-specified $k$}  \label{tab:4}
      \scalebox{0.75}{
      \begin{tabular}{|c | c | c | c | c | c | c | c | c | c | c | c |c |}
      \hline \textbf{Dataset} & 
     \multicolumn{3}{|c|}{\textbf{$k=k^\star$}} & 
      \multicolumn{3}{|c|}{\textbf{$k=2$}}  & \multicolumn{3}{|c|}{\textbf{$k=\lceil\frac{\ktrue}{4}\rceil$}} & \multicolumn{3}{|c|}{\textbf{$k=\lceil\frac{\ktrue}{2}\rceil$}}\\ 
      \cline{2-13}
      \small  & \small \textbf{SR(\%)} &\small \textbf{AMR} &\small \textbf{$\rho$-ratio} &\small \textbf{SR(\%)} &\small \textbf{AMR} &\small \textbf{$\rho$-ratio} &\small \textbf{SR(\%)} &\small \textbf{AMR} &\small \textbf{$\rho$-ratio} &\small \textbf{SR(\%)} &\small \textbf{AMR} &\small \textbf{$\rho$-ratio} \\ \hline
    A1      & 1   & 0.13 &  $1.67\pm0.31$ & 100 & 0.00 &  $1.00\pm0.00$ & 100 & 0.00 &  $1.00\pm0.00$ & 99  & 0.00 &  $1.00\pm0.02$\\
    A2      & 0   & 0.13 &  $1.69\pm0.24$ & 100 & 0.00 &  $1.00\pm0.00$ & 100 & 0.00 &  $1.00\pm0.00$ & 97  & 0.00 &  $1.00\pm0.02$\\
    A3      & 0   & 0.13 &  $1.73\pm0.25$ & 100 & 0.00 & 	$1.00\pm0.00$ & 100 & 0.00 & 	$1.00\pm0.00$ & 92  & 0.00 & 	$1.01\pm0.02$\\
    S1      & 1   & 0.14 &  $2.23\pm0.55$ & 100 & 0.00 & 	$1.00\pm0.00$ & 100 & 0.00 & 	$1.00\pm0.00$ & 100 & 0.00 & 	$1.00\pm0.00$\\
    S2      & 3   & 0.11 &  $1.56\pm0.39$ & 100 & 0.00 & 	$1.00\pm0.00$ & 100 & 0.00 & 	$1.00\pm0.00$ & 100 & 0.00 & 	$1.00\pm0.00$\\
    S3      & 8   & 0.09 &  $1.18\pm0.10$ & 100 & 0.00 & 	$1.00\pm0.00$ & 100 & 0.00 & 	$1.00\pm0.00$ & 100 & 0.00 & 	$1.00\pm0.00$\\
    S4      & 20  & 0.07 &  $1.10\pm0.08$ & 0   & 0.13 & 	$1.15\pm0.00$ & 0   & 0.13 & 	$1.15\pm0.00$ & 0   & 0.07 & 	$1.08\pm0.02$\\
    Unbalance   & 0   & 0.48 &  $9.62\pm1.62$ & 100 & 0.00 & 	$1.00\pm0.00$ & 100 & 0.00 & 	$1.00\pm0.00$ & 61  & 0.05 & 	$4.33\pm4.25$\\
    Dim032      & 1   & 0.21 &  $51.99\pm19.56$ & 100 & 0.00 & 	$1.00\pm0.00$ & 99  & 0.00 & 	$1.12\pm1.17$ & 68  & 0.02 & 	$5.25\pm6.75$\\
    Birch1      & 0   & 0.07 &  $1.20\pm0.04$ & 100 & 0.00 & 	$1.00\pm0.00$ & 100 & 0.00 & 	$1.00\pm0.00$ & 100 & 0.00 & 	$1.00\pm0.00$\\
       \hline
      \end{tabular}}
    \end{table}
  
   \begin{table}
      \centering
      \caption{Fusion-only \kmeans (Algorithm~\ref{alg:3}) with Over-specified $k$}  \label{tab:5}
      \scalebox{0.75}{
      \begin{tabular}{|c | c | c | c | c | c | c | c | c | c | c | c |c |}
      \hline \textbf{Dataset} & 
     \multicolumn{3}{|c|}{\textbf{$k=k^\star$}} & \multicolumn{3}{|c|}{\textbf{$k=2k^\star$}}  & \multicolumn{3}{|c|}{\textbf{$k=3k^\star$}} & \multicolumn{3}{|c|}{\textbf{$k=4k^\star$}}\\ 
      \cline{2-13}
      \small  & \small \textbf{SR(\%)} &\small \textbf{AMR} &\small \textbf{$\rho$-ratio} &\small \textbf{SR(\%)} &\small \textbf{AMR} &\small \textbf{$\rho$-ratio} &\small \textbf{SR(\%)} &\small \textbf{AMR} &\small \textbf{$\rho$-ratio} &\small \textbf{SR(\%)} &\small \textbf{AMR} &\small \textbf{$\rho$-ratio} \\ \hline
    A1      & 1   & 0.13 &  $1.67\pm0.31$ & 96  & 0.00 & 	$1.01\pm0.05$ & 100 & 0.00 & 	$1.00\pm0.00$ & 100 & 0.00 & 	$1.00\pm0.00$\\
    A2      & 0   & 0.13 &  $1.69\pm0.24$ & 88  & 0.00 & 	$1.01\pm0.04$ & 99  & 0.00 & 	$1.00\pm0.01$ & 100 & 0.00 &  $1.00\pm0.00$\\
    A3      & 0   & 0.13 &  $1.73\pm0.25$ & 89  & 0.00 &  $1.01\pm0.03$ & 100 & 0.00 &  $1.00\pm0.00$ & 100 & 0.00 &  $1.00\pm0.00$\\
    S1      & 1   & 0.14 &  $2.23\pm0.55$ & 97  & 0.00 &  $1.02\pm0.10$ & 100 & 0.00 &  $1.00\pm0.00$ & 100 & 0.00 &  $1.00\pm0.00$\\
    S2      & 3   & 0.11 &  $1.56\pm0.39$ & 100 & 0.00 &  $1.00\pm0.00$ & 100 & 0.00 &  $1.00\pm0.00$ & 100 & 0.00 &  $1.00\pm0.00$\\
    S3      & 8   & 0.09 &  $1.18\pm0.10$ & 97  & 0.00 &  $1.00\pm0.02$ & 100 & 0.00 &  $1.00\pm0.00$ & 100 & 0.00 &  $1.00\pm0.00$\\
    S4      & 20  & 0.07 &  $1.10\pm0.08$ & 85  & 0.01 &  $1.01\pm0.03$ & 50  & 0.03 &	$1.04\pm0.04$ & 25  & 0.05 &  $1.06\pm0.03$\\
    Unbalance   & 0   & 0.48 &  $9.62\pm1.62$ & 0	  & 0.44 &  $8.23\pm2.46$ & 4	  & 0.38 &  $6.89\pm2.99$ & 6	  & 0.34 &  $5.90\pm2.93$ \\
    Dim032      & 1   & 0.21 &  $51.99\pm19.56$ & 62  & 0.03 &  $6.88\pm8.48$ & 93  & 0.00 &  $1.92\pm3.40$ & 99  & 0.00 &  $1.10\pm1.01$ \\
    Birch1      & 0   & 0.07 &  $1.20\pm0.04$ & 100 & 0.00 & $1.00\pm0.00$ & 100 & 0.00 & $1.00\pm0.00$ & 100 & 0.00 & $1.00\pm0.00$ \\
       \hline
      \end{tabular}}
    \end{table}

  One observes that both algorithms returned near-optimal solutions for most datasets, with the exceptions of \textbf{S4}, \textbf{Unbalance}, and \textbf{DIM032}. 
  For Fission-only \kmeans, setting $k=2$ achieves the best performance, with all datasets except S4 having a 100\% success rate; the performance is slightly worse with $k=\lceil\frac{\ktrue}{2}\rceil$. 
  For Fusion-only \kmeans, all choices of $k$ lead to worse performance on S4 and Unbalance. We attribute this performance to the lack of the fission step as well as the use of the pairwise distance (PD) subroutine for the fusion step, which face challenges when the data has overlapping or unbalanced clusters.

\vspace{-.03in}
\subsubsection{Comparison with Related Algorithms}\label{exp:com}
\vspace{-.03in}
    
In Tables~\ref{tab:6} and~\ref{tab:7}, we compare the Success Rates (SR) and $\rho$-ratios, respectively of Algorithm~\ref{alg:1} (FFkm), Lloyd's algorithm, \kmeansmpx, and other related algorithms~\cite{pelleg2000x, morii2006clustering, lei2016robust}, using $100$ independent trials on the benchmark datasets. When the success rate is less than 100\%, the Average Missing Rate (AMR) is given in parentheses (cf.\ Section~\ref{sec:evalaution}). We present results for three combinations of subroutines for FFkm in Tables~\ref{tab:6} and~\ref{tab:7}. Results for all combinations of subroutines are available in Appendix~\ref{app:six_detect}. In Tables~\ref{tab:6} and~\ref{tab:7}, only SR and AMR are reported for the algorithms in \cite{lei2016robust} and \cite{pelleg2000x}, because they may use a different initial number of clusters than the ground truth clustering.

\begin{table}
  \centering
  \caption{Success rate (\%) comparison (best results in boldface)}   \label{tab:6}
  \scalebox{0.75}{
  \begin{tabular}{|c | c | c | c | c | c | c | c || c | c |}
  \hline 
  \textbf{Dataset} & \textbf{Lloyd} & \textbf{\kmeanspp}  & \textbf{\kmeansmpx} & \textbf{SD+OI} & \textbf{TD+OI} & \textbf{RD+PD} &  \textbf{\cite{morii2006clustering}} & \textbf{\cite{lei2016robust}} & \textbf{\cite{pelleg2000x}}  \\ 
  \hline
  A1        & 1 (0.13)  & 49 (0.03)    & \textbf{100}  & \textbf{100}  & \textbf{100}   & \textbf{100} & 99          & 66 (0.02)      & \textbf{100}  \\
  A2        & 0 (0.13)  & 6 (0.04)     & \textbf{100}  & \textbf{100}  & \textbf{100}   & \textbf{100} & 96          & 5 (0.07)       & \textbf{100} \\
  A3        & 0 (0.13)  & 4 (0.03)     & \textbf{100}  & \textbf{100}  & \textbf{100}   & \textbf{100} & 99          & 0 (0.14)       & 0 (0.92)\\
  S1        & 1 (0.14)  & 71 (0.02)    & \textbf{100}  & \textbf{100}  & \textbf{100}   & \textbf{100} & \textbf{100}& \textbf{100}   & \textbf{100}\\
  S2        & 3 (0.11)  & 61 (0.03)    & \textbf{100}  & \textbf{100}  & \textbf{100}   & \textbf{100} &  90 (0.01)  & \textbf{100}   & \textbf{100}\\
  S3        & 8 (0.09)  & 48 (0.04)    & \textbf{100}  & 89(0.00)      & 96(0.00)       & 89(0.01)     & 72  (0.02)  & \textbf{100}   & \textbf{100}\\
  S4        & 20 (0.07) & 52 (0.03)    & 93(0.00)      & 39(0.04)      & 90(0.01)       & 41(0.05     )& 29  (0.05)  & \textbf{100}   & 98 (0.01)\\
  Unbalance & 0 (0.48)  & 97           & \textbf{100}  & \textbf{100}  & \textbf{100}   & \textbf{100} & 17 (0.5)    & 99             & 50 (0.25)\\
  Dim032    & 1 (0.21)  & \textbf{100} & \textbf{100}  & \textbf{100}  & \textbf{100}   & \textbf{100} & 94          & \textbf{100}   & \textbf{100}\\
  birch1    & 0 (0.07)  & 0            & \textbf{100}  & \textbf{100}  &\textbf{100}    & \textbf{100} & \textbf{100}& 0 (0.08)       & 0 (0.96)  \\
  \hline
  \end{tabular}}
\end{table}

\begin{table}
  \centering
  \caption{$\rho$-ratio comparison (best results in boldface)}  \label{tab:7}
  \scalebox{0.88}{
  \begin{tabular}{|c | c | c | c | c | c | c | c |}
  \hline 
  \textbf{Dataset} & \textbf{Lloyd} & \textbf{\kmeanspp} & \textbf{\kmeansmpx} & \textbf{SD+OI} & \textbf{TD+OI} & \textbf{RD+PD} & \textbf{\cite{morii2006clustering}} \\ 
  \hline
  A1        & $1.67 \pm 0.31$ &  $1.12\pm0.14$  & $\bm{1.00\pm0.00}$ & $\bm{1.00\pm0.00}$ & $\bm{1.00\pm0.00}$ & $\bm{1.00\pm0.00}$&  $1.00 \pm 0.02 $   \\
  A2        & $1.69\pm0.24$   & $1.14\pm0.08$   & $\bm{1.00\pm0.00}$ & $\bm{1.00\pm0.00}$ & $\bm{1.00\pm0.00}$ & $\bm{1.00\pm0.00}$&  $1.01\pm 0.03 $  \\
  A3        & $1.73\pm0.25$   & $1.13\pm0.06$   & $\bm{1.00\pm0.00}$ & $\bm{1.00\pm0.00}$ & $\bm{1.00\pm0.00}$ & $\bm{1.00\pm0.00}$&  $1.00 \pm 0.01 $  \\
  S1        & $2.23\pm0.55$   & $1.16\pm0.25$   & $\bm{1.00\pm0.00}$ & $\bm{1.00\pm0.00}$ & $\bm{1.00\pm0.00}$ & $\bm{1.00\pm0.00}$&  $\bm{1.00 \pm 0.00} $  \\
  S2        & $1.56\pm0.39$   & $1.10\pm0.13$   & $\bm{1.00\pm0.00}$ & $\bm{1.00\pm0.00}$ & $\bm{1.00\pm0.00}$ & $\bm{1.00\pm0.00}$&  $1.02 \pm 0.06 $   \\
  S3        & $1.18\pm0.10$   & $1.07\pm0.07$   & $1.01\pm0.02$      & $1.01\pm0.04$      & $\bm{1.00\pm0.00}$ & $1.01\pm0.04$     &  $1.03 \pm 0.06 $   \\
  S4        & $1.10\pm0.08$   & $1.04\pm0.05$   & $1.01\pm0.01$      & $1.05\pm0.05$      & $1.01\pm0.02$      & $1.06\pm0.07$     &  $1.06 \pm 0.05 $   \\
  Unbalance & $9.62\pm1.62$   & $1.03\pm0.18$   & $\bm{1.00\pm0.00}$ & $\bm{1.00\pm0.00}$ & $\bm{1.00\pm0.00}$ & $\bm{1.00\pm0.00}$&  $5.14\pm 1.92 $   \\
  Dim032    & $51.99\pm19.56$ & $1.10\pm1.01$  & $\bm{1.00\pm0.00}$  & $\bm{1.00\pm0.00}$ & $\bm{1.00\pm0.00}$ & $\bm{1.00\pm0.00}$&  $1.70\pm2.76$  \\
  birch1    & $1.20\pm0.04$   & $1.09\pm0.02$  & $\bm{1.00\pm0.00}$  & $\bm{1.00\pm0.00}$ & $\bm{1.00\pm0.00}$ & $\bm{1.00\pm0.00}$&  $\bm{1.00\pm0.00}$  \\
   \hline
  \end{tabular}}
\end{table}

As seen from Tables~\ref{tab:6} and~\ref{tab:7}, FFkm with subroutines (SD+OI), (TD+OI), (RD+PD) reliably recovers the ground truth on all benchmark datasets except S3 and S4. Given that these datasets vary in the number, shapes and separation of clusters, this performance demonstrates the robustness and effectiveness of FFkm. The S3 and S4 datasets have highly overlapping clusters. As demonstrated in Section \ref{sec:param}, in these scenarios the geometry-based subroutines---Standard Deviation (SD), $\epsilon$-Radius (RD), and Pairwise Distance (PD)---may not be effective. Instead, using the subroutines Total Deviation (TD) and Objective Increment (OI), one can improve the success rate of 41\% for the geometry-based FFkm (RD+PD) to 90\% for the objective-based FFkm (TD+OI). Among other \kmeans algorithms, only \kmeansmpx achieves a success rate and $\rho$-ratio comparable to FFkm. Specifically, through an objective-based strategy, both \kmeansmpx and FFkm (TD+OI) achieve over 90\% SR for the dataset S4, with \kmeansmpx somewhat higher than FFkm (TD+OI). For the dataset S3, FFkm (TD+OI) achieves a better $\rho$-ratio than \kmeansmpx.

In light of the observations above on \kmeansmp, a more detailed comparison is given in Table~\ref{tab:8} between Lloyd's, \kmeanspp, \kmeansmp and FFkm. Following the experimental setup in \kmeansmp paper \cite{ismkhan2018ik}, 50 independent trials were conducted and the sum of squared errors (SSE) was calculated. (The dataset Unbalance and DIM032 were not considered in \cite{ismkhan2018ik}.) We report the results quoted from \cite{ismkhan2018ik} (which reports two decimal places) as well as those from our own re-implementation of \kmeansmpx and FFkm (with four decimal places). The best SSE for each dataset is highlighted in bold. As observed in Table~\ref{tab:8}, all six FFkm subroutines significantly outperform both Lloyd's algorithm and \kmeanspp. Except for datasets S3 and S4, FFkm (SD+PD) performs better than or equally well as  \kmeansmp; similar performance can be seen from FFkm subroutines with SD+OI, TD+PD, and TD+OI. For the dataset S3, FFkm with the objective-based subroutines  TD+OI achieves the best SSE; for S4, \kmeansmpx (our re-implementation) has the best SSE. These two datasets have highly overlapping clusters, which present challenges for geometry-based algorithms, whereas the TD+OI subroutine may mitigate these challenges.
Finally, we note that FFkm (RD+PD) did not achieve the ideal SSE, possibly due to the radius settings discussed in Section \ref{sec:disc}.

\begin{table}
  \centering
  \caption{Sum of squared errors (SSE) comparison between results taken from \cite{ismkhan2018ik} the and FFkm (best results in boldface)}\label{tab:8}
\scalebox{0.68}{
  \begin{tabular}{|c | c | c | c | c | c | c | c | c| c| c| c|c|}
  \hline 
  \textbf{Dataset} & \multicolumn{3}{c|}{\textbf{SSE of \kmeansmp in paper ~\cite{ismkhan2018ik}}}  & \multicolumn{7}{c|}{\textbf{SSE computed using the same machine}}  \\ 
  \cline{2-11}
  \textbf{} & \textbf{Lloyd} & \textbf{\kmeanspp} & \textbf{\kmeansmp} &  \textbf{\kmeansmpx} & \textbf{SD+PD} & \textbf{SD+OI} & \textbf{TD+PD} & \textbf{TD+OI} & \textbf{RD+PD}& \textbf{RD+OI}\\ 
  \hline
  A1        & 2.08E+10 & 1.73E+10 & 1.22E+10 & \textbf{1.2146E+10} & \textbf{1.2146E+10} & \textbf{1.2146E+10} & \textbf{1.2146E+10} & 1.2149E+10 & 1.2186E+10 & 1.2225E+10 \\
  A2        & 3.47E+10 & 2.99E+10 & 2.03E+10 & 2.0311E+10 & \textbf{2.0287E+10} & \textbf{2.0287E+10} & \textbf{2.0287E+10} & \textbf{2.0287E+10} & 2.2053E+10 & 2.2389E+10 \\
  A3        & 5.23E+10 & 4.29E+10 & 2.90E+10 & 2.8943E+10 & \textbf{2.8938E+10} & \textbf{2.8938E+10} & \textbf{2.8938E+10} & \textbf{2.8938E+10} & 3.0684E+10 & 3.1048E+10 \\
  S1        & 1.85E+13 & 1.67E+13 & 8.92E+12 & \textbf{8.9177E+12} & \textbf{8.9177E+12} & \textbf{8.9177E+12} & \textbf{8.9177E+12} & \textbf{8.9176E+12} & 13.3627E+12 & 11.2245E+12 \\
  S2        & 2.01E+13 & 1.82E+13 & 1.33E+13 & 1.3290E+13 & \textbf{1.3279E+13} & \textbf{1.3279E+13} & \textbf{1.3279E+13} & \textbf{1.3279E+13} & 1.4659E+13 & 1.4810E+13 \\
  S3        & 1.94E+13 & 1.90E+13 & 1.69E+13 &1.6893E+13& 1.7641E+13 & 1.7146E+13 & 1.7365E+13 & \textbf{1.6889E+13} & 1.8280E+13 & 1.8447E+13 \\
  S4        & 1.70E+13 & 1.67E+13 & 1.57E+13 & \textbf{1.5740E+13} & 1.6330E+13 & 1.6330E+13 & 1.6332E+13 &1.5748E+13 & 1.6866E+13 & 1.6327E+13 \\
  Birch1    & 1.13E+14 & 1.06E+14 & 9.28E+13 & 9.2815E+13 &\textbf{9.2772E+13} & \textbf{9.2772E+13} & \textbf{9.2772E+13} & \textbf{9.2772E+13} & 9.5754E+13 & 9.5725E+13\\
   \hline
  \end{tabular}}
\end{table}

Combining with the findings from Section~\ref{sec:synthetic}, we observe that relying on a single strategy (geometry- or objective-based) often leads to limited performance. FFkm demonstrates effectiveness on a broad spectrum of datasets as well as the flexibility to incorporate different strategies/subroutines. The geometry-based FFkm (SD+PD) can handle highly unbalanced datasets, while the objective-based FFkm (TD+OI) is effective with overlapping datasets. In general, the choice of detection routines in FFkm can adapt to the specific characteristics of the dataset.

\vspace{-.03in}
\subsection{Experiments on Real-world data}\label{sec:real-world}
\vspace{-.03in}

\begin{figure}
 \centering\includegraphics[width=\textwidth]{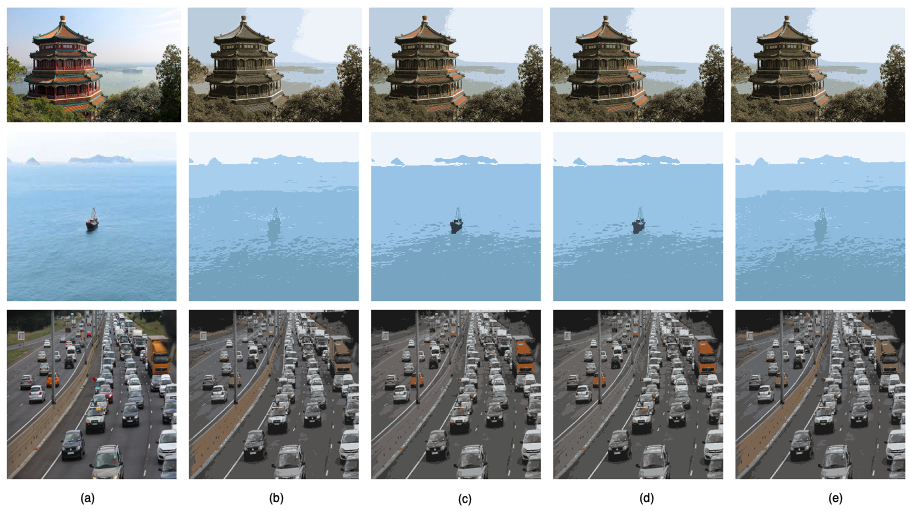} 
    \caption{Results of unsupervised color quantization using different numbers of clusters ($k$ values for colors).  The images are organized in rows from top to bottom: Palace ($k=8$), Boat ($k=4$), Traffic ($k=8$).  Each column shows (a) The original image ($k$ is provided in Table~\ref{tab:9}).  (b) The result of Lloyd \kmeans.  (c) The result of FFkm (SD+PD).  (d) The result of FFkm (TD+OI).  (e) The result of \kmeansmpx.}
  \label{fig:5}
\end{figure}

\begin{table}
      \centering\caption{Results of unsupervised color quantization using different numbers of clusters ($k$ values for colors)}\label{tab:9} 
      \scalebox{0.75}{
      \begin{tabular}{|c | c | c | c | c | c | c | c |}
      \hline 
      \textbf{Image} & \textbf{dimension} & \textbf{\# points} &   \textbf{\# of clusters} & \multicolumn{4}{c|}{\textbf{SSE}}  \\ 
      \cline{5-8}
                       &             &                   &  \textbf{(colors)} & \textbf{Lloyd} &\textbf{FFkm (SD+PD)} & \textbf{FFkm (TD+OI)} & \textbf{\kmeansmp$^\star$}\\ 
      \hline
      Palace (k=8)    & 3 & 273280 & 966154 & $2874.01$ & $2660.61$ & $\bm{2655.26}$ & $2685.48$ \\
      Boat (k=4)      & 3 & 65536  & 5498   & $286.38$  & $\bm{270.70}$  & $273.67$  & $286.38$ \\
      Traffic (k=8)   & 3 & 65536  & 29792  & $364.25$  & $\bm{348.30}$  & $\bm{348.30}$  & $364.25$ \\
      Flower (k=8)    & 3 & 65536  & 49178  & $801.34$  & $796.99$       & $\bm{789.26}$ & $801.34$\\
      Red Panda (k=8) & 3 & 65536  & 52215  & $815.62$  & $\bm{781.07}$  & $809.82$  & $815.62$ \\
      Babbon (k=10)   & 3 & 65536  & 59951  & $859.97$  & $\bm{855.33}$ & $859.78$ & $859.97$\\
      Peppers (k=10)   & 3 & 65536  & 53527  & $699.78$  & $685.87$ & $\bm{685.81}$ & $699.78$\\
      Earch (k=5)     & 3 & 65536  & 28917  & $896.44$  & $756.08$  & \bm{$755.80$}  & $758.37$ \\
      \hline
      \end{tabular}}
    \end{table}
    
\paragraph{Color Quantization} Color Quantization (CQ) is a fundamental image processing operation that reduces the number of distinct colors in a true-color image. CQ has been used to benchmark and visualize clustering algorithms \cite{abernathy2022incremental}. 
For \kmeans-based image segmentation applications, \cite{wibowo2023performances} has considered Flower, Red Panda, and Traffic images from the Bing Image Downloader library, and \cite{mujica2020color} has explored the Berkeley Segmentation Data Set 500 (BSD500). The work \cite{abernathy2022incremental, frackiewicz2022efficient} has considered applications of CQ using images like Baboon, Parrots, Fruits, and Peppers. Here we consider images of Palace, Boat, Traffic, Flower, Red Panda, Earth, Baboon, and Peppers. The Palace image is from \cite{sklearn2011}; all other images are from \cite{HelkinTestImages} and resized to \(256 \times 256\) with quantized RGB color values.

In Figure~\ref{fig:5}, we show results for a subset of the images for the objective-based algorithms Lloyd's, FFkm (TD+OI), and \kmeansmpx and the geometry-based algorithm FFkm (SD+PD). (For the rest of the images, the difference between results produced by different algorithms are not discernible by human eyes; these results are given in Appendix \ref{app:images}). As seen from Figure~\ref{fig:5}, both FFkm (SD+PD) and FFkm (TD+OI) can clearly reveal a red roof in the image Palace, a boat in image Boat, and an orange truck in image Traffic, demonstrating the algorithms' ability to avoid local minima and finding a better solution than Lloyd's and \kmeansmpx. Table~\ref{tab:9} summarizes the properties of the original image and the objective values (SSE) of different algorithms. The best SSEs, marked in bold, are achieved by the geometry-based algorithm FFkm (SD+PD) for images Boat, Traffic, Red Panda, and Baboon, and by the objective-based algorithm FFkm (TD+OI) for images Palace, Traffic, Flower, Peppers, and Earth. These results demonstrate the effectiveness of the proposed FFkm framework in real-world scenarios.

\noindent\paragraph{Other real-world datasets} For further comparison, we consider five additional real-world datasets that are widely used in the literature on \kmeans for evaluating alternatives of Lloyd's algorithms~\cite{ismkhan2018ik}. The IRIS dataset, which includes three types of flowers (true $\ktrue=3$) with four features, is used to study the classification accuracy of \kmeans \cite{chakraborty2020comparative}. The Human Activity Recognition Using Smartphones (HAR) dataset has six recorded activities ($\ktrue=6$). In the ISOLET dataset, one aims to predict which letter was spoken ($\ktrue=26$). The Letter Recognition (LR) dataset contains 26 capital letters from the English alphabet ($\ktrue=26$). In the Musk version 2 dataset, one seeks to predict whether new molecules will be musks or non-musks ($\ktrue=2$).

We compare the objective values (SSE) achieved by Lloyd's \kmeans, FFkm (SD+PD), FFkm (TD+OI), and \kmeansmpx.
With $50$ independent trials, the average SSEs and execution times are reported in Table~\ref{tab:10}. For each dataset, the table also gives the number of ground truth clusters $\ktrue$ and data size (number of data points $\times$ number of features/dimensions). Note that although the Musk dataset has 168 features, only 166 integer features are utilized. 
As observed in Table~\ref{tab:10}, FFkm (SD+PD), FFkm (TD+OI), and \kmeansmpx all achieve better SSE than Lloyd's \kmeans; in particular, they avoid the local minima that trap Lloyd's \kmeans. Among them, the objective-based algorithms FFkm (TD+OI) and \kmeansmpx have the best SSE in three and two datasets, respectively. FFkm (SD+PD), which is geometry-based, achieves the best results in the IRIS and Musk datasets. Moreover, FFkm (SD+PD) reports the fastest execution time (excluding Lloyd's \kmeans) in the HAR, ISOLET, LR, and Musk datasets.

\begin{table}
  \centering
  \caption{Results of SSE and average execution time in seconds (Time (s)) (best results of SSE in boldface)}\label{tab:10}
  \scalebox{0.8}{
  \begin{tabular}{|c | c | c | c | c | c | c | c | c | c | c| c|}
  \hline 
  \textbf{Dataset} & \textbf{Data Size} & \multicolumn{2}{c|}{\textbf{Lloyd}}     &  \multicolumn{2}{c|}{\textbf{FFkm (SD+PD)}}   &  \multicolumn{2}{c|}{\textbf{FFkm (TD+OI)}}  &  \multicolumn{2}{c|}{\textbf{\kmeansmpx}} \\ 
  \cline{3-10}
                   &                    & \textbf{ave. SSE}          & \textbf{Time(s)}   & \textbf{ave. SSE}        & \textbf{Time(s)}          & \textbf{ave. SSE}        & \textbf{Time(s)}         & \textbf{ave. SSE} & \textbf{Time(s)} \\ 
  \hline
  Iris ($\ktrue=3$)       & $150\times 3$      & $9.308\times 10^1$   &  0.0021          & $\bm{7.885\times 10^1}$  & 0.0053                 & $\bm{7.885\times 10^1}$  & 0.0048                & $\bm{7.885\times 10^1}$  &0.0032 \\
  HAR ($\ktrue=6$)        & $10299\times 561$  & $1.851\times 10^5$   &  0.3021          & $1.851\times 10^5$  & 0.8207                 & $1.825\times 10^5$  & 2.4546                & $\bm{1.823\times 10^5}$  &4.0628 \\
  ISOLET ($\ktrue=26$)    & $7797\times 617$   & $4.465\times 10^5$   &  0.6423          & $4.454\times 10^5$  & 1.5728                 & $\bm{4.406\times 10^5}$  & 4.8862                & $4.414\times 10^5$  &17.8700 \\
  LR ($\ktrue=26$)        & $20000\times 16$   & $6.201\times 10^5$   &  0.0874          & $6.196\times 10^5$  & 0.2435                 & $\bm{6.183\times 10^5}$  & 0.9948                & $6.184\times 10^5$  &0.9166 \\
  Musk ($\ktrue=2$)       & $6598\times 166$   & $6.090\times 10^9$   &  0.0247          & $\bm{5.922\times 10^9}$  & 0.1670                 & $5.923\times 10^9$  & 0.1978                & $5.983\times 10^9$  &0.3165\\
  \hline
  \end{tabular}}
\end{table}

\vspace{-.06in}
\section{Discussion}
\label{sec:disc}
\vspace{-.05in}

\paragraph{Alternative Improvements on the Unbalanced Dataset} The {Unbalance} dataset presents unique challenges due to the significant differences in cluster densities and sizes. These characteristics sometimes lead to poor performance when the number of clusters is over-specified, e.g., for the Fusion-only \kmeans. In particular, this dataset contains two groups of clusters that are far from each other. The first group contains 3 clusters with smaller variances but more data points, while the second group contains 5 clusters with larger variances but fewer data points. Consequently, with random initialization, more initial centers (than the true number of clusters) are associated with the first group of clusters, and fewer initial centers are associated with the second group. Since Fusion-only \kmeans only performs fusion steps, the fitted centers can over-merge, leading to a suboptimal solution. This issue can be mitigated by further increasing the initial cluster number, so that there are a sufficient number of initial centers in the right part of the clusters. Indeed, as seen in Table \ref{tab:11} (Appendix \ref{app:discuss}). Fusion-only \kmeans with $k=20\ktrue$ achieves the best performance.

\noindent\paragraph{The $\epsilon$-Radius (RD) and  the S4 Dataset} The choice of radius in the $\epsilon$-radius (RD) method plays an important role on the highly overlapping S4 dataset. In an additional experiment, the radius is set to $\delta \cdot r$, where $\delta$ is selected from $\{0.001, 0.01, 0.1, 0.25, 0.5, 1, 2, 5\}$. For the merge step (2b) in FFkm, we merge two detected centroids (using either the PD or OI method) to their mean. Table~\ref{tab:S4com} (Appendix \ref{app:discuss}) shows a solution produced by the geometry-based algorithm FFkm (RD+PD), which finds the ground truth clusters in this challenging S4 dataset. This result demonstrates that decreasing the $\epsilon$ of RD can be helpful in such scenarios.

\noindent\paragraph{Further Optimization of FFkm} Spectral clustering is another class of clustering algorithms with strong performance. One such method is the rank-constrained spectral clustering with flexible embedding proposed by \cite{li2018rank}. This approach introduces rank constraints to the clustering problem, producing an accurate and robust low-dimensional embedding with enhanced geometric properties. Using such embeddings within our FFkm framework can be particularly beneficial for detecting mis-specified clusters and may lead to better performance. A related work \cite{li2018dynamic} discusses dynamic affinity graph construction for spectral clustering using multiple features, which can capture complex data relationships. 
Incorporating dynamic affinity graphs into our FFkm framework has the potential of enhancing the efficacy of our non-local operations and the quality of the initial clustering solution, both are crucial for FFkm.


\vspace{-.06in}
\section{Conclusion}
\label{sec:con}
\vspace{-.05in}

We propose a flexible framework for \kmeans problem by harnessing the geometric structure of local solutions. It provides a theoretical foundation for future work to design detection routines for varying cluster distributions. Future work includes analyzing the Fission-Fission \kmeans under the more general setting with empirical success: (i) clusters could be of different sizes and shapes; (ii) clusters have moderate or heavy overlaps with each other.
\section*{Acknowledgement}
 J.\ Hong and Y.\ Zhang acknowledges support from the Department of Electrical and Computer Engineering at Rutgers University. W.\ Qian and Y.\ Chen are partially supported by NSF grants CCF-1657420, CCF-1704828 and CCF-2233152.

{
\bibliographystyle{IEEEtran}
\bibliography{clustering}

\begin{thebibliography}{10}
\providecommand{\url}[1]{#1}
\csname url@samestyle\endcsname
\providecommand{\newblock}{\relax}
\providecommand{\bibinfo}[2]{#2}
\providecommand{\BIBentrySTDinterwordspacing}{\spaceskip=0pt\relax}
\providecommand{\BIBentryALTinterwordstretchfactor}{4}
\providecommand{\BIBentryALTinterwordspacing}{\spaceskip=\fontdimen2\font plus
\BIBentryALTinterwordstretchfactor\fontdimen3\font minus \fontdimen4\font\relax}
\providecommand{\BIBforeignlanguage}[2]{{%
\expandafter\ifx\csname l@#1\endcsname\relax
\typeout{** WARNING: IEEEtran.bst: No hyphenation pattern has been}%
\typeout{** loaded for the language `#1'. Using the pattern for}%
\typeout{** the default language instead.}%
\else
\language=\csname l@#1\endcsname
\fi
#2}}
\providecommand{\BIBdecl}{\relax}
\BIBdecl

\bibitem{lloyd1982least}
S.~Lloyd, ``Least squares quantization in pcm,'' \emph{IEEE Transactions on Information Theory}, vol.~28, no.~2, pp. 129--137, 1982.

\bibitem{arthur2007k}
D.~Arthur and S.~Vassilvitskii, ``k-means++: The advantages of careful seeding,'' in \emph{The 8th annual ACM-SIAM symposium on Discrete algorithms}, 2007, pp. 1027--1035.

\bibitem{celebi2013comparative}
M.~E. Celebi, H.~A. Kingravi, and P.~A. Vela, ``A comparative study of efficient initialization methods for the k-means clustering algorithm,'' \emph{Expert Systems with Applications}, vol.~40, no.~1, pp. 200--210, 2013.

\bibitem{pelleg2000x}
D.~Pelleg and A.~W. Moore, ``X-means: Extending k-means with efficient estimation of the number of clusters,'' in \emph{ICML 2000}, 2000, pp. 727--734.

\bibitem{muhr2009automatic}
M.~Muhr and M.~Granitzer, ``Automatic cluster number selection using a split and merge k-means approach,'' in \emph{The 20th International Workshop on Database and Expert Systems Application}.\hskip 1em plus 0.5em minus 0.4em\relax IEEE, 2009, pp. 363--367.

\bibitem{morii2006clustering}
F.~Morii and K.~Kurahashi, ``Clustering by the k-means algorithm using a split and merge procedure,'' in \emph{SCIS \& ISIS 2006}.\hskip 1em plus 0.5em minus 0.4em\relax Japan Society for Fuzzy Theory and Intelligent Informatics, 2006, pp. 1767--1770.

\bibitem{lei2016robust}
J.~Lei, T.~Jiang, K.~Wu, H.~Du, G.~Zhu, and Z.~Wang, ``Robust k-means algorithm with automatically splitting and merging clusters and its applications for surveillance data,'' \emph{Multimedia Tools and Applications}, vol.~75, no.~19, pp. 12\,043--12\,059, 2016.

\bibitem{franti2000randomised}
P.~Fr{\"a}nti and J.~Kivij{\"a}rvi, ``Randomised local search algorithm for the clustering problem,'' \emph{Pattern Analysis \& Applications}, vol.~3, no.~4, pp. 358--369, 2000.

\bibitem{franti2006iterative}
P.~Fr{\"a}nti and O.~Virmajoki, ``Iterative shrinking method for clustering problems,'' \emph{Pattern Recognition}, vol.~39, no.~5, pp. 761--775, 2006.

\bibitem{ismkhan2018ik}
H.~Ismkhan, ``Ik-means-+: An iterative clustering algorithm based on an enhanced version of the k-means,'' \emph{Pattern Recognition}, vol.~79, pp. 402--413, 2018.

\bibitem{sun2016complete}
J.~Sun, Q.~Qu, and J.~Wright, ``Complete dictionary recovery over the sphere {I}: Overview and the geometric picture,'' \emph{IEEE Transactions on Information Theory}, vol.~63, no.~2, pp. 853--884, 2016.

\bibitem{sun2018geometric}
------, ``A geometric analysis of phase retrieval,'' \emph{Foundations of Computational Mathematics}, vol.~18, no.~5, pp. 1131--1198, 2018.

\bibitem{ge2016matrix}
R.~Ge, J.~D. Lee, and T.~Ma, ``Matrix completion has no spurious local minimum,'' \emph{Advances in Neural Information Processing Systems}, pp. 2981--2989, 2016.

\bibitem{zhang2018structured}
Y.~Zhang, H.-W. Kuo, and J.~Wright, ``Structured local optima in sparse blind deconvolution,'' \emph{IEEE Trans. Inf. Theory}, vol.~66, no.~1, pp. 419--452, 2020.

\bibitem{qu2019analysis}
Q.~Qu, Y.~Zhai, X.~Li, Y.~Zhang, and Z.~Zhu, ``Analysis of the optimization landscapes for overcomplete representation learning,'' \emph{arXiv:1912.02427}, 2019.

\bibitem{ge2020optimization}
R.~Ge and T.~Ma, ``On the optimization landscape of tensor decompositions,'' \emph{Math. Program.}, pp. 1--47, 2020.

\bibitem{zhang2020symmetry}
Y.~Zhang, Q.~Qu, and J.~Wright, ``From symmetry to geometry: Tractable nonconvex problems,'' \emph{arXiv:2007.06753}, 2020.

\bibitem{jin2021unique}
D.~Jin, X.~Bing, and Y.~Zhang, ``Unique sparse decomposition of low rank matrices,'' \emph{Advances in Neural Information Processing Systems}, 2021.

\bibitem{xu2016global}
J.~Xu, D.~J. Hsu, and A.~Maleki, ``Global analysis of expectation maximization for mixtures of two gaussians,'' in \emph{Advances in Neural Information Processing Systems}, 2016, pp. 2676--2684.

\bibitem{daskalakis2017ten}
C.~Daskalakis, C.~Tzamos, and M.~Zampetakis, ``Ten steps of {EM} suffice for mixtures of two {G}aussians,'' in \emph{Conference on Learning Theory}, 2017, pp. 704--710.

\bibitem{qian2019global}
W.~Qian, Y.~Zhang, and Y.~Chen, ``Global convergence of least squares {EM} for demixing two log-concave densities,'' in \emph{Advances in Neural Information Processing Systems}, 2019, pp. 4795--4803.

\bibitem{chaudhuri2009learning}
K.~Chaudhuri, S.~Dasgupta, and A.~Vattani, ``Learning mixtures of gaussians using the k-means algorithm,'' \emph{arXiv:0912.0086}, 2009.

\bibitem{jin2016local}
C.~Jin, Y.~Zhang, S.~Balakrishnan, M.~J. Wainwright, and M.~I. Jordan, ``Local maxima in the likelihood of gaussian mixture models: Structural results and algorithmic consequences,'' in \emph{Advances in Neural Information Processing Systems}, 2016, pp. 4116--4124.

\bibitem{qian2021structures}
W.~Qian, Y.~Zhang, and Y.~Chen, ``Structures of spurious local minima in k-means,'' \emph{IEEE Transactions on Information Theory}, vol.~68, no.~1, pp. 395--422, 2021.

\bibitem{chen2020likelihood}
Y.~Chen and X.~Xi, ``Likelihood landscape and local minima structures of {Gaussian} mixture models,'' \emph{arXiv:2009.13040}, 2020.

\bibitem{arthur2006slow}
D.~Arthur and S.~Vassilvitskii, ``How slow is the k-means method?'' in \emph{The 22nd Annual Symposium on Computational Geometry}, 2006, pp. 144--153.

\bibitem{ball1967promenade}
G.~H. Ball and D.~J. Hall, ``Promenade-an on-line pattern recognition system,'' Stanford Research Institute, Menlo Park, CA, Tech. Rep., 1967.

\bibitem{ueda2000smem}
N.~Ueda, R.~Nakano, Z.~Ghahramani, and G.~E. Hinton, ``{SMEM} algorithm for mixture models,'' \emph{Neural Comput.}, vol.~12, no.~9, pp. 2109--2128, 2000.

\bibitem{kaukoranta1998iterative}
T.~Kaukoranta, P.~Fr{\"a}nti, and O.~Nevalainen, ``Iterative split-and-merge algorithm for vector quantization codebook generation,'' \emph{Optical Engineering}, vol.~37, no.~10, pp. 2726--2732, 1998.

\bibitem{frigui1997clustering}
H.~Frigui and R.~Krishnapuram, ``Clustering by competitive agglomeration,'' \emph{Pattern Recognition}, vol.~30, no.~7, pp. 1109--1119, 1997.

\bibitem{franti2009efficiency}
P.~Fr{\"a}nti and O.~Virmajoki, ``On the efficiency of swap-based clustering,'' in \emph{International Conference on Adaptive and Natural Computing Algorithms}.\hskip 1em plus 0.5em minus 0.4em\relax Springer, 2009.

\bibitem{fritzke1997lbg}
B.~Fritzke, ``The {LBG-U} method for vector quantization--an improvement over lbg inspired from neural networks,'' \emph{Neural Processing Letters}, vol.~5, no.~1, pp. 35--45, 1997.

\bibitem{zhu2021improved}
A.~Zhu, Z.~Hua, Y.~Shi, Y.~Tang, and L.~Miao, ``An improved {K}-means algorithm based on evidence distance,'' \emph{Entropy}, vol.~23, no.~11, p. 1550, 2021.

\bibitem{katsavounidis1994new}
I.~Katsavounidis, C.-C.~J. Kuo, and Z.~Zhang, ``A new initialization technique for generalized lloyd iteration,'' \emph{IEEE Signal Process. Lett.}, vol.~1, no.~10, 1994.

\bibitem{hartigan1979algorithm}
J.~A. Hartigan and M.~A. Wong, ``Algorithm as 136: A k-means clustering algorithm,'' \emph{Journal of the Royal Statistical Society. Series C (Applied Statistics)}, vol.~28, no.~1, pp. 100--108, 1979.

\bibitem{astrahan1970speech}
M.~M. Astrahan, ``Speech analysis by clustering, or the hyperphoneme method,'' Stanford University, Dept. of Computer Science, Tech. Rep., 1970.

\bibitem{cao2009initialization}
F.~Cao, J.~Liang, and G.~Jiang, ``An initialization method for the k-means algorithm using neighborhood model,'' \emph{Computers \& Mathematics with Applications}, vol.~58, no.~3, pp. 474--483, 2009.

\bibitem{franti2018k}
P.~Fr{\"a}nti and S.~Sieranoja, ``K-means properties on six clustering benchmark datasets,'' \emph{Applied Intelligence}, vol.~48, no.~12, pp. 4743--4759, 2018.

\bibitem{franti2014centroid}
P.~Fr{\"a}nti, M.~Rezaei, and Q.~Zhao, ``Centroid index: Cluster level similarity measure,'' \emph{Pattern Recognition}, vol.~47, no.~9, pp. 3034--3045, 2014.

\bibitem{abernathy2022incremental}
A.~Abernathy and M.~E. Celebi, ``The incremental online {K-Means} clustering algorithm and its application to color quantization,'' \emph{Expert Systems with Applications}, vol. 207, p. 117927, 2022.

\bibitem{wibowo2023performances}
F.~W. Wibowo \emph{et~al.}, ``Performances of chimpanzee leader election optimization and {K-Means} in multilevel color image segmentation,'' in \emph{2023 6th International Seminar on Research of Information Technology and Intelligent Systems (ISRITI)}.\hskip 1em plus 0.5em minus 0.4em\relax IEEE, 2023, pp. 409--414.

\bibitem{mujica2020color}
D.~M{\'u}jica-Vargas, J.~M.~V. Kinani, and J.~de~J.~Rubio, ``Color-based image segmentation by means of a robust intuitionistic fuzzy c-means algorithm,'' \emph{International Journal of Fuzzy Systems}, vol.~22, no.~3, 2020.

\bibitem{frackiewicz2022efficient}
M.~Frackiewicz and H.~Palus, ``Efficient color quantization using superpixels,'' \emph{Sensors}, vol.~22, no.~16, 2022.

\bibitem{sklearn2011}
F.~Pedregosa \emph{et~al.}, ``Scikit-learn: Machine learning in {P}ython,'' Available: \url{https://scikit-learn.org/stable/auto_examples/cluster/plot_color_quantization.html}, 2011, accessed: 2024.

\bibitem{HelkinTestImages}
K.~Helkin, ``Test images collection,'' Available: \url{https://www.hlevkin.com/hlevkin/06testimages.htm}, accessed: 2024.

\bibitem{chakraborty2020comparative}
A.~Chakraborty, N.~Faujdar, A.~Punhani, and S.~Saraswat, ``Comparative study of k-means clustering using iris data set for various distances,'' in \emph{2020 10th International Conference on Cloud Computing, Data Science \& Engineering (Confluence)}.\hskip 1em plus 0.5em minus 0.4em\relax IEEE, 2020, pp. 332--335.

\bibitem{li2018rank}
Z.~Li, F.~Nie, X.~Chang, L.~Nie, H.~Zhang, and Y.~Yang, ``Rank-constrained spectral clustering with flexible embedding,'' \emph{IEEE Transactions on Neural Networks and Learning Systems}, vol.~29, no.~12, pp. 6073--6082, 2018.

\bibitem{li2018dynamic}
Z.~Li, F.~Nie, X.~Chang, Y.~Yang, C.~Zhang, and N.~Sebe, ``Dynamic affinity graph construction for spectral clustering using multiple features,'' \emph{IEEE Transactions on Neural Networks and Learning Systems}, vol.~29, no.~12, pp. 6323--6332, 2018.

\end{thebibliography}
}

\clearpage
\newpage 
\appendix

\section{Proof for Theorem 3.1}
\label{app:proof}

\begin{proof}
The $\kmeans$ objective function is 
\begin{align}
\label{app:eq-kmeans-obj}
G_{n}(\bbeta) = \frac{1}{n}\sum_{t=1}^{n} \min_{s\in [k^*]} \|\mx_t - \bbeta_s\|^2.
\end{align} 
Note that the output solution of the algorithm is no worse than the input solution, it suffices to consider the case in which the initial solution $\bbeta^{(0)}$ is a non-degenerate local minimum with a strictly worse objective value than that of a global optimal solution. Note that we fit the data with $\ktrue$ cluster, there exists a fitted cluster with \ofm association. Assume this is not the case, then $\bbeta^{(0)}$ only has \mfo associations. This means that for each true cluster center, there is at least a fitted center close to it. Since $k=\ktrue$, each true cluster must be associated with exactly one fitted cluster. In particular, $\bbeta^{(0)}$ is global optimal, a contradiction.  Similarly, $\bbeta^{(0)}$ must have two fitted clusters with \mfo associations. 

We first characterize the amount of improvement of the \kmeans objective in each fission-and-fusion step. Specifically, Lemma~\ref{lem:improve} shows that 
\[
G_{n}(\bbeta^{(\frac{1}{2})})\le G_{n}(\bbeta^{(0)})-\frac{\Deltamin^{2}}{72\ktrue}.
\]
Since the Lloyd step (Step 3 of the algorithm) does not increase the \kmeans objective (Lemma~\ref{lem:monotone}), it follows that 
\[
G_{n}(\bbeta)^{(1)}) \le G_{n}(\bbeta^{(\frac{1}{2})})\le G_{n}(\bbeta^{(0)})-\frac{\Deltamin^{2}}{72\ktrue}.
\]

Thus after each the first iteration of the Fission-Fussion algorithm, the decrement in the \kmeans objective function is at least a constant. Moreover, $\bbeta^{(1)}$ is non-degenerate local minimum from the argument in Lemma~\ref{lem:monotone}. We can then apply the above argument recursively, obtaining
\[
G_{n}(\bbeta^{(\ell)})\le G_{n}(\bbeta^{(0)})- \ell \cdot \frac{\Deltamin^{2}}{72\ktrue}.
\] 

Note that the \kmeans objective value is upper bounded uniformly at every non-degenerate local minimum by Lemma~\ref{lem:upper}, and lower bounded by $0$. In particular, 
\begin{align*}
0 \leq G_{n}(\bbeta^{(0)}) \le 4\Delta_{\max}^{2}.
\end{align*} 
This means that there are at most $\frac{288\ktrue \Deltamax^2}{\Deltamin^2}$ Fission-Fussion iterations in the algorithm. At the termination of the algorithm, we must obtain a global optimal solution. 

\end{proof}

\begin{lemma}[Upper Bound]
\label{lem:upper}
Under the assumption of Theorem 3.1, suppose $\bbeta$ is a non-degenerate local minimum of $G_{n}$. 
Then 
\[
G_{n}(\bbeta)\le4\Deltamax^{2}.
\]
\end{lemma}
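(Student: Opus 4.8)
The plan is to bound the objective termwise: I will show that at a non-degenerate local minimum every data point lies within a controlled distance of every fitted center, so each summand $\min_{s\in[\ktrue]}\|\mx_t-\bbeta_s\|^2$ in $G_n$ is small, and then convert this into the stated bound using the separation assumption $\Deltamin/r\ge 30$.

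First I would invoke the first-order (stationarity) characterization of local minima of the \kmeans objective: if $\bbeta$ is a local minimum of $G_n$, then for each $j$ the center $\bbeta_j$ equals the mean of the data points in its Voronoi cell $C_j$. Indeed, if this failed for some $j$ with $C_j$ nonempty, then moving $\bbeta_j$ toward the centroid of $C_j$ strictly decreases $\frac1n\sum_{t:\mx_t\in C_j}\|\mx_t-\bbeta_j\|^2$ while leaving the other terms unchanged (and re-optimizing the assignment can only help), contradicting local optimality. Non-degeneracy guarantees every $C_j$ is nonempty, so this applies to all $j$, and hence each $\bbeta_j$ lies in the convex hull of $\{\mx_1,\dots,\mx_n\}$.

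Next I would bound the diameter of the data. Under the ball mixture model~\eqref{eq: SBM} every $\mx_t$ lies in some $\ball_{s}(r)$, so for data points $\mx_t,\mx_{t'}$ in balls $s,s'$ we get $\|\mx_t-\mx_{t'}\|\le\|\bbeta^\star_{s}-\bbeta^\star_{s'}\|+2r\le\Deltamax+2r$; thus the data, and therefore its convex hull, has diameter at most $\Deltamax+2r$. Combined with the previous step, $\|\mx_t-\bbeta_j\|\le\Deltamax+2r$ for all $t$ and $j$, so $\min_{s\in[\ktrue]}\|\mx_t-\bbeta_s\|^2\le(\Deltamax+2r)^2$ for every $t$, and averaging gives $G_n(\bbeta)\le(\Deltamax+2r)^2$. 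Finally, $\Deltamin/r\ge 30$ together with $\Deltamin\le\Deltamax$ yields $2r\le\Deltamax/15\le\Deltamax$, hence $\Deltamax+2r\le 2\Deltamax$ and $G_n(\bbeta)\le 4\Deltamax^2$. The only genuine subtlety is the first step — arguing that a local minimum (not merely a Lloyd fixed point) satisfies the centroid condition and that non-degeneracy excludes empty cells; the remaining steps are elementary geometry plus the assumed separation.
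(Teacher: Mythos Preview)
Your proposal is correct and follows essentially the same route as the paper: use non-degeneracy plus the centroid/stationarity condition to place every $\bbeta_j$ in the convex hull of the data, bound the distance from any data point to any center by $\Deltamax+2r$, and then invoke $\Deltamin/r\ge 30$ to conclude $G_n(\bbeta)\le 4\Deltamax^2$. The only cosmetic difference is that the paper writes out the convex combination of ball centers explicitly rather than phrasing it as a diameter bound on the convex hull.
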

\begin{proof}
When $\bbeta$ is a non-degenerate local minimum solution, each fitted cluster is non-empty. From Lemma 2 of \cite{qian2021structures}, each $\bbeta_i$ is at the center of the  corresponding Voronoi set. In particular, each $\bbeta_i$ must be in the convex hull of all the balls. For each $i\in [\ktrue]$, we conveniently represent $\bbeta_i$ as
\begin{align*}
\bbeta_i = \sum_{s\in [\ktrue]} \alpha_{i,s} (\bbetastar_s+\mv_{i,s}),
\end{align*}
where $\left\{\mv_{i,s}\right\}_{s\in [\ktrue]}$ are vectors in $\real^d$ with norm bounded by $r$; $\alpha_{i,s}$s' are non-negative scalars satisfying $\sum_{s\in [\ktrue]} \alpha_{i,s} =1$. 
For each $t\in [n]$, assuming without loss of generality that $\mx_{t}$ is generated from $f_{1}^*$, we have
\begin{align}
\big \|\mx_t - \bbeta_i \big \|= & \Big \Vert \bbetastar_1 + \muu - \sum_{s\in [\ktrue]} \alpha_s (\bbetastar_s+\mv_s) \Big \Vert \label{eq:represent}\\
                      = & \Big \|\sum_{s\in [\ktrue]} \alpha_s ( \bbetastar_1  - \bbetastar_s) + \sum_{s\in [\ktrue]} \alpha_s  (\muu - \mv_s) \Big\| \nonumber \\
                      \leq & \Deltamax + 2r \leq 2\Deltamax \label{eq:triangle}.
\end{align}
In equation~\ref{eq:represent}, we represent $\mx_t = \bbetastar_1 + \muu$ for some vector $\muu\in \real^d$ with norm bounded by $r$. In inequality~\ref{eq:triangle}, we apply the triangle inequality and the SNR assumption that $\frac{\Deltamax}{r}\ge \frac{\Deltamin}{r}\ge 30$. 

It follows that $G_{n}(\bbeta)\le\frac{1}{n}\sum_{t\in[n]}\left(r+\Deltamax\right)^{2}\le 4\Deltamax^{2}$.

\end{proof}

\begin{lemma}[Improvement]
\label{lem:improve} Under the assumption of Theorem 3.1, for each $\ell=0, 1,2,\ldots$,
\[
G_{n}(\bbeta^{(\ell+\frac{1}{2})})\le G_{n}(\bbeta^{(\ell)})-\frac{\deltamin^{2}}{72\ktrue}.
\]
Moreover, each $\bbeta^{(\ell)}$ is non-degenerate. 
\end{lemma}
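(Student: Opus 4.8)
The plan is to analyze one fission--fusion step by bounding separately (a) the decrease in $G_n$ produced by the fission (Step 2a) and (b) the increase produced by the fusion (Step 2b), and then showing the former beats the latter by the claimed margin; the Lloyd pass (Step 3) can only help, by Lemma~\ref{lem:monotone}. First I would fix a high-probability event: Hoeffding's inequality applied to the counts $N_s\sim\mathrm{Binomial}(n,1/\ktrue)$ of points drawn from component $s$, together with a union bound over $s\in[\ktrue]$, gives $n/(2\ktrue)\le N_s\le 3n/(2\ktrue)$ for every $s$ with probability at least $1-2\ktrue\exp(-n/(2\ktrue^2))$ --- precisely the failure probability in Theorem~\ref{thm:main} --- and all estimates below are made on this event. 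I would also recall from the structure theorem of \cite{qian2021structures} that a non-degenerate local minimum with $k=\ktrue$ has at least one \ofm cluster, whose Voronoi cell $C_i$ is (because $\Deltamin/r\ge 30$) a union of at least two entire balls, and at least two \mfo clusters whose Voronoi cells lie inside a single ball $\ball_j$; porting these cell-localization facts from the population objective to the empirical $G_n$ is the first thing to nail down.

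For the fission bound, let $\bbeta_i$ be the detected \ofm center. Since $\bbeta_i$ is the centroid of $C_i\supseteq\ball_a\cup\ball_b$ with $\|\bbeta^\star_a-\bbeta^\star_b\|\ge\Deltamin$, the triangle inequality forces $\|\bbeta_i-\bbeta^\star_a\|\ge\Deltamin/2$ for (say) $\ball_a$, so every data point in $\ball_a$ is at distance $\ge\Deltamin/2-r\ge\Deltamin/3$ from $\bbeta_i$, and the contribution of $C_i$ to $nG_n$ exceeds $|\ball_a|\,\Deltamin^2/9$. Replacing $\bbeta_i$ by the optimal $2$-means solution on $C_i$ does at least as well as the $2$-partition ``$\ball_a$ versus $C_i\setminus\ball_a$'': charging $\ball_a$ to $\bbeta^\star_a$ costs $\le|\ball_a|r^2$, and charging the remainder to its own centroid costs no more than charging it to $\bbeta_i$; hence the contribution of $C_i$ drops by at least $|\ball_a|(\Deltamin^2/9-r^2)\ge|\ball_a|\,\Deltamin^2/10$. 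Dividing by $n$ and using $|\ball_a|\ge n/(2\ktrue)$, the fission lowers $G_n$ by at least $\Deltamin^2/(20\ktrue)$.

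For the fusion bound, the two merged centers have Voronoi cells contained in a single ball $\ball_j$, so their average still lies in $\ball_j$ and every point of $C_{(2)}\cup C_{(3)}\subseteq\ball_j$ is within $2r$ of it; the contribution of those points to $nG_n$ therefore rises from something nonnegative to at most $|\ball_j|\cdot 4r^2\le 6nr^2/\ktrue$, i.e.\ $G_n$ increases by at most $6r^2/\ktrue$. Combining, the net change after fission and fusion is at most $-\Deltamin^2/(20\ktrue)+6r^2/\ktrue$, and since $r\le\Deltamin/30$ makes $6r^2\le\Deltamin^2/150$, this is at most $-\Deltamin^2/(72\ktrue)$ with room to spare; as Step 3 cannot increase $G_n$, the displayed inequality follows (and holds a fortiori at $\bbeta^{(\ell+1)}$). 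Both assertions of the lemma are then proved together by induction on $\ell$: $\bbeta^{(0)}$ is non-degenerate by hypothesis, and if $\bbeta^{(\ell)}$ is non-degenerate the fission step produces two clusters each owning a whole ball while the fusion step leaves a cluster owning a whole ball and leaves the untouched clusters non-empty, so $\bbeta^{(\ell+\frac12)}$ has no almost-empty cluster and Lemma~\ref{lem:monotone} keeps $\bbeta^{(\ell+1)}$ a non-degenerate local minimum.

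The step I expect to be the real obstacle is the fission estimate, for two reasons. First, one must carefully verify that in the \emph{finite-sample} ball model (the structure theorem being stated for the population objective) a \ofm Voronoi cell really is, under $\Deltamin/r\ge30$, a union of whole balls with no leakage across the large gaps, and symmetrically that the \mfo cells sit strictly inside one ball; this is where the empirical analysis of \cite{qian2021structures} together with the count bounds $n/(2\ktrue)\le N_s\le 3n/(2\ktrue)$ is needed. Second, the constant $1/72$ has to survive the chain of inequalities, which forces the $2$-means upper bound to be taken against the ``one ball versus the rest'' reference partition (analyzed via the parallel-axis identity) rather than a cruder split; everything else --- the Hoeffding bound, the triangle-inequality lower bound on $\|\bbeta_i-\bbeta^\star_a\|$, and the $2r$-diameter argument for the merge --- is routine once those two structural facts are established.
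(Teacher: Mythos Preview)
Your proposal is correct and follows essentially the same architecture as the paper's proof: condition on the Hoeffding event for the cluster counts, invoke the structural theorem so that the detected \ofm cell contains $\ge 2$ full balls and the detected \mfo pair sits inside a single ball, decompose $G_n(\bbeta^{(\ell)})-G_n(\bbeta^{(\ell+\frac12)})$ into a fission gain $T_+$ minus a fusion loss $T_-$, and show $T_+-T_-\ge \Deltamin^2/(72\ktrue)$; the non-degeneracy is handled by the same induction.

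The only substantive difference is in how $T_+$ is lower-bounded. The paper does \emph{not} compare against your ``one ball versus the rest'' partition; instead it takes the concrete split $\bbeta_<=\bbeta_1$ and $\bbeta_>=\mx_0:=\arg\max_{\mx\in\vor(\bbeta_1)}\|\mx-\bbeta_1\|$, first proving $\|\mx_0-\bbeta_1\|\ge\Deltamin/3$ via the triangle inequality through two balls, and then bounding $\sum_{t\in C_1}\big(\|\mx_t-\bbeta_1\|^2-\|\mx_t-\mx_0\|^2\big)$ using $\|\mx_t-\mx_0\|\le 2r$. Your route---use $\|\bbeta_i-\bbetastar_a\|\ge\Deltamin/2$ for some ball $a$, then charge $\ball_a$ to $\bbetastar_a$ and the remainder to its own centroid---is a cleaner variational argument (the centroid-minimizes-SSE identity replaces the farthest-point trick) and even yields a slightly better constant for $T_+$. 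Both are valid upper bounds on the $2$-means cost used in Step~2a, so either suffices. Your fusion bound $T_-\le 6r^2/\ktrue$ matches the paper's (which loosens it to $8r^2/\ktrue$), and your identification of the finite-sample localization of Voronoi cells as the one genuine obstacle is accurate---the paper handles it by citing the structural results of \cite{qian2021structures} together with Lemma~\ref{lem:equal_size}.
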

\begin{proof}
Using the structural result of a local minimum, if $\bbeta^{(\ell)}$ is a non-degenerate local minima that is not globally optimal, then $\bbeta^{(\ell)}$ must have both \ofm and \mfo association. Without loss of generality, suppose that in the local minimizer $\bbeta^{(\ell)}$, the center
$\bbeta_{1}$ fits multiple true centers $\bbetastar_{1},\ldots,\bbetastar_{m}$,
where $m\ge2$, and the centers $\bbeta_{2}$ and $\bbeta_{3}$ (potentially
along with some other centers) fit the true center $\bbetastar_{m+1}$.
For $i=1,\ldots,s+1$, let $C_{i}$ index the data points generated
from $f_{i}^{*}$.

Suppose that in the new solution $\bbeta^{(\ell+\frac{1}{2})},$ $\bbeta_{1}$
is split into two centers $\bbeta_{<}$ and $\bbeta_{>}$, where $\bbeta_{<}=\bbeta_1$ and $\bbeta_{>}$
is moved to a data point $\mb x_{0}=\arg\max_{\mb x\in\vor(\bbeta_{1})}\left\Vert \mb x-\bbeta_{1}\right\Vert $, where $\vor(\bbeta_{1})$
is the Voronoi set associated with $\bbeta_{1}$;
moreover, $\bbeta_{2}$ and $\bbeta_{3}$ are merged into one center
$\bbeta_{=}=\frac{1}{2}\left(\bbeta_{2}+\bbeta_{3}\right)$. Suppose
that $\mb x_{0}$ is from the cluster $C_{1}$ generated from $f_{1}^{*}$.

We illustrate the above notations in Figure \ref{fig:split_merge}.

\begin{figure*}
\noindent \begin{centering}
\includegraphics[width=0.7\textwidth]{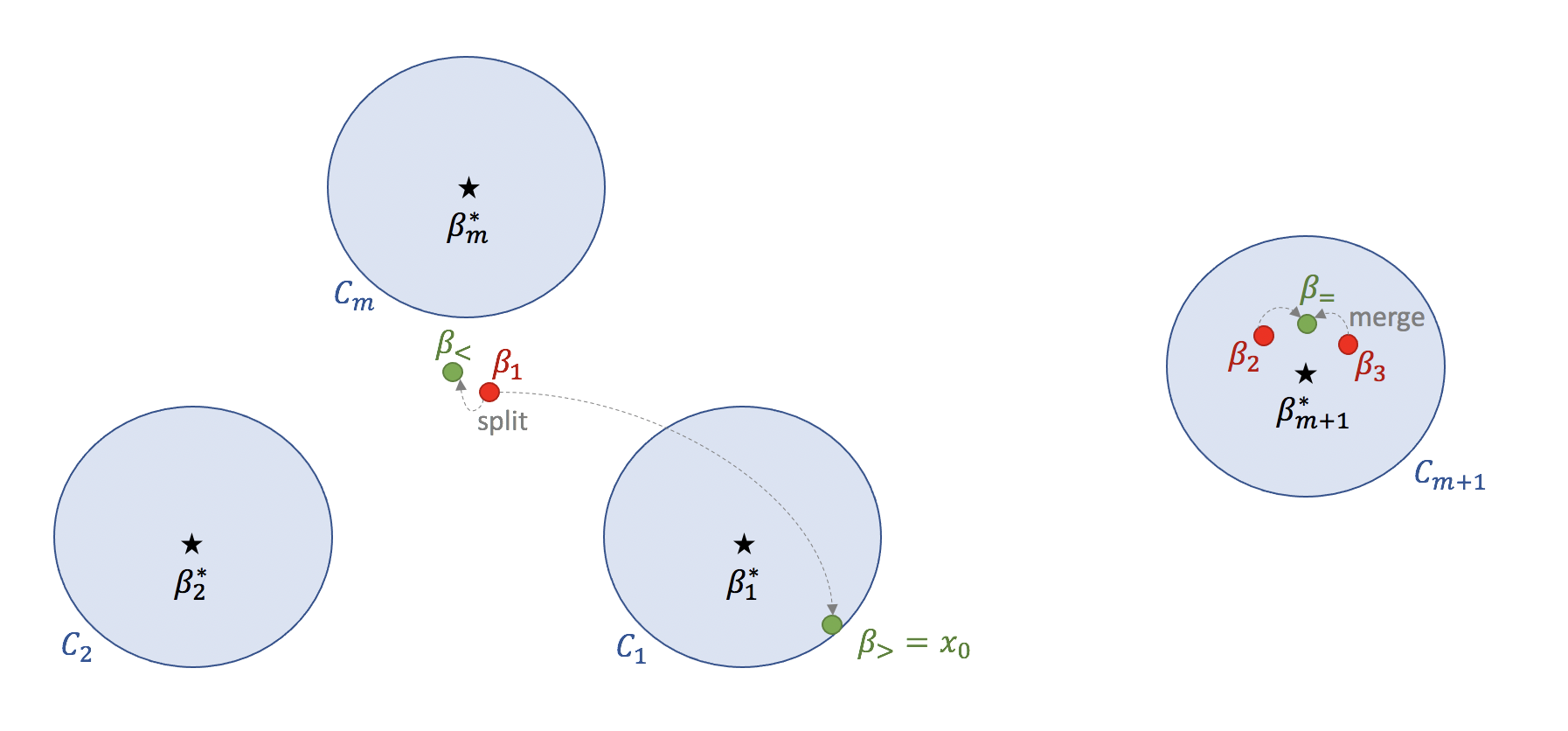}\bigskip{}
\par\end{centering}
\caption{Illustration for Lemma~\ref{lem:improve}\label{fig:split_merge}}
\end{figure*}

The objective values before and after Fission and Fusion are 
\begin{align*}
&G_{n}(\bbeta^{(\ell)}) \\
&=\frac{1}{n}\Bigg(\sum_{i\in[m]}\sum_{t\in C_{i}}\left\Vert \mb x_{t}-\bbeta_{1}\right\Vert ^{2} \\
&\quad +\sum_{t\in C_{m+1}}\min\left\{ \left\Vert \mb x_{t}-\bbeta_{2}\right\Vert ^{2},\left\Vert \mb x_{t}-\bbeta_{3}\right\Vert ^{2}\right\} \Bigg)+B
\end{align*}
and
\begin{align*}
&G_{n}(\bbeta^{(\ell+\frac{1}{2})})\\
&\le\frac{1}{n}\Bigg(\sum_{i\in[m]}\sum_{t\in C_{i}}\min\left\{ \left\Vert \mb x_{t}-\bbeta_{<}\right\Vert ^{2},\left\Vert \mb x_{t}-\bbeta_{>}\right\Vert ^{2}\right\} \\
&\quad +\sum_{t\in C_{m+1}}\left\Vert \mb x_{t}-\bbeta_{=}\right\Vert ^{2}\Bigg)+B,
\end{align*}
where $B$ is the part of the objective value that involves data points indexed by $\left\{C_i\right\}_{i>m+1}$. $B$ does not change
by Fission and Fusion step. Therefore, the improvement can be lower bounded
as
\begin{align*}
G_{n}(\bbeta^{(\ell)})-G_{n}(\bbeta^{(\ell+\frac{1}{2})})
\ge T_{+} - T_{-} \,,
\end{align*}
where
\begin{align*}
T_{+} := &\frac{1}{n}\left(\sum_{i\in[m]}\sum_{t\in C_{i}}\left\Vert \mb x_{t}-\bbeta_{1}\right\Vert ^{2}\right. \\
&\quad \quad \left.-\sum_{i\in[m]}\sum_{t\in C_{i}}\min\left\{ \left\Vert \mb x_{t}-\bbeta_{<}\right\Vert ^{2},\left\Vert \mb x_{t}-\bbeta_{>}\right\Vert ^{2}\right\} \right)
\end{align*}
and
\begin{align*}
T_{-} := & \frac{1}{n}\left(\sum_{t\in C_{m+1}}\left\Vert \mb x_{t}-\bbeta_{=}\right\Vert ^{2}\right. \\
&\quad \quad \left.-\sum_{t\in C_{m+1}}\min\left\{ \left\Vert \mb x_{t}-\bbeta_{2}\right\Vert ^{2},\left\Vert \mb x_{t}-\bbeta_{3}\right\Vert ^{2}\right\} \right).
\end{align*}
Note that the $T_{+}$ term is the improvement in objective due to
splitting $\bbeta_{1}$ into $\bbeta_{<}$ and $\bbeta_{>}$, where
the $T_{-}$ term is the potential loss in objective value due to
merging $\bbeta_{2}$ and $\bbeta_{3}$ into $\bbeta_{=}$. Let us
control these terms separately.

First consider the $T_{+}$ term. We claim that $\left\Vert \mb x_{0}-\bbeta_{1}\right\Vert \ge\frac{1}{3}\Deltamin.$
Proof of claim: Let $\mb x'$ be generated from $f_{2}^{*}$. By triangle
inequality we have 
\begin{align*}
\Deltamin & \le\left\Vert \bbetastar_{1}-\bbetastar_{2}\right\Vert \\
 & \le\left\Vert \bbetastar_{1}-\mb x_{0}\right\Vert +\left\Vert \mb x_{0}-\bbeta_{1}\right\Vert +\left\Vert \bbeta_{1}-\mb x'\right\Vert +\left\Vert \mb x'-\bbetastar_{2}\right\Vert \\
 & \le r +\left\Vert \mb x_{0}-\bbeta_{1}\right\Vert +\left\Vert \mb x_{0}-\bbeta_{1}\right\Vert +r,
\end{align*}
where the last step follows from the choice of $\mb x_{0}$. Rearranging
the above equation gives $$\left\Vert \mb x_{0}-\bbeta_{1}\right\Vert \ge\frac{\Deltamin}{2}-r \ge\frac{\Deltamin}{3}$$ since $\frac{\Deltamin}{r} \ge 30$. 
thereby proving the claim. 


From the choice of $\bbeta_{<}$ and $\bbeta_{>}$, we have 
\begin{align}
T_{+} &=\frac{1}{n}\left(\sum_{i\in[m]}\sum_{t\in C_{i}}\left\Vert \mb x_{t}-\bbeta_{1}\right\Vert ^{2}\right. \label{eq:choice_beta} \\
&\quad \left.-\sum_{i\in[m]}\sum_{t\in C_{i}}\min\left\{ \left\Vert \mb x_{t}-\bbeta_{1}\right\Vert ^{2},\left\Vert \mb x_{t}-\mb x_{0}\right\Vert ^{2}\right\} \right) 
\end{align}
Noting that 
$$\sum_{t\in C_{i}}\left\Vert \mb x_{t}-\bbeta_{1}\right\Vert ^{2}-\sum_{t\in C_{i}}\min\left\{ \left\Vert \mb x_{t}-\bbeta_{1}\right\Vert ^{2},\left\Vert \mb x_{t}-\mb x_{0}\right\Vert ^{2}\right\}$$ is non-negative for $i=2,\ldots, m$,
we obtain
\begin{align}
T_{+} 
 & \ge\frac{1}{n}\left(\sum_{t\in C_{1}}\left\Vert \mb x_{t}-\bbeta_{1}\right\Vert ^{2}-\sum_{t\in C_{1}}\left\Vert \mb x_{t}-\mb x_{0}\right\Vert ^{2}\right).
 \label{eq:dropC}
\end{align}
It follows that
\begin{align}
T_{+}
&\ge\frac{1}{n}\left(\sum_{t\in C_{1}}\left(\left\Vert \mb x_{0}-\bbeta_{1}\right\Vert -\left\Vert \mb x_{t}-\mb x_{0}\right\Vert \right)^{2}\right. \nonumber \\
&\quad \left.-\sum_{t\in C_{1}}\left\Vert \mb x_{t}-\mb x_{0}\right\Vert ^{2}\right) \label{eq:triangle_ineq} \\
& \ge\frac{1}{n}\left(\sum_{t\in C_{1}}\left(\frac{1}{3}\Deltamin-2r\right)^{2}-\sum_{t\in C_{1}}(2r)^{2}\right) \label{eq:claim}\\
 & \ge\frac{|C_1|}{n}\cdot\frac{1}{18}\Deltamin^{2} \ge \frac{\Deltamin^2}{36\ktrue} \label{eq:h-snr}. 
\end{align}
Above, inequality~\eqref{eq:triangle_ineq} follows from the triangle inequality. Inequality~\eqref{eq:claim} follows from the proved claim, and the fact that both $\mx_0$ and $\mx_t$ are from the ball centered at $\bbetastar_1$. Finally, inequality~\eqref{eq:h-snr} follows from $\frac{\Deltamin}{r}\ge 30$ and the probabilistic bound on $|C_i|$ from Lemma~\ref{lem:equal_size}.

Turning to the $T_{-}$ term, we have 
\begin{align*}
T_{-} & =\frac{1}{n}\left(\sum_{t\in C_{m+1}}\left\Vert \mb x_{t}-\bbeta_{=}\right\Vert ^{2}\right.\\
 &\quad -\left.\sum_{t\in C_{m+1}}\min\left\{ \left\Vert \mb x_{t}-\bbeta_{2}\right\Vert ^{2},\left\Vert \mb x_{t}-\bbeta_{3}\right\Vert ^{2}\right\} \right) \\
 & \le\frac{1}{n}\sum_{t\in C_{m+1}}\left\Vert \mb x_{t}-\frac{1}{2}\left(\bbeta_{2}+\bbeta_{3}\right)\right\Vert ^{2}\\
 & \le\frac{|C_{m+1}|}{n}\cdot(2r)^{2}. 
\end{align*}
In the last inequality, we note that $\bbeta_{2}$ and $\bbeta_{3}$ are both inside the ball since they split the ball centered at $\bbetastar_{m+1}$. Therefore, $\frac{\bbeta_{2}+\bbeta_{3}}{2}$ is also inside the ball.  Applying the upper bound on $|C_{m+1}|$ from Lemma~\ref{lem:equal_size}, we have that $T_{-} \leq \frac{8r^2}{\ktrue}$. 

Combining pieces, we have 
\begin{align*}
G_{n}(\bbeta^{(\ell)})-G_{n}(\bbeta^{(\ell+\frac{1}{2})}) & \ge T_{+}-T_{1}\\
 & \ge\frac{1}{36\ktrue}\Deltamin^{2}-\frac{8r^2}{\ktrue}\\
 & \ge\frac{1}{72\ktrue}\Deltamin^{2},
\end{align*}
thereby proving the lemma. We note that the modified solution $\bbeta^{(\ell+\frac{1}{2})}$ is non-degenerate: $\bbeta_{>}$ is at least associated with the data points generated by the ball centered at $\bbetastar_{1}$, and $\bbeta_{<}$ is associated the rest of the data points. After applying Lloyd's algorithm, $\bbeta^{(\ell+1)}$ is also non-degenerate. 
\end{proof}
\begin{lemma}[Monotonicity]
\label{lem:monotone}For each $\ell=1,2,\ldots,$ we have
\[
G_{n}(\bbeta^{(\ell+1)})\le G_{n}(\bbeta^{(\ell+\frac{1}{2})}).
\]
\end{lemma}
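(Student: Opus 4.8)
The plan is to establish the standard monotonicity property of Lloyd's algorithm: a single Lloyd iteration --- an assignment step followed by a centroid-update step --- never increases the $\kmeans$ objective $G_n$. Since $\bbeta^{(\ell+1)}$ is produced by running Lloyd's algorithm initialized at $\bbeta^{(\ell+\frac{1}{2})}$, chaining this fact over the (finitely many) iterations performed yields the claim.

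First, I would introduce the per-point assignment induced by a set of centers. For centers $\bbeta=(\bbeta_1,\ldots,\bbeta_k)$, let $a_{\bbeta}(t)\in\arg\min_{s\in[k]}\|\mx_t-\bbeta_s\|^2$, and for an arbitrary assignment $a:[n]\to[k]$ define the clustering energy $\Phi(\bbeta,a):=\frac1n\sum_{t=1}^n\|\mx_t-\bbeta_{a(t)}\|^2$. By the definition of $a_{\bbeta}$ as a pointwise minimizer, $G_n(\bbeta)=\Phi(\bbeta,a_{\bbeta})\le\Phi(\bbeta,a)$ for every assignment $a$; this is precisely the effect of the assignment step. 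Next I would record the centroid-update step: for a fixed assignment $a$, the centers minimizing $\Phi(\cdot,a)$ set each $\bbeta_s$ to the mean $\bar{\mx}_{S_s}$ of $S_s=\{t:a(t)=s\}$, by the elementary identity $\sum_{t\in S}\|\mx_t-\mv\|^2=\sum_{t\in S}\|\mx_t-\bar{\mx}_S\|^2+|S|\,\|\mv-\bar{\mx}_S\|^2$ (with the convention that an empty $S_s$ leaves $\bbeta_s$ unchanged, which still does not increase $\Phi$). Hence if $\bbeta'$ is obtained from $\bbeta$ by one update step using the assignment $a_{\bbeta}$, then $\Phi(\bbeta',a_{\bbeta})\le\Phi(\bbeta,a_{\bbeta})=G_n(\bbeta)$; combining this with the first observation applied to $\bbeta'$ gives $G_n(\bbeta')=\Phi(\bbeta',a_{\bbeta'})\le\Phi(\bbeta',a_{\bbeta})\le G_n(\bbeta)$, i.e.\ one Lloyd iteration does not increase $G_n$.

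Finally, I would iterate. Writing the Lloyd run from $\bbeta^{(\ell+\frac{1}{2})}$ as a finite sequence of iterates $\bbeta^{(\ell+\frac{1}{2})}=\gamma_0,\gamma_1,\ldots,\gamma_m=\bbeta^{(\ell+1)}$ --- finite because there are only finitely many assignments and the algorithm halts once the assignment (equivalently the objective) stabilizes --- the previous paragraph yields $G_n(\gamma_{i+1})\le G_n(\gamma_i)$ for all $i$, so telescoping gives $G_n(\bbeta^{(\ell+1)})\le G_n(\bbeta^{(\ell+\frac{1}{2})})$, which is the assertion of the lemma.

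I do not expect a genuine obstacle here: the statement is the classical descent property of Lloyd's algorithm, and the only points requiring care are (i) correctly invoking the mean-minimizes-sum-of-squares identity for the update step, and (ii) handling clusters that may become empty during an update (retain the old center, which keeps $\Phi$ unchanged for that cluster) --- neither of which affects the monotone inequality. This is the same fact already used implicitly in Lemma~\ref{lem:improve} to pass from $\bbeta^{(\ell+\frac12)}$ to $\bbeta^{(\ell+1)}$.
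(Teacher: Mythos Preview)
Your argument is correct and is exactly the classical descent proof for Lloyd's algorithm. Note, however, that the paper does not actually supply a proof of this lemma: it is stated without proof in Appendix~\ref{app:proof}, evidently treated as the well-known monotonicity property of Lloyd's iterations. Your write-up therefore fills in what the paper leaves implicit, and does so in the standard way.
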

\begin{proof}
    This follows from the monotonicity property of the Lloyd's algorithm: after each iteration of Lloyd's algorithm, the objective value does not increasing.
\end{proof}
\begin{lemma}[Almost Equal Size]
\label{lem:equal_size}
Under the assumption of Theorem 3.1, let $C_i$ be the indices of data points generated by $f_i^*$ for each $i\in [\ktrue]$. With probability at least $1-2k\exp\left(-\frac{n}{2k^{*2}}\right)$,
\[
\frac{n}{2\ktrue}\le |C_i| \le \frac{3n}{2\ktrue} \quad \forall i\in [\ktrue].
\]
\end{lemma}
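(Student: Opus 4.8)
The plan is a routine two-sided concentration argument. Since each data point $\mx_t$ is drawn i.i.d.\ from the uniform mixture $f^*=\frac{1}{\ktrue}\sum_{s=1}^{\ktrue}f_s^*$, the latent component label of $\mx_t$ is uniformly distributed on $[\ktrue]$, independently across $t$. Hence, fixing $i\in[\ktrue]$, the count $|C_i|=\sum_{t=1}^{n}\indic\{\mx_t\sim f_i^*\}$ is a sum of $n$ i.i.d.\ Bernoulli$(1/\ktrue)$ indicators, so $\E|C_i|=n/\ktrue$, and the target event $\frac{n}{2\ktrue}\le|C_i|\le\frac{3n}{2\ktrue}$ is exactly the event that $|C_i|$ deviates from its mean by at most $\frac{n}{2\ktrue}$.

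First I would apply Hoeffding's inequality to the bounded summands $\indic\{\mx_t\sim f_i^*\}\in[0,1]$ with deviation $n/(2\ktrue)$, giving
\[
\bb P\!\left[\,\Bigl|\,|C_i|-\tfrac{n}{\ktrue}\,\Bigr|\ge\tfrac{n}{2\ktrue}\,\right]\le 2\exp\!\left(-\frac{2\,(n/(2\ktrue))^2}{n}\right)=2\exp\!\left(-\frac{n}{2\ktrue^{2}}\right).
\]
Then I would union-bound this failure probability over the $\ktrue$ components, so that with probability at least $1-2\ktrue\exp(-n/(2\ktrue^{2}))$ the sandwich $\frac{n}{2\ktrue}\le|C_i|\le\frac{3n}{2\ktrue}$ holds simultaneously for every $i\in[\ktrue]$; since $k=\ktrue$ in the setting of Theorem~3.1, this is precisely the claimed bound.

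There is essentially no hard step here: the only point requiring care is the choice of tail bound and the resulting constant in the exponent. Hoeffding's inequality delivers exactly the stated rate $\exp(-n/(2\ktrue^{2}))$; a multiplicative Chernoff bound would also work but produces slightly different constants, so Hoeffding is the cleanest route. (One could alternatively invoke Bernstein's inequality to exploit the small variance $\tfrac{1}{\ktrue}\bigl(1-\tfrac{1}{\ktrue}\bigr)$ of each indicator, but this is unnecessary for the constant-factor window $[\tfrac{n}{2\ktrue},\tfrac{3n}{2\ktrue}]$.)
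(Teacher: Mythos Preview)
Your proposal is correct and follows essentially the same approach as the paper: define the indicator variables for membership in $C_i$, observe they are i.i.d.\ Bernoulli$(1/\ktrue)$, apply Hoeffding's inequality with deviation $n/(2\ktrue)$ to get the two-sided bound $2\exp(-n/(2\ktrue^{2}))$, and then union-bound over the $\ktrue$ components. There is nothing to add.
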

\begin{proof}
Fix $i\in [\ktrue]$. For each $t\in [n]$, define 
\begin{align*}
Z_t=\begin{cases} 1 & t\in C_i \\ 0 & t\notin C_i \end{cases}.
\end{align*}
$Z_1, Z_2,\ldots, Z_n$ are i.i.d Bernoulli random variables with $p=\frac{1}{\ktrue}$. Hoeffding's bounds shows that 
\begin{align*}
\mathbb{P}\left(\sum_{t=1}^{n} Z_t \ge \frac{n}{2\ktrue}\right) \leq \exp\left(-\frac{n}{2k^{*2}}\right) \\
\mathbb{P}\left(\sum_{t=1}^{n} Z_t \le \frac{3n}{2\ktrue}\right) \leq \exp\left(-\frac{n}{2k^{*2}}\right).
\end{align*}
Applying the union bound gives us the result. 
\end{proof}

\section{Proof of Theorem 3.2}
\label{app:proof_thm_converse}
Throughout the proof we consider the one-dimensional setting of a
stochastic ball model with unit radius. We introduce some notations.
For a probability density function $f$, denotes its support by $\supp(f):=\left\{\mb x\in\real:f(\mb x)>0\right\} $.
For a set of $k$ centers $\bbeta=(\bbeta_{1},\ldots,\bbeta_{k})$,
let 
\[
\vor_{i}(\bbeta):=\left\{ \mb x\in\supp(f^{*}):\left|\mb x-\bbeta_{i}\right|\le\left|\mb x-\bbeta_{j}\right|,\forall j\in[k]\right\} 
\]
denote the Voronoi set of the center $\bbeta_{i}$. For each set $S\in\real$,
let 
\[
\mean(S):=\frac{\int\mb x\mb 1\left\{ \mb x\in S\right\} f^{*}(\mb x)\ddup\mb x}{\int\mb 1\left\{ \mb x\in S\right\} f^{*}(\mb x)\ddup\mb x}
\]
denote its center of mass. With these notations, the Lloyd's k-means
algorithm is given by the iteration
\[
\bbeta_{i}^{(t+1)}=\mean\left(\vor_{i}(\bbeta^{(t)})\right),\qquad i\in[k],t=0,1,\ldots,
\]
with the convention that $\bbeta_{i}^{(t+1)}=\bbeta_{i}^{(t+1)}$
if $\vor_{i}(\bbeta^{(t)})=\emptyset.$

Let $\ball_{x}(\delta):=[x-\delta,x+\delta]$ and $\overline{\ball_{x}(\delta)}:=(-\infty,x-\delta)\cup(x+\delta,\infty)$.
We need the following definition.
\begin{defn}[Diffuse Stochastic Ball Model]
\label{def:diffuseSBM}We say that a stochastic ball model with $\widetilde{k}$
components is $(c,\delta)$-diffuse if
\begin{enumerate}
\item For some $k\le\widetilde{k}$, there are $k$ true centers contained
in $\ball_{c\delta}(\delta)\cup\ball_{-c\delta}(\delta);$
\item Each of the sets $\ball_{c\delta}(\delta)$ and $\ball_{-c\delta}(\delta)$
contain at least one true center;
\item The remaining $\widetilde{k}-k$ centers are all in $\overline{\ball_{0}(20c\delta)}$.
\end{enumerate}
Consider the Lloyd's algorithm, and denote by $k_{1}^{(t)},k_{2}^{(t)}$
and $k_{3}^{(t)}$ the number of fitted centers in the $t$-th iteration
in the sets $\ball_{c\delta}(\delta)$, $\ball_{-c\delta}(\delta)$
and $\overline{\ball_{0}(20c\delta)}$, respectively. With these definitions,
we establish a key technical lemma on the behaviors of Lloyd's algorithm.
\end{defn}
\begin{lemma}
\label{lem:trap}Suppose that the true stochastic ball model has $\widetilde{k}$
true centers and is $(c,\delta)$-diffuse with $c>20$ and $\delta>3$,
and that the Lloyd's algorithm is initialized so that $k_{1}^{(0)}\ge1,k_{2}^{(0)}\ge1$.
\begin{enumerate}
\item If $k=\widetilde{k}$, then 
\begin{equation}
k_{1}^{(t)}=k_{1}^{(0)}\quad\text{and}\quad k_{2}^{(t)}=k_{2}^{(0)},\qquad\forall t\ge0.\label{eq:trap}
\end{equation}
\item If $k<\widetilde{k}$, suppose further that for each true center $\bbetastar_{s}$
in $\overline{\ball_{0}(20c\delta)}$, there is an initial fitted
center $\bbeta_{i}^{(0)}$ such that $\left|\bbeta_{i}^{(0)}-\bbetastar_{s}\right|\le\left|\bbetastar_{s}\right|/10$.
Then the same conditions (\ref{eq:trap}) hold.
\end{enumerate}
\end{lemma}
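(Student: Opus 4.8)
The plan is to prove a \emph{trapping} phenomenon by induction on $t$: one population Lloyd step (i) keeps any fitted center lying in $\ball_{c\delta}(\delta)$ inside $\ball_{c\delta}(\delta)$, and likewise for $\ball_{-c\delta}(\delta)$; (ii) never pulls a fitted center from outside $\ball_{c\delta}(\delta)$ into it, and likewise for $\ball_{-c\delta}(\delta)$; and (iii) in part 2, keeps any fitted center that starts within $|\bbetastar_s|/10$ of a far true center $\bbetastar_s$ within that distance. Statements (i) and (ii) together give $k_1^{(t+1)}=k_1^{(t)}$ and $k_2^{(t+1)}=k_2^{(t)}$, so \eqref{eq:trap} follows by induction from the base hypotheses $k_1^{(0)}\ge1$ and $k_2^{(0)}\ge1$.

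The first step is to pin down the geometry of $\supp(f^*)$. True centers near $+c\delta$ lie in $[(c-1)\delta,(c+1)\delta]$, so, the component balls having unit radius, the positive part $P$ of the support lies in $\ball_{c\delta}(\delta+1)$, of length at most $2\delta+2$; symmetrically the negative part $N$ lies in $\ball_{-c\delta}(\delta+1)$; and the remaining far part $F$ (empty in part 1) lies in $\{x:|x|>20c\delta-1\}$. Using $c>20$ and $\delta>3$, one verifies that these three pieces are separated by gaps vastly larger than their own lengths. Quantitatively, for $x\in P$ and any center $\bbeta\in\ball_{c\delta}(\delta)$ one has $|x-\bbeta|\le 2\delta+1$, whereas $|x-\bbeta|$ is strictly larger whenever $\bbeta$ is nonpositive, or lies in $\ball_{-c\delta}(\delta)$, or lies in $F$. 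Consequently, as long as $k_1^{(t)}\ge1$, every point of $P$ is assigned in the Voronoi partition to a center in $\ball_{c\delta}(\delta)$, and no center in $\ball_{c\delta}(\delta)$ claims any point of $N\cup F$; the mirror statements hold for $N$, using $k_2^{(t)}\ge1$.

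For the inductive step, assume $k_1^{(t)}\ge1$, $k_2^{(t)}\ge1$ and, in part 2, the $\tfrac1{10}$-proximity of the far centers. By the previous paragraph the Voronoi cell of each center in $\ball_{c\delta}(\delta)$ is a nonempty subset of $P$, so its Lloyd update $\mean(\vor_i(\bbeta^{(t)}))$ lies in $\mathrm{conv}(P)\subseteq\ball_{c\delta}(\delta+1)$; a sharper estimate then confines it to $\ball_{c\delta}(\delta)$ itself (see below), and symmetrically for $-c\delta$. Conversely, a center in $\ball_{-c\delta}(\delta)$ has its cell inside $N$ and hence updates into $\ball_{-c\delta}(\delta)$, so it cannot migrate into $\ball_{c\delta}(\delta)$; a far center that is $\tfrac1{10}$-close to an isolated far true center $\bbetastar_s$ has cell equal to that sub-ball, updates to $\bbetastar_s$, and hence stays in $F$ since $|\bbetastar_s|>20c\delta$ (with an analogous contraction estimate when sub-balls of several far centers overlap); and any other center fails to reach $\ball_{c\delta}(\delta)$ because $P$ is already claimed by the $k_1^{(t)}\ge1$ centers sitting there. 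This establishes (i), (ii), (iii), hence $k_1^{(t+1)}=k_1^{(t)}$ and $k_2^{(t+1)}=k_2^{(t)}$; the base case is the hypothesis, closing the induction.

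I expect the main obstacle to be the estimate that keeps each updated positive center inside $\ball_{c\delta}(\delta)$ rather than merely in the slightly larger $\ball_{c\delta}(\delta+1)$: because two fitted centers can split a single unit sub-ball, updated positions are not exact averages of true centers, and one must argue that the unit-radius slack never pushes a center across the boundary of its designated ball. This is precisely where the quantitative hypotheses $c>20$, $\delta>3$ (and, in part 2, the $\tfrac1{10}$-closeness together with $|\bbetastar_s|>20c\delta$) are used, and it requires careful bookkeeping of the Voronoi boundaries. A secondary, routine point is checking that $k_1^{(0)}\ge1$ and $k_2^{(0)}\ge1$ genuinely propagate --- i.e.\ that once the positive (negative) region is occupied it is never vacated --- which is immediate from the update-confinement in step (i).
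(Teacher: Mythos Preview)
Your approach is the paper's: induct on $t$, show each fitted center's Voronoi cell is confined to its own region, and conclude the Lloyd update stays there. The paper does exactly this, asserting (modulo some index typos) that $\vor_i(\bbeta^{(t)})\subseteq\ball_{\pm c\delta}(\delta)$ and hence $\bbeta_i^{(t+1)}\in\ball_{\pm c\delta}(\delta)$, without ever addressing the unit-radius overhang you single out as the main obstacle.

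You are right that this is the delicate point, but your planned resolution---a ``sharper estimate'' forcing the update back into $\ball_{c\delta}(\delta)$ via the hypotheses $c>20,\ \delta>3$---cannot succeed. Place a true center at the left endpoint $(c-1)\delta$ and two fitted centers at $(c-1)\delta$ and $(c-1)\delta+0.4$; the left Voronoi cell within $P$ is $[(c-1)\delta-1,\ (c-1)\delta+0.2]$, whose mean is $(c-1)\delta-0.4\notin\ball_{c\delta}(\delta)$, regardless of how large $c$ and $\delta$ are. So the lemma with the exact radius $\delta$ is literally false, and no bookkeeping of Voronoi boundaries repairs it. The correct fix---for both your argument and the paper's---is to run the induction with the enlarged balls $\ball_{\pm c\delta}(\delta+1)$: these contain $P$ and $N$, remain separated from each other and from $F$ by margins of order $c\delta$, and the induction closes immediately since $\mean(\vor_i)$ lies in the convex hull of $\vor_i$. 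The downstream application only needs region invariance, not the precise radius. Two smaller remarks: Voronoi cells of positive centers need not be \emph{nonempty} (a center sandwiched between two others may miss the support entirely), so invoke the stay-put convention there; and your plan to propagate the $\tfrac1{10}$-proximity in part~2 is a step the paper's written induction actually omits, so on that point you are being more careful than the paper.
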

We prove this lemma in Section~\ref{sec:proof_lem_trap} to follow.
We note that our proof differs substantially from that in \cite{jin2016local},
which considers Gaussian mixtures and the EM algorithm. Our setting
requires a different set of arguments due to the nonsmoothness of
the k-means objective as well as Voronoi sets being involved in the
Lloyd's algorithm.

Equipped with Lemma~\ref{lem:trap}, we can follow the same arguments
in the proof of \cite[Theorem 2]{jin2016local}, with Lemma 1 therein
replaced by Lemma~\ref{lem:trap}, to establish the following:
\begin{prop}
\label{prop:bad_initialization}For each integer $k\ge3$ and real
number $R>0$, there exists a stochastic ball model with $k$ components
such that the event 
\[
\mathcal{E}:=\left\{ \forall t\ge0:\max_{s\in[k]}\min_{i\in[k]}\left|\bbeta_{i}^{(t)}-\bbetastar_{s}\right|\ge\frac{R}{k^{7}}\right\} 
\]
holds with probability at least $1-e^{-ck}$ under random initialization
of Lloyd's algorithm, where $c>0$ is a universal constant.
\end{prop}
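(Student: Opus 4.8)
The plan is to carry over the argument in \cite[proof of Theorem~2]{jin2016local}, with the EM trapping lemma used there replaced by Lemma~\ref{lem:trap}. There are three pieces: (i) build a one-dimensional, unit-radius stochastic ball model whose $k$ true centers sit in a \emph{recursively diffuse} configuration over $\Theta(\log k)$ nested scales; (ii) run a counting argument showing that a uniformly random initialization from the data assigns the ``wrong'' number of centers to some sub-structure with probability $1-e^{-\Omega(k)}$; and (iii) invoke Lemma~\ref{lem:trap} repeatedly to show that such a misassignment is frozen by Lloyd's updates for \emph{all} iterations, so that some true center is permanently left unresolved.

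\textbf{Step (i): the construction.} Fix absolute constants $c=21$ and $\rho$ with $c+1\le\rho\le 128$ (say $\rho=64$), and a geometric sequence of scales $\delta_{0}>\delta_{1}>\cdots>\delta_{L-1}$ with $L=\lceil\log_{2}k\rceil$, $\delta_{\ell-1}/\delta_{\ell}=\rho$, $\delta_{L-1}\ge 4$, and $\delta_{0}=R$ (if $R$ is so small that $\delta_{L-1}\ge 4$ forces $\delta_{0}>R$, take $\delta_{L-1}=4$ and $\delta_{0}=4\rho^{L-1}$ instead; the $R/k^{7}$ bound is then trivially weaker than what we prove). Arrange the $k$ true centers as the leaves of a balanced binary tree of depth $L$: the root splits into two groups of $\approx k/2$ centers around $\pm c\delta_{0}$, each group splits around $\pm c\delta_{1}$ relative to its own center, and so on down to individual centers; give all components equal weight. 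The choice $\rho\ge c+1$ ensures each level-$(\ell+1)$ active interval (with its unit-radius clusters) fits inside the level-$\ell$ ball its parent occupies, and $\rho\ge 10$ ensures sibling and ancestor groups fall into the far region $\overline{\ball_{0}(20c\delta_{\ell})}$; hence every internal node $\nu$ exhibits the model as $(c,\delta_{\mathrm{depth}(\nu)})$-diffuse in the sense of Definition~\ref{def:diffuseSBM}, with $\nu$'s two children as the active intervals. The finest inter-center separation is $\Theta(\delta_{L-1})=\Theta(R/\rho^{L-1})=\Theta(R/k^{\log_{2}\rho})$, which is at least $R/k^{7}$ by the choice of $\rho$ (the exponent $7$ is not optimized).

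\textbf{Steps (ii)--(iii): counting, trapping, conclusion.} The initial centers form a uniform $k$-subset of the data, hence (up to $o(1)$ corrections controlled by Hoeffding's inequality, cf.\ Lemma~\ref{lem:equal_size}) behave like $k$ i.i.d.\ draws over the $k$ equally weighted leaf clusters. By Lemma~\ref{lem:trap}, applied top-down along the tree, Lloyd's algorithm can eventually place a center near every true center only if, at each internal node, the two children receive exactly their due numbers of initial centers. A counting estimate over the $\Theta(k)$ nodes in the bottom levels of the tree (each contributing an independent-up-to-$o(1)$ factor bounded away from $1$) shows this balanced event has probability at most $e^{-ck}$; on its complement one locates an internal node $\nu^{*}$ --- together with the requisite occupancy of all ancestor and sibling groups, which lets Lemma~\ref{lem:trap}, including part~2, be applied along the path to $\nu^{*}$ --- such that the count in one child region $A$ of $\nu^{*}$ is frozen strictly below the number of true centers in $A$ for all $t\ge 0$. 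A covering/pigeonhole argument --- the true centers in $A$ are pairwise $\Theta(\delta_{L-1})$ apart, and fitted centers outside $A$ are a factor $\rho$ farther and, by the same freezing, never enter $A$ --- then yields a true center $\bbetastar_{s}\in A$ with $\min_{i}|\bbeta_{i}^{(t)}-\bbetastar_{s}|\ge R/k^{7}$ at every $t$, which is exactly the event $\mathcal{E}$. Hence $\mathbb{P}(\mathcal{E})\ge 1-e^{-ck}$.

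\textbf{Main obstacle.} I expect the hard part to be the faithful port of the recursive counting-and-trapping argument of \cite{jin2016local} to the Lloyd / Voronoi setting. Two points need care: verifying at \emph{every} level of the recursion that the hypotheses of Lemma~\ref{lem:trap} hold --- in particular the part-2 hypothesis that each far true center already carries a nearby initial fitted center, which must be deduced from the occupancy of coarser-scale groups (a single center anywhere in a sibling group lies within a $\tfrac{1}{10}$-fraction of that group's distance, because the group's diameter is a small fraction of that distance); and the treatment of a region that receives zero initial centers, to which Lemma~\ref{lem:trap} does not apply verbatim, so one must either extend its proof or argue separately that such a region, being a factor $\rho$ away from every fitted center, cannot be repopulated by a Lloyd update. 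Both issues are already present for EM in \cite{jin2016local}; once they are settled, the scale arithmetic needed to land at the stated exponent $7$ is routine.
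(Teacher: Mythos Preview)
Your proposal is correct and takes essentially the same approach as the paper: the paper's proof of this proposition is literally the single sentence ``follow the same arguments in the proof of \cite[Theorem 2]{jin2016local}, with Lemma~1 therein replaced by Lemma~\ref{lem:trap},'' and that is precisely what you set out to do. If anything, your write-up supplies considerably more of the recursive construction and the occupancy/trapping details than the paper itself, including the two delicate points you flag (the part-2 hypothesis at each level and the zero-occupancy case), which the paper leaves entirely to the reader via the citation.
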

We are ready to complete the proof of Theorem 3.2.
Under the event $\mathcal{E}$ in Proposition~\ref{prop:bad_initialization},
for each $t\ge 0$, the objective value of iterate $\bbeta^{(t)}$
of Lloyd's algorithm satisfies 
\begin{align*}
G(\bbeta^{(t)}) & =\frac{1}{k}\sum_{s\in[k]} \E_{\mb x\sim f_{s}^{*}} \big[\min_{i\in [k]} \big\|\mb x-\bbeta_{i}^{(t)}\big\|^{2} \big] \\
 & \ge\frac{1}{k}\max_{s\in[k]}\E_{\mb x\sim f_{s}^{*}}\left[\min_{i\in[k]}\left|\mb x-\bbeta_{i}^{(t)}\right|^{2}\right]\\
 & \ge\frac{1}{k}\max_{s\in[k]}\E_{\mb x\sim f_{s}^{*}}\left[\min_{i\in[k]}\left(\left|\bbetastar_{s}-\bbeta_{i}^{(t)}\right|-\left|\mb x-\bbetastar_{s}\right|\right)^{2}\right]\\
 & \ge\frac{1}{k}\max_{s\in[k]}\min_{i\in[k]}\left(\left|\bbetastar_{s}-\bbeta_{i}^{(t)}\right|-1\right)^{2}\\
 & \ge\frac{1}{k}\cdot \left(\frac{R}{k^{7}}-1\right)^{2}
\end{align*}
as long as $R$ is sufficiently large. On the other hand, the true
centers $\bbetastar$ satisfy
\begin{align*}
G(\bbeta^{*}) & =\frac{1}{k}\sum_{s\in[k]}\E_{\mb x\sim f_{s}^{*}}\left[\min_{i\in[k]}\left|\mb x-\bbetastar_{i}\right|^{2}\right]\\
& \le\frac{1}{k}\sum_{s\in[k]}\E_{\mb x\sim f_{s}^{*}}\left[\left|\mb x-\bbetastar_{s}\right|^{2}\right]\\
& \le\frac{1}{k}\sum_{s\in[k]}1=1.
\end{align*}
Therefore, taking $R=k^{7}\left(\sqrt{k(\Cgap+1)}+1\right)$, we can
ensure that 
\[
\frac{G(\bbeta^{(t)})-G(\bbetastar)}{G(\bbetastar)}\ge\Cgap,
\]
thereby proving Theorem 3.2.

\subsection{Proof of Lemma~\ref{lem:trap}}
\label{sec:proof_lem_trap}
For each $t\ge0$, define the index sets $I_{1}^{(t)}=\left\{ i\in[k]:\bbeta_{i}^{(t)}\in\ball_{c\delta}(\delta)\right\} $,
$I_{2}^{(t)}=\left\{ i\in[k]:\bbeta_{i}^{(t)}\in\ball_{-c\delta}(\delta)\right\} $
and $I_{3}^{(t)}=\left\{ i\in[k]:\bbeta_{i}^{(t)}\in\overline{\ball_{0}(20c\delta)}\right\} $.
By definition we have $k_{j}^{(t)}=\left|I_{j}^{(t)}\right|,j=1,2,3.$

First consider part 1 of the lemma, where $k=\widetilde{k}$. We prove
the claim by induction. The base case $t=0$ holds trivially. Fix
a $t\ge0$ and assume that $k_{1}^{(t)}=k_{1}^{(0)}$ and $k_{2}^{(t)}=k_{2}^{(0)}$.
For each $i\in I_{1}^{(t)}$, since $\bbeta_{i}^{(t)}\in\ball_{-c\delta}(\delta)$,
we have for all $\mb x\in\ball_{c\delta}(\delta)\cap\supp(f^{*})$,
\[
\left|\bbeta_{i}^{(t)}-\mb x\right|\ge2(c-1)\delta\ge2\delta\ge\min_{j\in I_{2}^{(t)}}\left|\bbeta_{j}^{(t)}-\mb x\right|.
\]
It follows that $\vor_{i}(\bbeta^{(t)})\cap\ball_{c\delta}(\delta)=\emptyset$
and hence $\vor_{i}(\bbeta^{(t)})\subseteq\ball_{-c\delta}(\delta)$.
If $\vor_{i}(\bbeta^{(t)})=\emptyset$, then $\bbeta_{i}^{(t+1)}=\bbeta_{i}^{(t)}\in\ball_{-c\delta}(\delta)$
by specification of the algorithm. If $\vor_{i}(\bbeta^{(t)})=\emptyset$,
then $\bbeta_{i}^{(t+1)}=\mean\left(\vor_{i}(\bbeta^{(t)})\right)\in\vor_{i}(\bbeta^{(t)})\subseteq\ball_{-c\delta}(\delta)$.
We conclude that $\bbeta_{i}^{(t+1)}\in\ball_{-c\delta}(\delta),\forall i\in I_{1}^{(t)}$.
A similar argument shows that $\bbeta_{i}^{(t+1)}\in\ball_{c\delta}(\delta),\forall i\in I_{2}^{(t)}$.
Therefore, we have $I_{1}^{(t+1)}=I_{1}^{(t)},I_{2}^{(t+1)}=I_{2}^{(t)}$
and hence $k_{1}^{(t+1)}=k_{1}^{(t)},k_{2}^{(t+1)}=k_{2}^{(t)}$.
This completes the induction step and proves part 1 of the lemma.

Now consider part 2 of the lemma, where $k<\tilde{k}$. We again prove
the claim by induction. Fix a $t\ge0$. Assume that $k_{1}^{(t)}=k_{1}^{(0)}$
and $k_{2}^{(t)}=k_{2}^{(0)}$, and that for each $\bbetastar_{s}\in\overline{\ball_{0}(20c\delta)}$,
there exists $i\in I_{3}^{(t)}$ such that $\left|\bbeta_{i}^{(t)}-\bbetastar_{s}\right|\le\left|\bbetastar_{s}\right|/10$.
For each $i\in I_{3}^{(t)}$, if $\bbeta_{i}^{(t)}\in(-\infty,-20c\delta)$,
we have for all $\mb x\in\left(\ball_{-c\delta}(\delta)\cup\ball_{c\delta}(\delta)\right)\cap\supp(f^{*}),$
\begin{align*}
\left|\bbeta_{i}^{(t)}-\mb x\right|
&\ge20c\delta-2(c+1)\delta \\
&\ge2\delta\\
&\ge\min_{j\in I_{1}^{(t)}\cup I_{2}^{(t)}}\left|\bbeta_{j}^{(t)}-\mb x\right|.
\end{align*}
It follows that $\vor_{i}(\bbeta^{(t)})\cap\left(\ball_{-c\delta}(\delta)\cup\ball_{c\delta}(\delta)\right)=\emptyset$.
It is also clear that $\vor_{i}(\bbeta^{(t)})\cap\left(20c\delta,\infty\right)=\emptyset$.
Therefore, we have $\vor_{i}(\bbeta^{(t)})\subseteq(-\infty,-20c\delta)$
and hence $\bbeta_{i}^{(t+1)}\in(-\infty,-20c\delta)$. The same argument
applies if $i\in I_{3}^{(t)}$ and $\bbeta_{i}^{(t)}\in(20c\delta,\infty)$.
We conclude that $\bbeta_{i}^{(t+1)}\in\overline{\ball_{0}(20c\delta)},\forall i\in I_{3}^{(t)}.$
A similar argument as above shows that $\bbeta_{i}^{(t+1)}\in\ball_{-c\delta}(\delta),\forall i\in I_{1}^{(t)}$
and $\bbeta_{i}^{(t+1)}\in\ball_{c\delta}(\delta),\forall i\in I_{2}^{(t)}$.
Therefore, we have $I_{1}^{(t+1)}=I_{1}^{(t)},I_{2}^{(t+1)}=I_{2}^{(t)}$
and hence $k_{1}^{(t+1)}=k_{1}^{(t)},k_{2}^{(t+1)}=k_{2}^{(t)}$.
This proves part 2 of the lemma.

\section{Split-Merge Based Algorithms}\label{app:related}

Here, we discuss four related algorithms~\cite{pelleg2000x, muhr2009automatic, morii2006clustering, lei2016robust} in detail. The algorithms~\cite{pelleg2000x, muhr2009automatic, lei2016robust} are specifically designed for clustering problems where the number of clusters is unknown. We summarize all four algorithms within the context of our proposed framework, which involves splitting \ofm and merging \mfo associations.

\subsection{X-means \cite{pelleg2000x}}
\label{app:xmeans}

This method begins with the number of clusters $k$ set to a lower bound and then iteratively splits clusters based on changes in the Bayesian Information Criterion (BIC) score. The BIC score is given by:
\begin{align*}
BIC &\doteq - \frac{n_i}{2} \log 2\pi - \frac{n_i m}{2} \log \sigma^2 \\
&\quad - \frac{n_i - k}{2} + n_i \log \frac{n_i}{n} - \frac{k}{2} \log n,
\end{align*}
where $k$ is the number of clusters, $n_i$ is the size of the $i^{\text{th}}$ cluster, $n$ is the total size of the dataset, $m$ represents the dimension of the dataset, and $\sigma^2$ is given by:
\begin{align*}
\sigma^2 = \frac{1}{n_i - k} \sum_{i} (x_i - c_{y_i})^2,
\end{align*}
where $x_i$ is a data point, $c_{y_i}$ represents the centroid of the cluster to which $x_i$ belongs, and $y_i$ is the cluster index
.

The \xmeans algorithm consists of iteratively executing the following three steps (as illustrated in Figure \ref{fig:x-means}):

\begin{enumerate}
\item \textbf{Step 1:} Split each cluster $\mb\beta_i$ into two clusters, $\mb\beta_{i+}$ and $\mb\beta_{i-}$, by running a local 2-means algorithm.
\item \textbf{Step 2:} Calculate the BIC score for $\mb\beta_i$ and for both $\mb\beta_{i+}$ and $\mb\beta_{i-}$.
\item \textbf{Step 3:} Retain either the original cluster $\mb\beta_i$ or the split clusters $\mb\beta_{i+}$ and $\mb\beta_{i-}$ with the higher BIC score.
\end{enumerate}

The iterations continue until there are no changes in the higher BIC score and cluster centers.

\begin{figure*}
    \centering
    \includegraphics[width=0.7\textwidth]{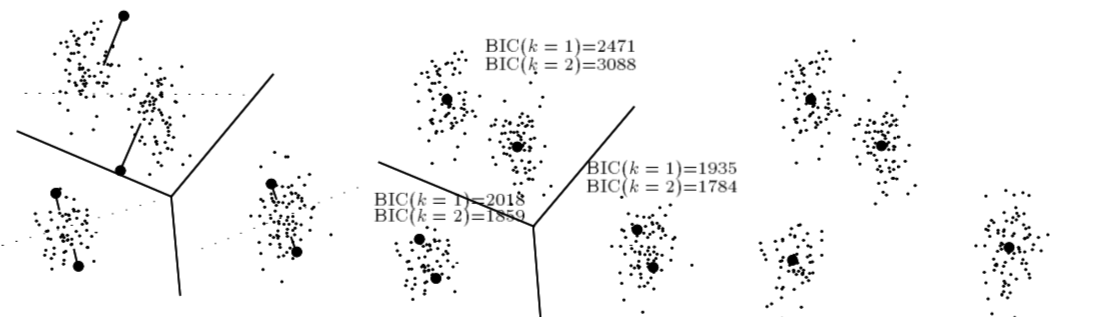}
    \caption{Framework for X-means \cite{pelleg2000x}}
    \label{fig:x-means}
    \end{figure*}

\subsection{Algorithm in \cite{muhr2009automatic}}\label{app:sm-xmeans}

This algorithm extends the \xmeans algorithm by incorporating a merge step. It considers several internal validity indices $V(c)$, such as BIC, Calinski-Harabasz index, Hartigan index, and others, to determine when to stop the split or merge procedures.

The algorithm in \cite{muhr2009automatic} first splits clusters until it is validated to stop and then continues to merge clusters until it is validated to stop (as illustrated in Figure \ref{fig:smx-means}). During the merge step, the internal validity indices select two clusters with the highest average data sample similarity to merge.

\begin{figure*}
    \centering
    \includegraphics[width=0.8\textwidth]{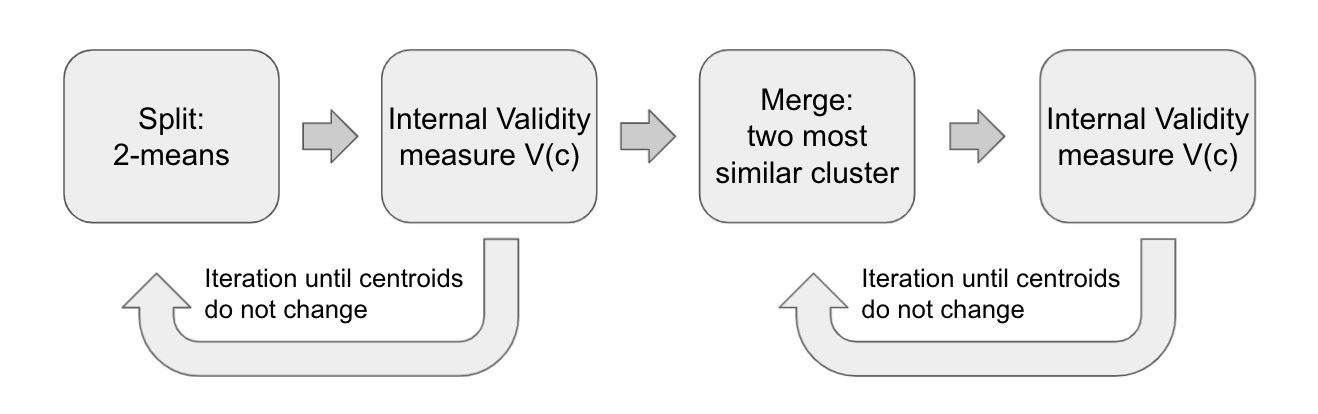}
    \caption{Framework for Algorithm \cite{muhr2009automatic}}
    \label{fig:smx-means}
    \end{figure*}

\subsection{Algorithm in \cite{morii2006clustering}}\label{app:for-kmeans}

This method assumes knowledge of the ground truth cluster number $k$. The algorithm consists of three steps:

\begin{itemize}
\item \textbf{Step 1:} Run Lloyd's \kmeans.
\item \textbf{Step 2:} Split each cluster into $m$ sub-clusters ($m=2,\cdots,M$) and calculate the ratio between successive \kmeans objectives $D$.
\begin{align*}
\label{eq.Fkmeans}
\rho(m) &= D^{(m)} / D^{(m-1)}, \\
\rho_k(m^*) &= \min \{\rho(m), m = 2,...,M \} .
\end{align*}
A cluster will be split if $ \rho_k(m^*) $ is below a certain threshold $\eta$.
\item \textbf{Step 3:} Keep the split cluster that is furthest from neighboring Voronoi regions and merge the rest of the split clusters into neighboring Voronoi regions.
\end{itemize}

This algorithm has very high computational complexity, as it splits the sub-cluster several times with different split numbers and determines the most suitable split number.

\subsection{Algorithm in \cite{lei2016robust}}\label{app:ds-kmeans}

This algorithm splits or merges cluster centers based on {\em intra-cluster dissimilarity} and {\em inter-cluster dissimilarity} defined below:
\begin{align*}
d_\text{inter} &= \left\Vert m_i - m_j\right\Vert^2, \\
d_\text{intra} &= \max \{\left\Vert\ m_i - x_p\right\Vert ^2 \} + \min \{\left\Vert m_i - x_p\right\Vert^2 \}.
\end{align*}

Here, $m_i$ and $m_j$ represent the cluster centers of the $i$-th and $j$-th clusters, while $x_p$ represents the data point in the corresponding cluster. Additionally, we define a threshold:
\begin{align*}
\overline{d} = \frac{1}{A^2_k} \sum_{i=1}^{k}\sum_{j=1}^{k} \left\Vert m_i - m_j\right\Vert^2.
\end{align*}

Here, $A^2_k$ is the number of pairwise cluster centers.

The algorithm \cite{lei2016robust} can be formalized as follows:

\begin{itemize}
\item \textbf{Split:} Iteratively update $\overline{d}$ and $d_\text{intra}$, and split the cluster until $d_\text{intra} < \overline{d}/2$.
\item \textbf{Merge:} Iteratively update $\overline{d}$ and $d_\text{inter}$, and merge the clusters until $d_\text{inter} >\overline{d}/2$.
\end{itemize}

\section{visualization of Benchmark dataset}\label{app:benchmark}
Figure \ref{fig:benchmark} provides the visual representation of the benchmark dataset described in Section \ref{exp:data}.
\begin{figure*}
  \centering
  \includegraphics[width=\textwidth]{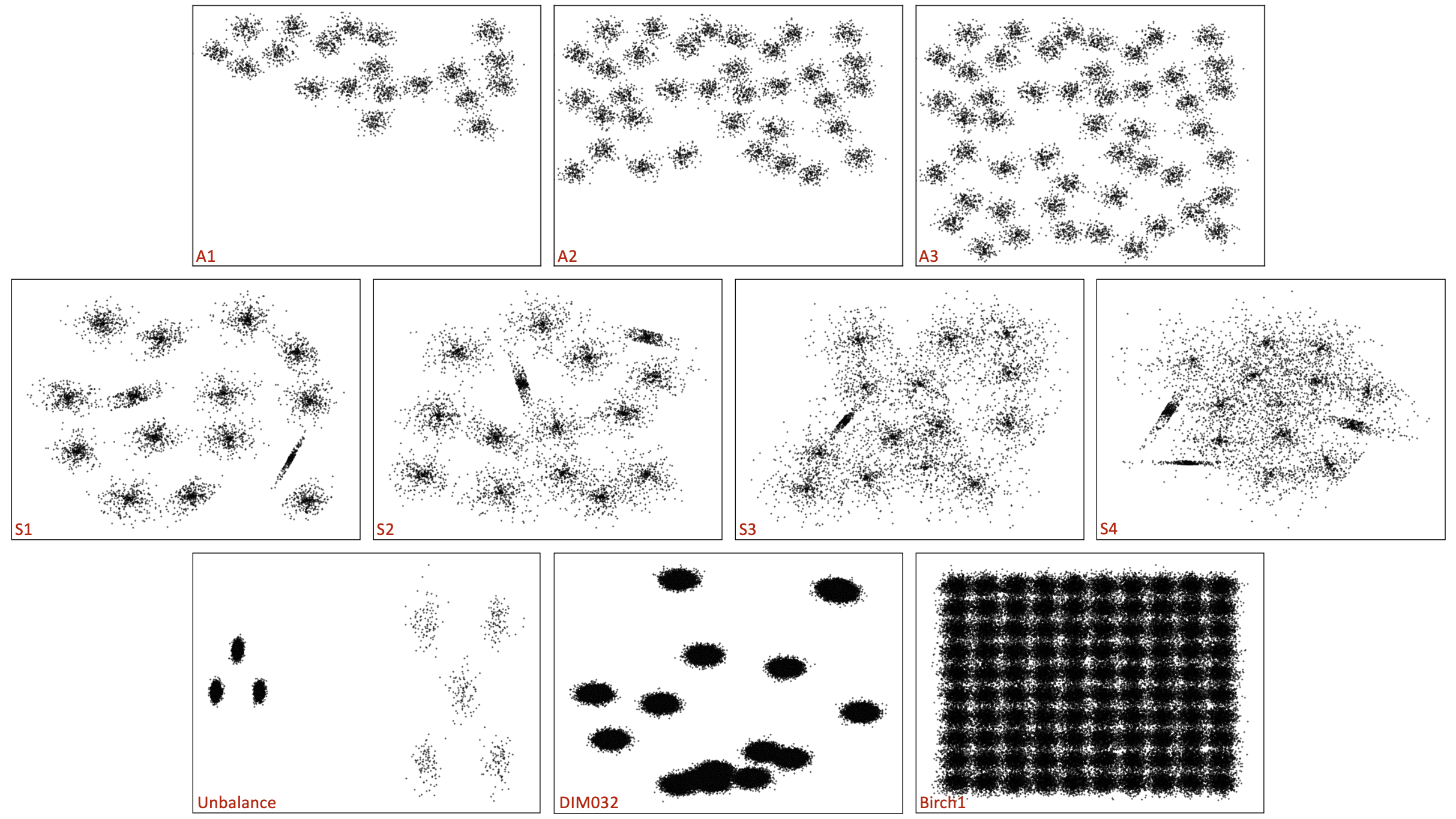}
  \caption{Visualization for the benchmark datasets~\cite{franti2018k}}\label{fig:benchmark}
  \end{figure*}

\section{Subroutines of Fusion-Fission \kmeans}\label{app:six_detect}

The results of $100$ trials on the benchmark dataset for the six different combinations of \ofm and \mfo subroutines of the Fission-Fusion $k$-means (FFkm) are presented in Tables \ref{tab:kmsminusplus} and \ref{tab:kmsminusplus_Ratio}. Our findings indicate that the objective-based FFkm (TD+OI) performs better than others on the heavily overlapped $S3$ and $S4$ datasets, demonstrating a limiting scenario for geometry-based algorithms. However, contributing to the flexibility of the FFkm framework, when scenarios involve high overlap, the combination of the \ofm detection subroutine, Total Deviation (TD), and the \mfo detection subroutine, Objective Increment (OI), denoted as TD+OI, may mitigate this problem.
    
\begin{table}
    \centering
    \caption{Success rate (\%) comparison}
    \scalebox{0.8}{
    \begin{tabular}{|c | c | c | c | c | c | c | c |}
    \hline 
    \textbf{Data Set}  & \textbf{SD+PD} & \textbf{SD+OI} & \textbf{TD+PD} & \textbf{TD+OI} & \textbf{RD+PD} & \textbf{RD+OI}\\ 
    \hline
    A1         & 100        &   100        & 100      & 100        &   100     &    99(0.00)     \\
    A2        & 100        &   100        & 100      & 100        &   100     &    97(0.00)     \\
    A3        & 100        &   100        & 100      & 100        &   100     &    98(0.00)             \\
    S1        & 100        &   100        & 100      & 100        &   100     &    95(0.00)            \\
    S2         & 100        &   100        & 100      & 100        &   100     &    99(0.00)           \\
    S3         & 77(0.02)   &   89(0.00)   & 87(0.01) & 96(0.00)   &   89(0.01)  &    92(0.01)                 \\
    S4        & 31(0.05)   &   39(0.04)   & 43(0.04) & 90(0.01)   &   41(0.05)   &    77(0.02)              \\
    Unbalance  & 100        &   100        & 100      & 100        &   100   &    99(0.00)      \\
    Dim032      & 100        &   100        & 100      & 100        &   100     &    100        \\
    Birch1    & 100        &   100        & 100      & 100        &   100     &    100           \\
    \hline
    \end{tabular}}
    \label{tab:kmsminusplus}
  \end{table}

  \begin{table}
    \centering
    \caption{$\rho$-ratio comparison}
    \scalebox{1.0}{
    \begin{tabular}{|c | c | c | c | c | c | c | c |}
    \hline 
    \textbf{Data Set} & \textbf{SD+PD} & \textbf{SD+OI} & \textbf{TD+PD} & \textbf{TD+OI} & \textbf{RD+PD} & \textbf{RD+OI}\\ 
    \hline
    A1         & $1.00\pm0.00$ & $1.00\pm0.00$ & $1.00\pm0.00$ & $1.00\pm0.00$ & $1.00\pm0.00$ &  $1.00\pm0.02$ \\
    A2         & $1.00\pm0.00$ & $1.00\pm0.00$ & $1.00\pm0.00$ & $1.00\pm0.00$ & $1.00\pm0.00$ &  $1.01\pm0.04$    \\
    A3         & $1.00\pm0.00$ & $1.00\pm0.00$ & $1.00\pm0.00$ & $1.00\pm0.00$ & $1.00\pm0.00$ &  $1.00\pm0.00$            \\
    S1         & $1.00\pm0.00$ & $1.00\pm0.00$ & $1.00\pm0.00$ & $1.00\pm0.00$ & $1.00\pm0.00$ &  $1.03\pm0.13$           \\
    S2        & $1.00\pm0.00$ & $1.00\pm0.00$ & $1.00\pm0.00$ & $1.00\pm0.00$ & $1.00\pm0.00$ &  $1.01\pm0.06$         \\
    S3        & $1.03\pm0.05$ & $1.01\pm0.04$ & $1.01\pm0.04$ & $1.00\pm0.00$ & $1.01\pm0.04$ &   $1.01\pm0.04$              \\
    S4        & $1.05\pm0.04$ & $1.05\pm0.05$ & $1.04\pm0.04$ & $1.01\pm0.02$ & $1.06\pm0.07$ &   $1.03\pm0.06$             \\
    Unbalance & $1.00\pm0.00$ & $1.00\pm0.00$ & $1.00\pm0.00$ & $1.00\pm0.00$ & $1.00\pm0.00$ &     $1.06\pm0.63$      \\
    Dim032    & $1.00\pm0.00$ & $1.00\pm0.00$ & $1.00\pm0.00$ & $1.00\pm0.00$ & $1.00\pm0.00$ &     $1.00\pm0.00$       \\
    Birch1    & $1.00\pm0.00$ & $1.00\pm0.00$ & $1.00\pm0.00$ & $1.00\pm0.00$ & $1.00\pm0.00$ &     $1.00\pm0.00$          \\
    \hline
    \end{tabular}}
    \label{tab:kmsminusplus_Ratio}
  \end{table}

\section{Steps Visulization of Fusion-Fission \kmeans} \label{app:FFkm_detail}
We provide a detailed visualization of each step of Algorithm~\ref{alg:1} (FFkm) in Figure~\ref{fig:demo}. In this visualization, the FFkm framework implements the detection subroutines Standard Deviation (SD) and Pairwise Distance (PD), referred to as FFkm (SD+PD). In this example, FFkm executes iteration $\ell=2$ to recover Lloyd's \kmeans from local minima to ground truth.

  \begin{figure*}
    \centering
    \setlength{\floatsep}{0pt}  
    \setlength{\textfloatsep}{0pt} 
    \setlength{\intextsep}{0pt} 
    
    \subfloat[$\ell=1$, Step 1 (detect \ofm)]{  
      \includegraphics[width=0.333\linewidth]{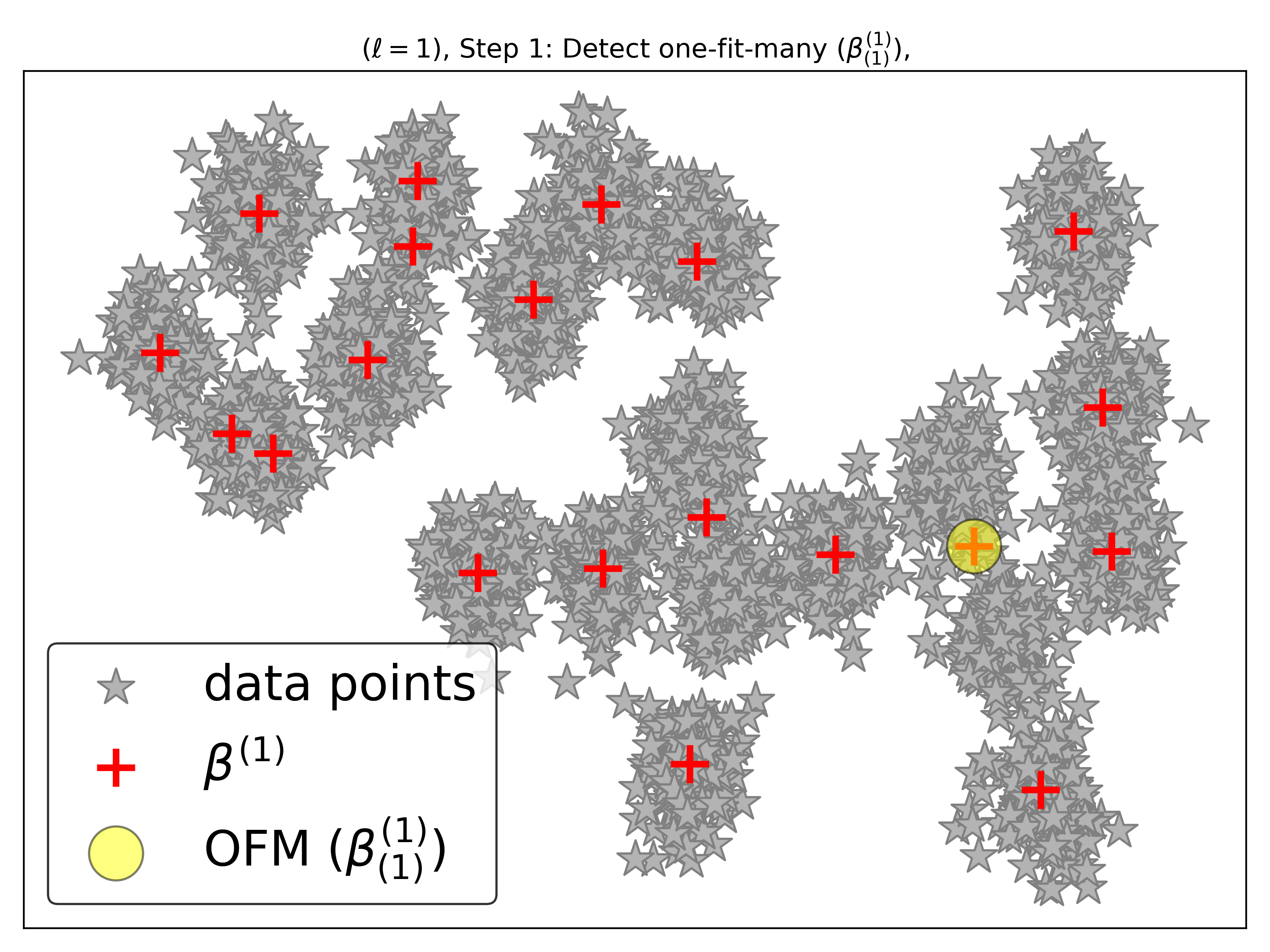}
    }
    \subfloat[$\ell=1$, Step 2a (split \ofm)]{  
      \includegraphics[width=0.333\linewidth]{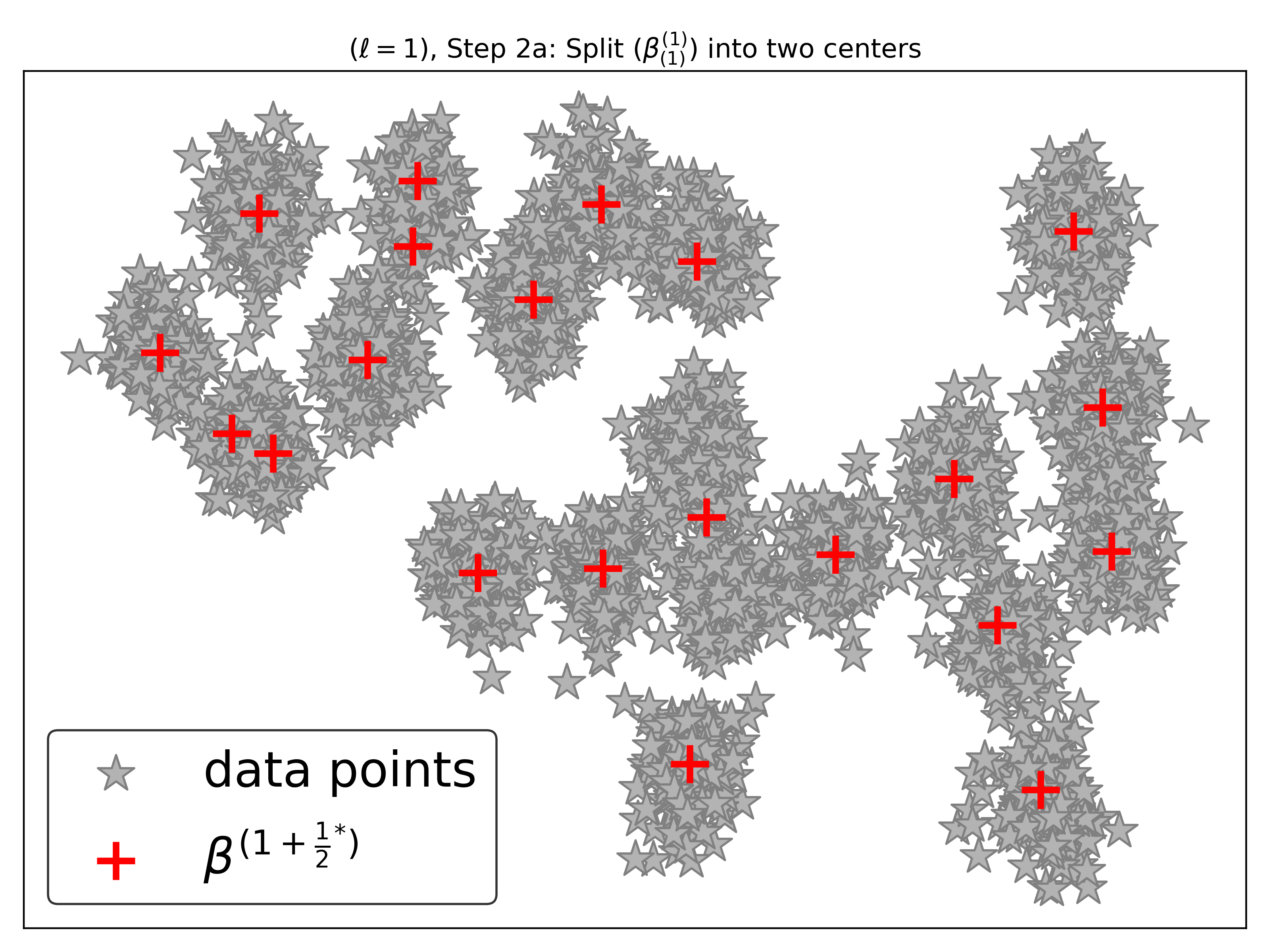}
    }
    \subfloat[$\ell=1$, Step 2b (detect \mfo]{  
      \includegraphics[width=0.333\linewidth]{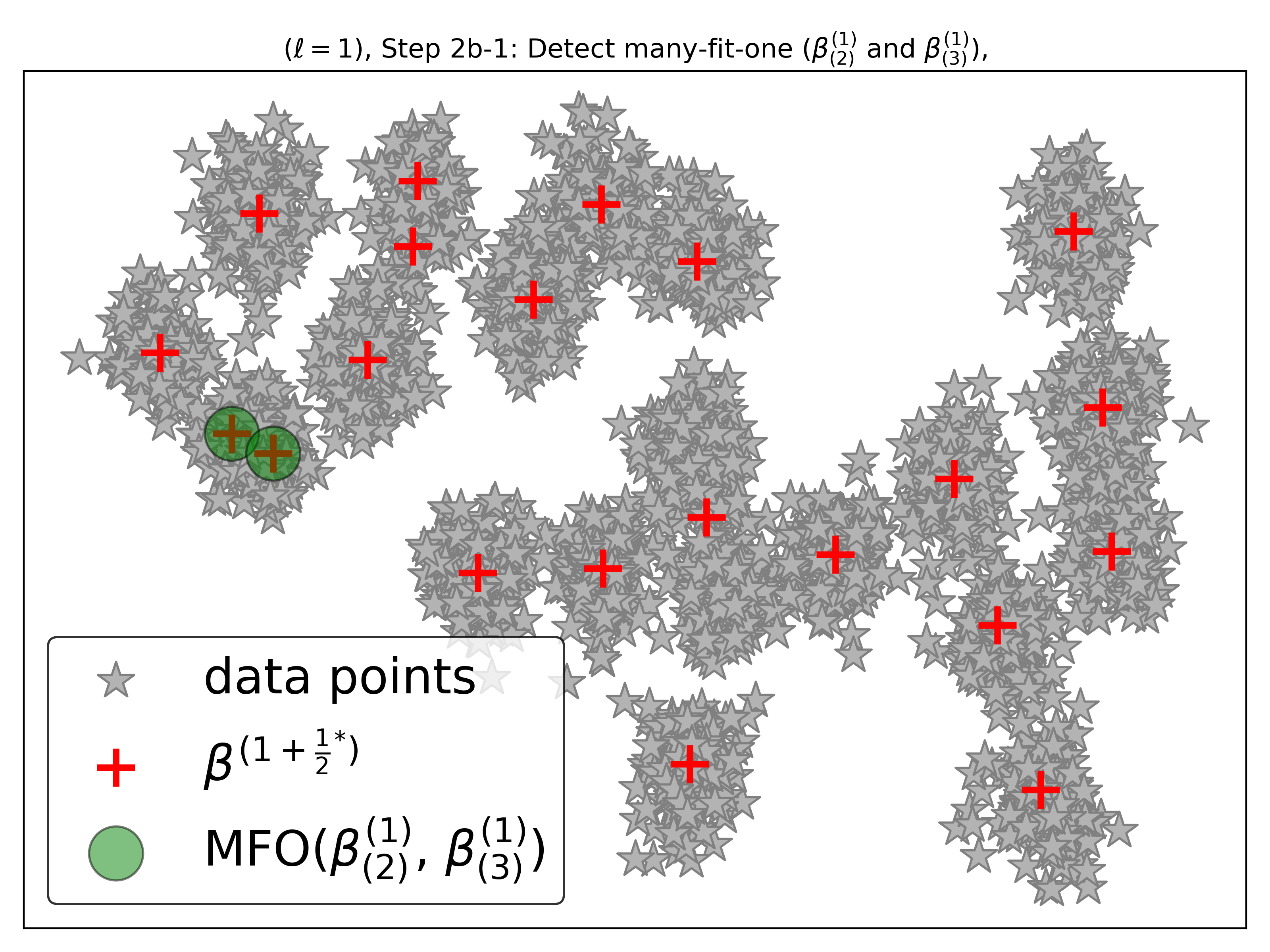}
    }
  
    \subfloat[$\ell=1$, Step 2b (merge \mfo)]{  
      \includegraphics[width=0.333\linewidth]{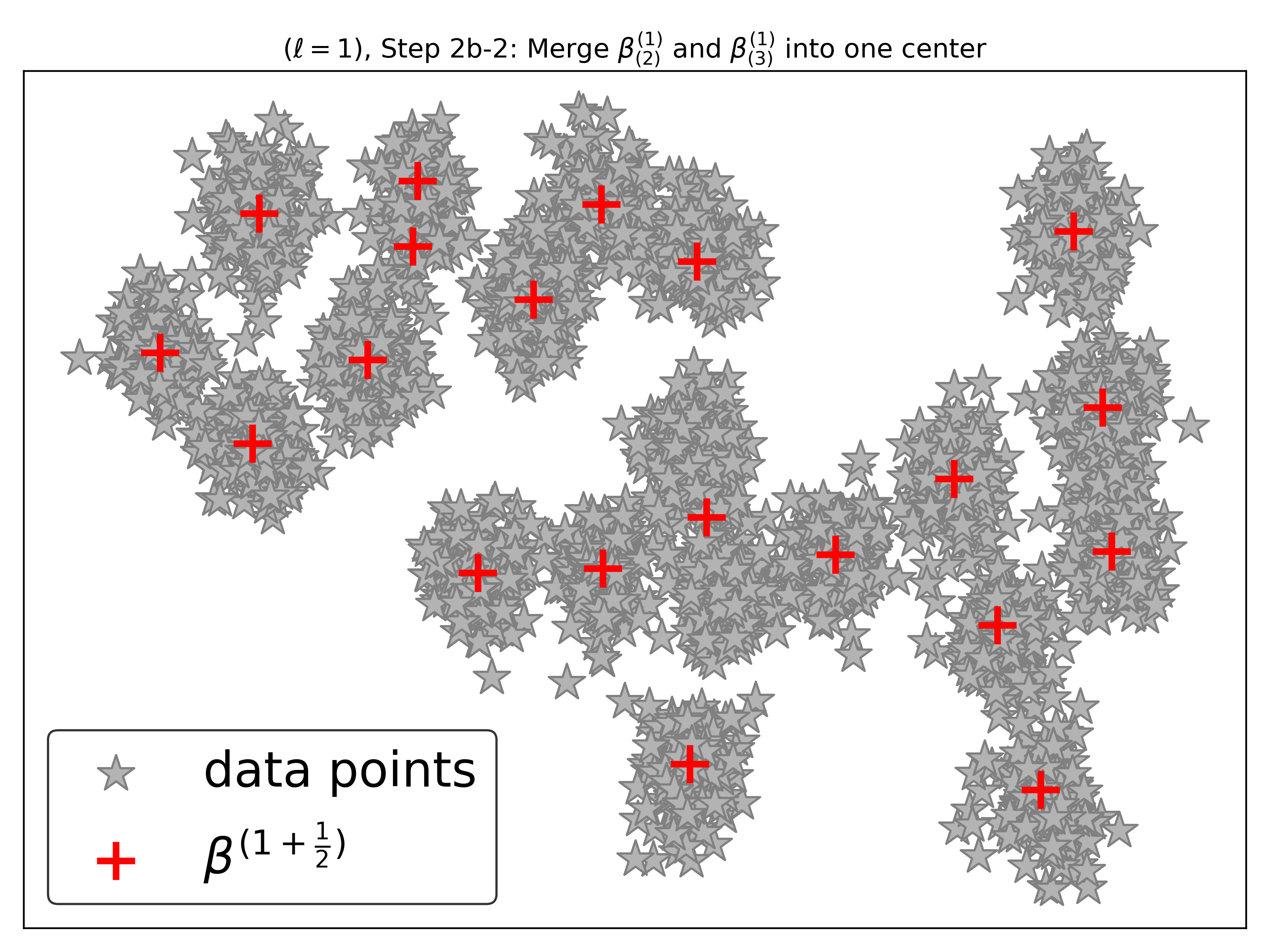}
    }
    \subfloat[$\ell=1$, Step 3]{  
      \includegraphics[width=0.333\linewidth]{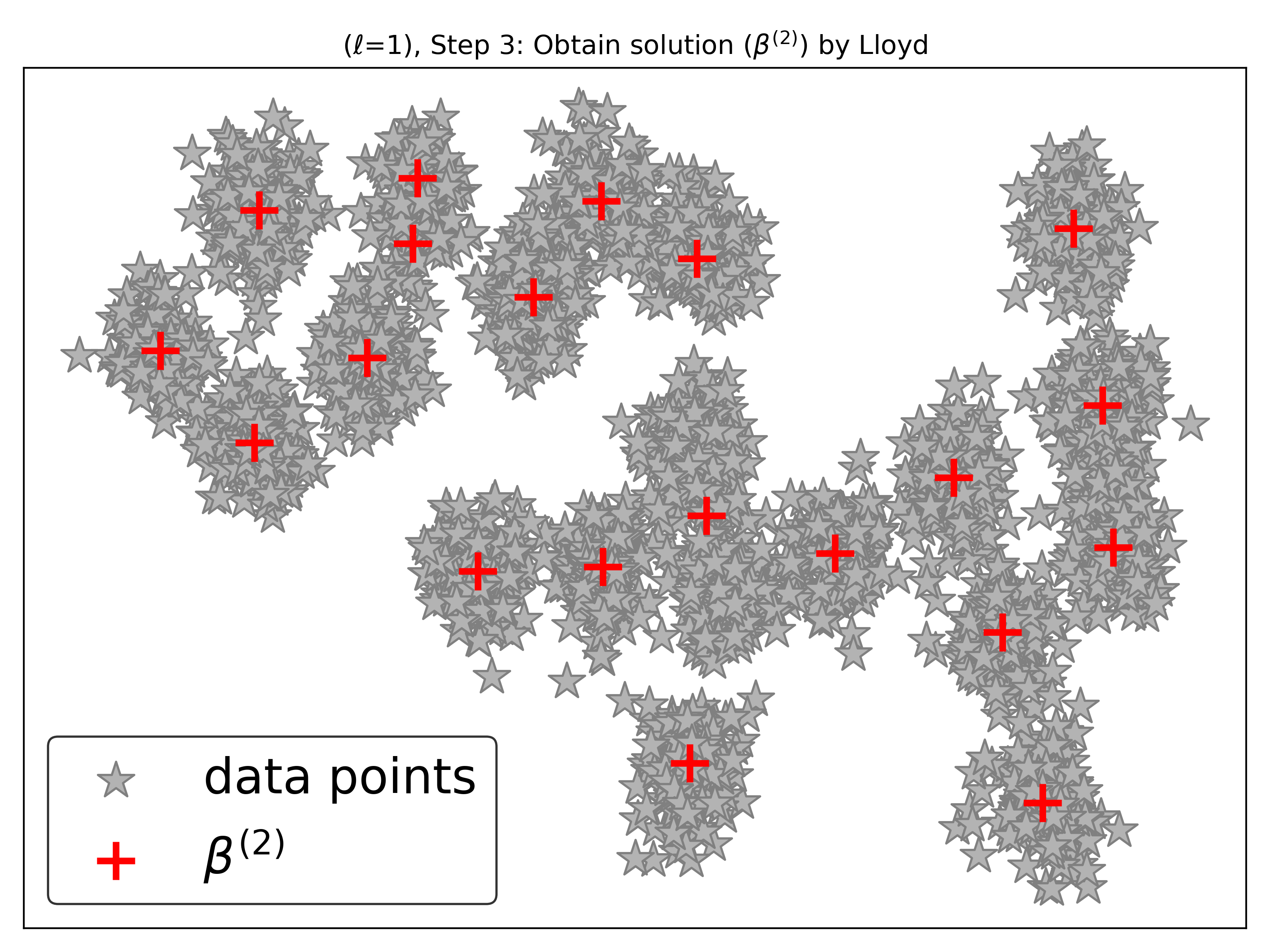}
    }
    \subfloat[$\ell=2$, Step 1 (detect \ofm)]{  
      \includegraphics[width=0.333\linewidth]{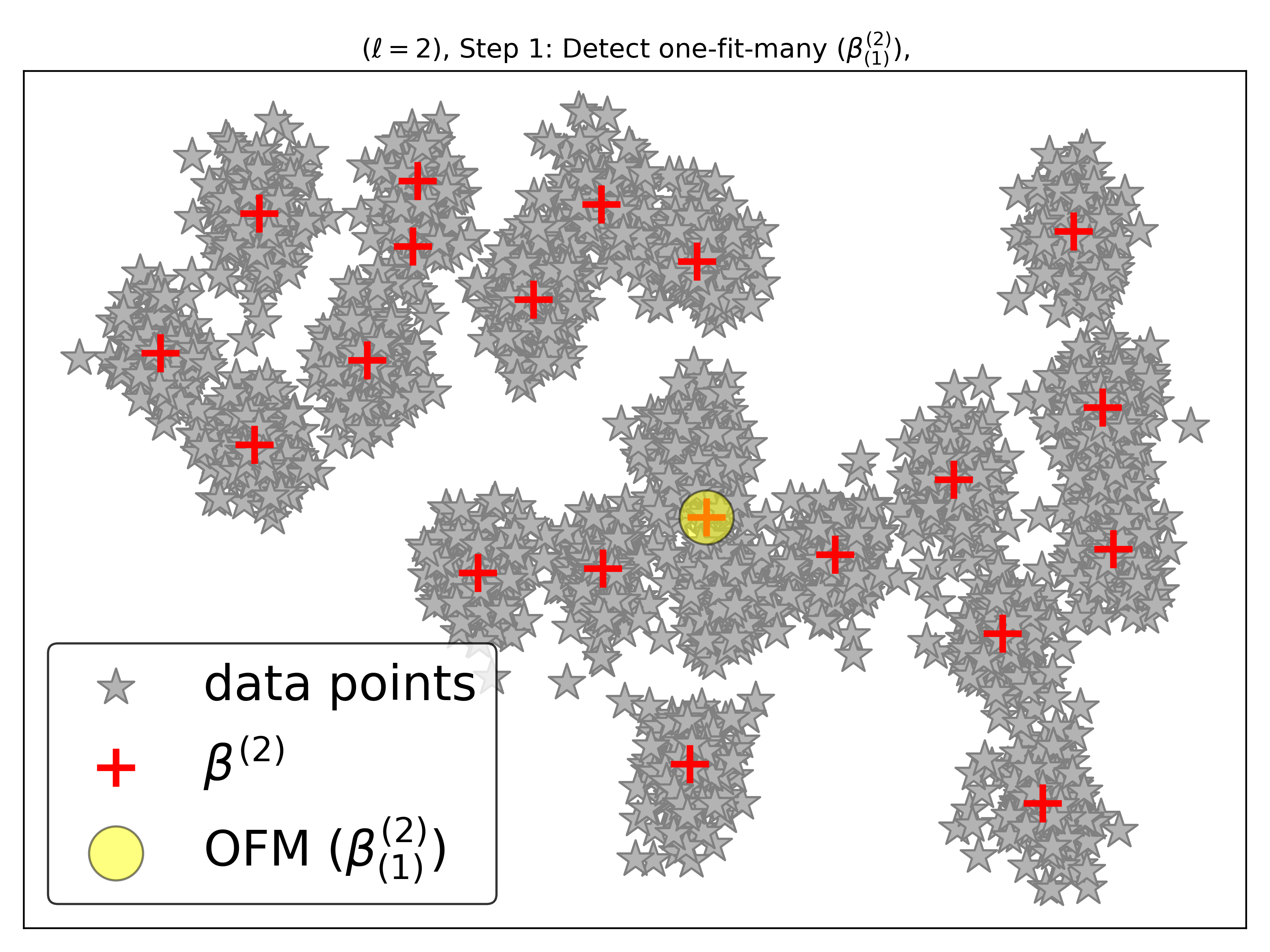}
    }
  
    \subfloat[$\ell=2$, Step 2a (split \ofm)]{  
      \includegraphics[width=0.333\linewidth]{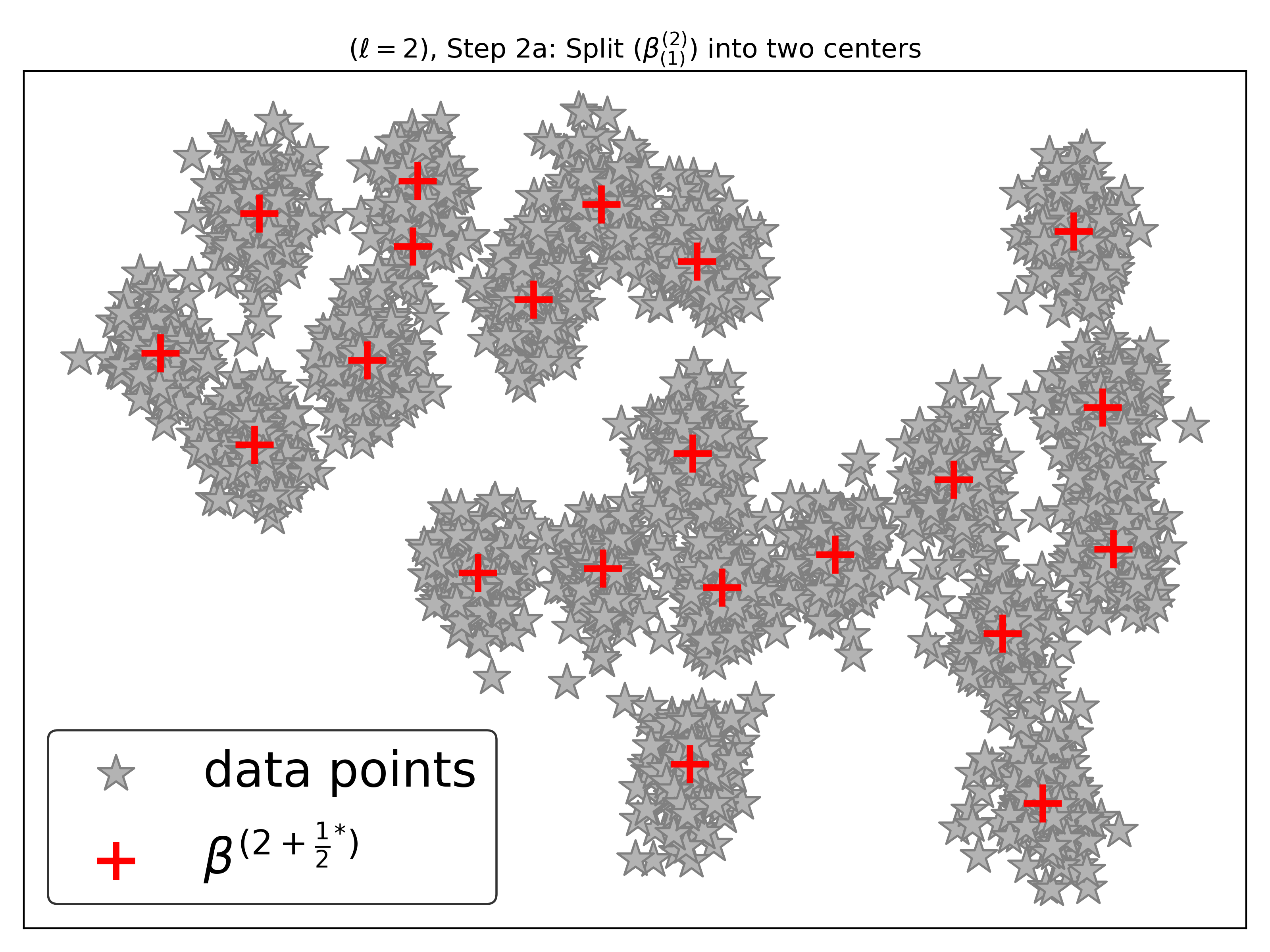}
    }
    \subfloat[$\ell=2$, Step 2b (detect \mfo]{  
      \includegraphics[width=0.333\linewidth]{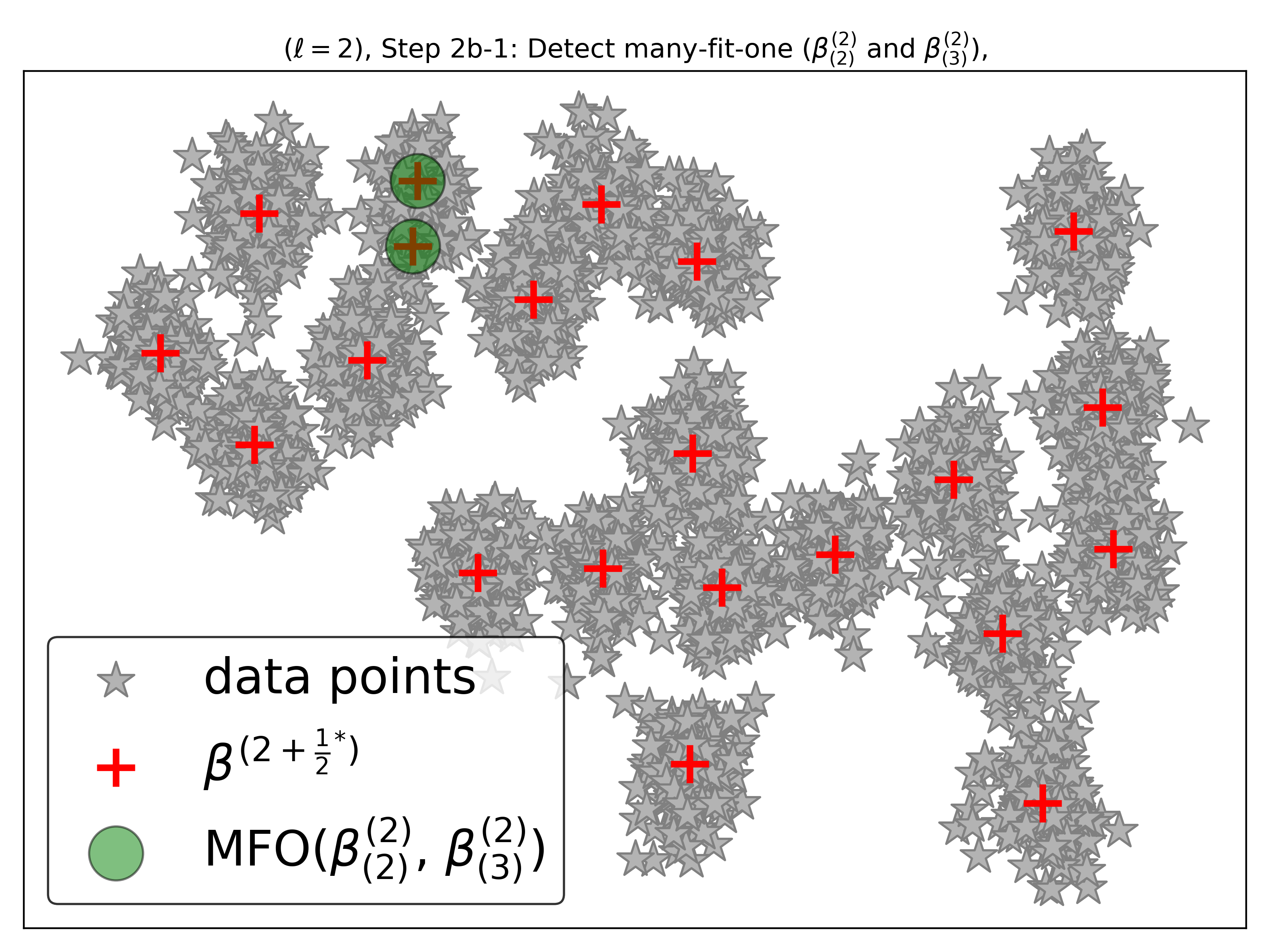}
    }
    \subfloat[$\ell=2$, Step 2b (merge \mfo)]{  
      \includegraphics[width=0.333\linewidth]{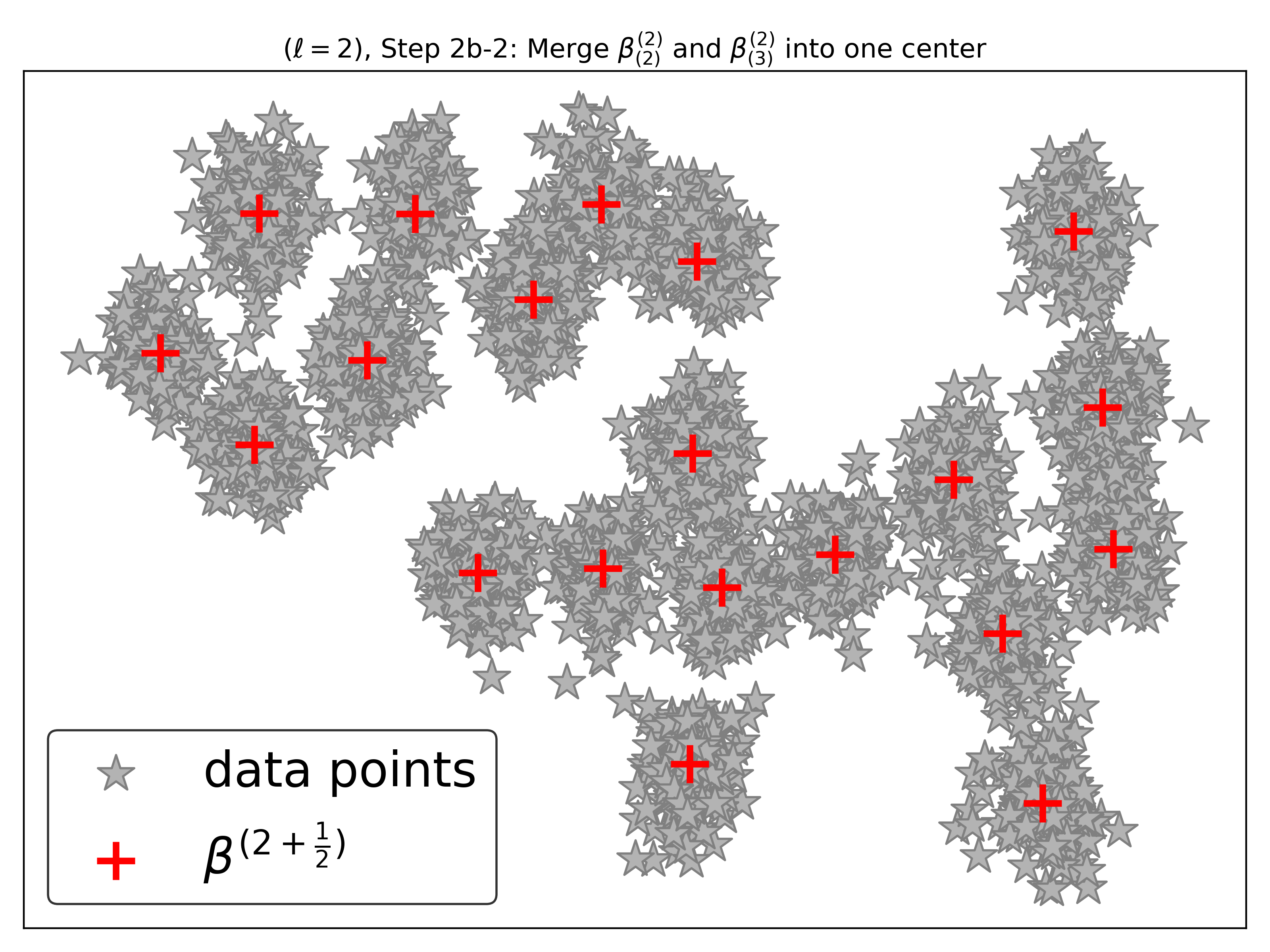}
    }

    \subfloat[$\ell=2$, Step 3]{  
      \includegraphics[width=0.333\linewidth]{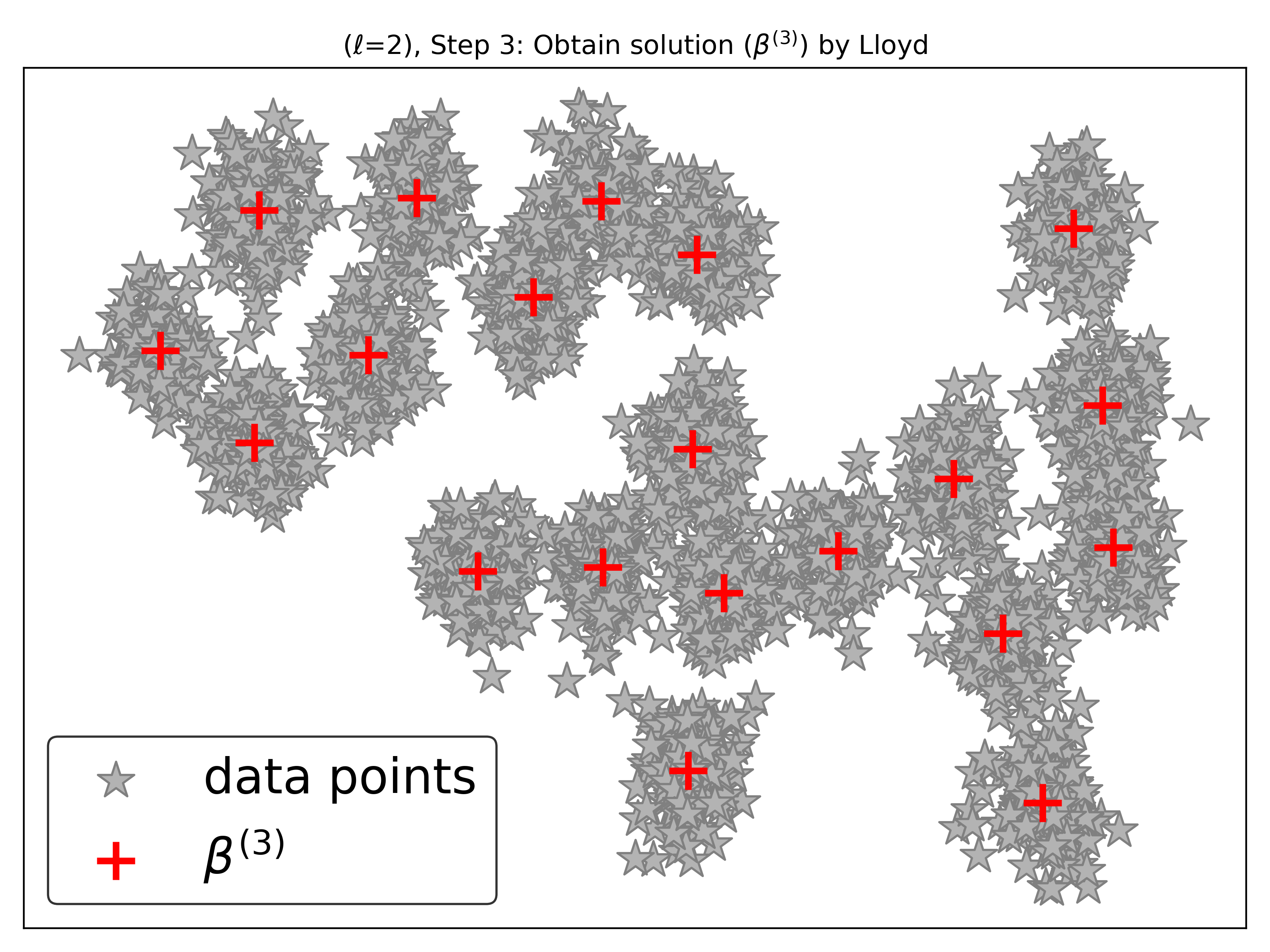}
    }
  
    \caption{(a)-(j) Visualization of the steps of the algorithm \ref{alg:1}: Fission-Fusion k-means (FFkm) at specific iterations. Detection subroutines involving \ofm and \mfo utilize standard deviation (SD) and pairwise distance (PD). (a-e) The steps 1 to 3 of the first iteration ($\ell=1$). (f-j) The steps 1 to 3 of the second iteration ($\ell=2$). Note: the positions of $\beta^{2+\frac{1}{2}}$ in (i) and $\beta^{3}$ in (j) differ slightly, though these differences may not be clearly apparent to the naked eye.}
    \label{fig:demo}
  \end{figure*}

\section{Image Visualization Supplement for Color Quantization} \label{app:images}
Figure~\ref{fig:appf} provides an additional image visualization of Color Quantization as a supplement to Figure~\ref{fig:5}. For Image Earth, except for Lloyd \kmeans, all other algorithms reveal a complete South American continent. For the Image Flower, Red Panda, Babbon, and Peppers, slight differences are difficult to discern with the naked eye but can be compared in Table~\ref{tab:9}.

  \begin{figure}
  \centering
      \includegraphics[width=\textwidth]{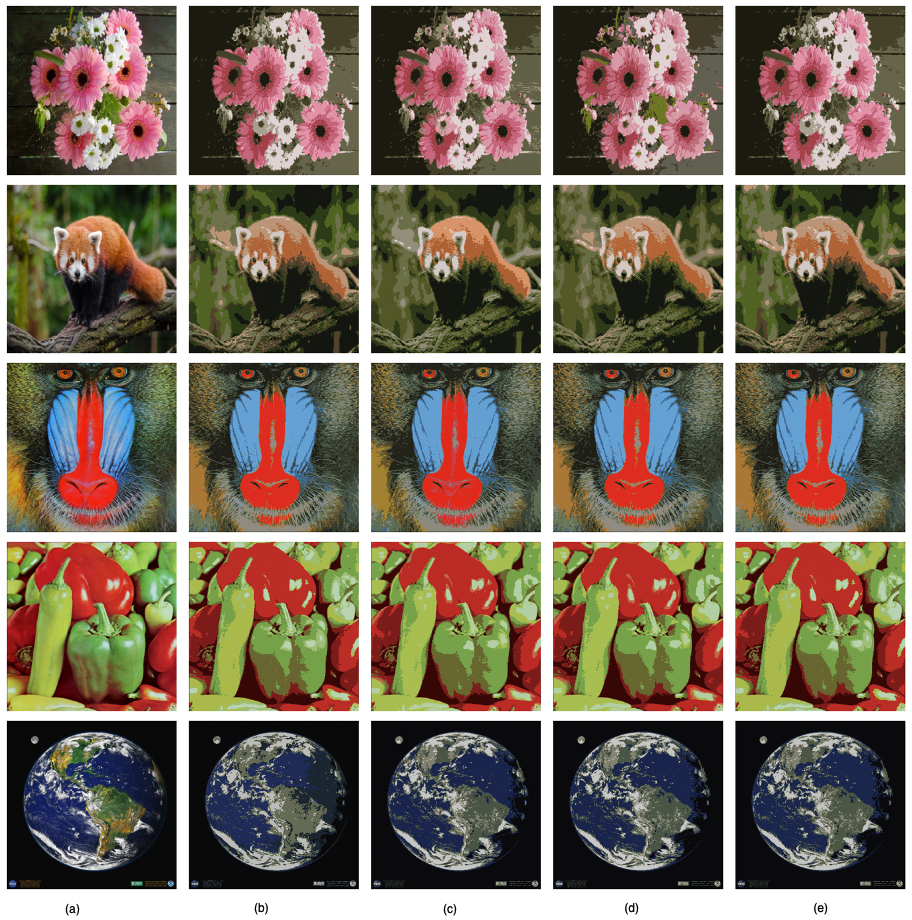} 
    \caption{Results of unsupervised color quantization using different numbers of clusters ($k$ values for colors).  The images are organized in rows from top to bottom: Flower ($k=8$), Red Panda ($k=8$), Baboon ($k=10$), Peppers($k=10$), and Earth ($k=5$).  Each column shows (a) The original image ($k$ is provided in Table~\ref{tab:9}).  (b) The result of Lloyd's \kmeans.  (c) The result of FFkm (SD+PD).  (d) The result of FFkm (TD+OI).  (e) The result of \kmeansmpx.}
  \label{fig:appf}
\end{figure}

\section{Additional experiment results for the Unbalance and S4 datasets} \label{app:discuss}

In Tables~\ref{tab:11} and~\ref{tab:S4com}, we provide additional experiment results for the Unbalance dataset and the S4 dataset, respectively. See Sections~\ref{sec:disc} for the setup.

\begin{table}
  \centering
  \caption{Lloyd's algorithm with over-parameterized $k$ on Unbalance}
  \scalebox{0.88}{
    \begin{tabular}{|c | c | c | c | c | c | c | c | c | c |}
    \hline
    $\textbf k$ & \textbf{$5\ktrue$} & \textbf{$6\ktrue$} & \textbf{$7\ktrue$} & \textbf{$8\ktrue$} & \textbf{$9\ktrue$} &\textbf{$10\ktrue$} & \textbf{$15\ktrue$} &\textbf{$20\ktrue$} \\ \hline
    SR (\%)& 11    & 22    & 38    & 55    & 65    & 75    & 96    & \textbf{99} \\
    AMR   & 0.28  & 0.23  & 0.16  & 0.11  & 0.09  & 0.06  & 0.01  & \textbf{0.00} \\
    $\rho$-ratio & $4.99\pm2.88$  & $4.00\pm2.65$  & $2.97\pm2.30$  & $2.39\pm2.08$  & $2.04\pm1.79$  & $1.65\pm1.34$  & $1.11\pm0.55$  & \textbf{$1.03\pm0.30$} \\ \hline
    \end{tabular}}
  \label{tab:11}
\end{table}

\begin{table}
  \centering
  \caption{$\epsilon$-Radius Detection Subroutine on S4 dataset}
  \scalebox{0.8}{
    \begin{tabular}{|c | c | c | c | c | c | c | c | c | c |}
    \hline
$\delta$ & \textbf{0.001} & \textbf{0.01} & \textbf{0.05}  & \textbf{0.1}   & \textbf{0.25}  & \textbf{0.5}   & \textbf{1} & \textbf{2}     & \textbf{5} \\ \hline
    SR (\%)& {44}    & 41    & 42    & 41    & 44    & 40    & 43    & 39    & 33 \\
    AMR   & {0.04}  & 0.05  & 0.05  & 0.05  & 0.04  & 0.05  & 0.05  & 0.04  & 0.05 \\
    $\rho$-ratio & {$1.05\pm0.07$}  & $1.06\pm0.07$  & $1.06\pm0.07$  & $1.06\pm0.07$  & $1.06\pm0.07$  & $1.07\pm0.08$  & $1.06\pm0.07$  & $1.05\pm0.05$  & $1.05\pm0.04$ \\
    \hline
    \end{tabular}}
  \label{tab:S4com}%
\end{table}%

\end{document}